\documentclass{article}

\usepackage[preprint]{myarxivstyle}

\usepackage{bm}
\usepackage[utf8]{inputenc} 
\usepackage[T1]{fontenc}    
\usepackage{hyperref}       
\usepackage{url}            
\usepackage{booktabs}       
\usepackage{amsfonts}       
\usepackage{nicefrac}       
\usepackage{microtype}      
\usepackage{xcolor}         

\usepackage{subfigure}
\usepackage{graphicx} 
\usepackage{amsmath}
\usepackage{xcolor}
\usepackage{comment}

\usepackage{hyperref}

\usepackage{amsthm}

\usepackage{wrapfig}

\usepackage{enumitem}

\theoremstyle{plain}
\newtheorem{theorem}{Theorem}[section]

\newtheorem{lemma}[theorem]{Lemma}
\newtheorem{corollary}[theorem]{Corollary}
\theoremstyle{definition}
\newtheorem{definition}[theorem]{Definition}
\newtheorem{example}[theorem]{Example}
\newtheorem{assumption}[theorem]{Assumption}
\theoremstyle{remark}
\newtheorem{remark}[theorem]{Remark}

\usepackage{natbib}

\usepackage{thmtools} 

\makeatletter
\newcommand{\listofappendices}{%
  \section*{Appendix Contents}%
  \@starttoc{apx}%
}
\makeatother

\title{Implicit Bias of Mirror Flow in Homogeneous Neural Networks: Sparse and Dense Feature Learning}

\author{
  Tom Jacobs \\
  CISPA Helmholtz Center \\
  \texttt{tom.jacobs@cispa.de} \\
  \And
  Guido Mont\'ufar\\
  UCLA and MPI MiS\\ 
  \texttt{montufar@math.ucla.edu}%
}

\begin{document}

\maketitle

\begin{abstract}
    We study the max-margin solutions reached by mirror flow in deep neural networks with homogeneous activation functions. 
    Extending classical results on gradient flow, we derive a novel balance equation for mirror flow from convex duality, enabling a characterization of the horizon function governing the induced margin. 
    We further establish max-margin characterizations together with convergence rates and norm growth estimates. 
    Finally, we support our theory through experiments on synthetic datasets and standard vision tasks. 
    Concretely, we show that: (1) distinct non-homogeneous mirror maps can induce the same max-margin solution; 
    (2) convergence can be extremely slow, including exponentially slow regimes; and 
    (3) although all considered mirror maps exhibit feature learning, they can produce markedly different representations, ranging from sparse to dense neuron activations. 
    Together, these results provide a unified perspective on sparse and dense feature learning in homogeneous neural networks, highlighting how mirror maps shape both optimization dynamics and the geometry of the learned classifiers. 
\end{abstract}

\section{Introduction} 
Understanding why gradient-based optimization can generalize well is one of the central questions in deep learning theory. 
A key insight from the implicit regularization literature is that gradient descent training does not simply return a minimizer of the loss function but instead is implicitly biased towards particular types of minimizers. 
One of the most general results presently available in this context states that for homogeneous neural network classifiers trained on data that is separable by the network, 
gradient flow approaches an $L_2$ max-margin classifier 
\citep{Lyu2020Gradient}. 
This result connects the optimization dynamics induced by the algorithm and the network parameterization to the geometric properties of the resulting decision boundary, which directly influence the generalization behavior.

However, gradient flow is only one algorithm within a much broader family of optimization algorithms. 
Mirror flows generalize gradient flows by replacing the Euclidean geometry of the search space with the geometry induced by a strictly convex mirror potential $R$. The resulting parameter updates take place in the dual space induced by the mirror map $\nabla R$. 
Mirror flows arise naturally in several contexts. In particular, they arise as reparameterized gradient flows of overparameterized models \citep{Li2022ImplicitBO, jacobs2025mirror} and they are used to implement structural parameter constraints such as sparsity or non-negativity. 
In spite of the importance of these methods, a unified theory of implicit regularization for mirror flows on neural networks is still missing.

Prior work on implicit regularization of mirror flows has largely focused on linear models. 
In this context, \citep{Sun2023AUA} characterized the max-margin for homogeneous mirror maps, and more recently, \citep{pesme2024implicit} showed how to deal with non-homogeneous separable mirror maps by introducing a horizon function that captures the limiting direction of the iterates. 
This is particularly useful for mirror maps like the hyperbolic entropy used for sparse training \citep{jacobs2025mirror, jacobs2025mirror2}. 
To extend this line of results to the case of non-linear homogeneous networks, we identify and resolve two key difficulties: 
(i) defining an appropriate notion of margin, 
and (ii) establishing convergence of the iterates in direction to a KKT point of the corresponding max-margin. 
Note that for homogeneous networks with an exponential-tail loss, driving the loss to zero requires the parameters to diverge. 
One therefore cannot study convergence of the parameters directly, but rather convergence in direction.

\begin{figure}
    \centering
\includegraphics[scale=.28]{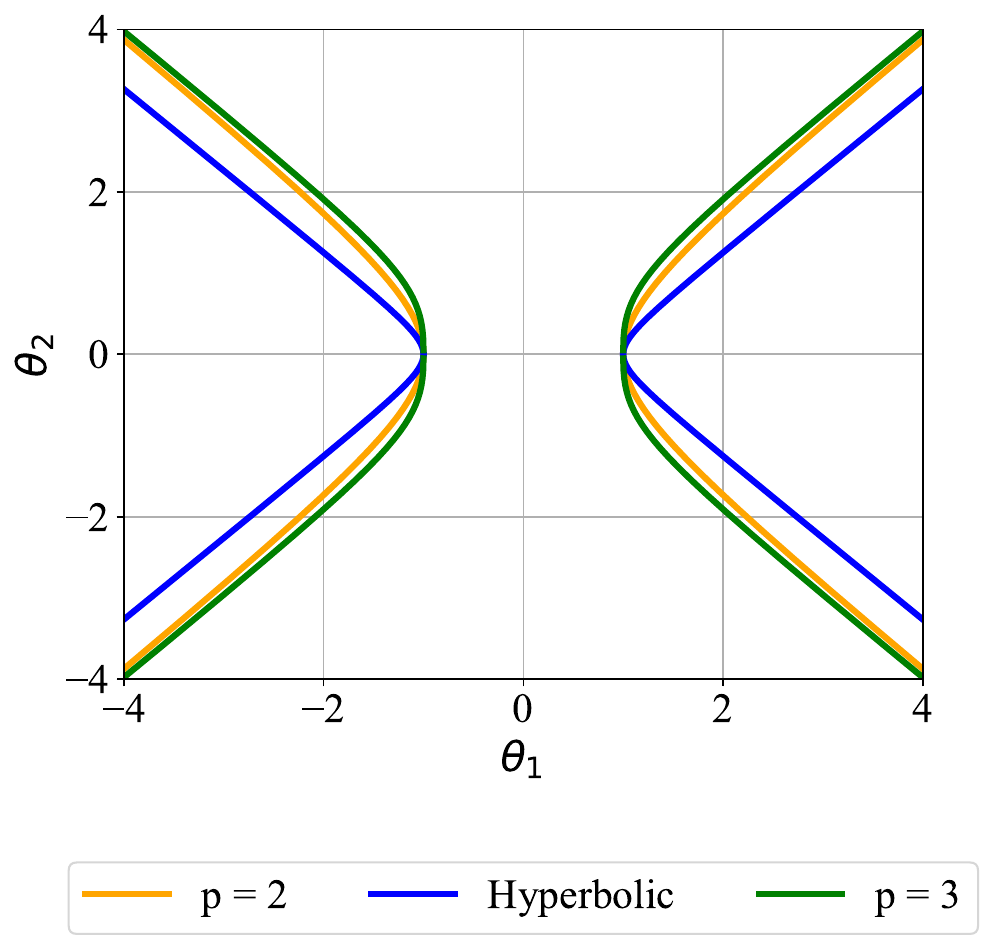}
\qquad 
\includegraphics[scale=.28]{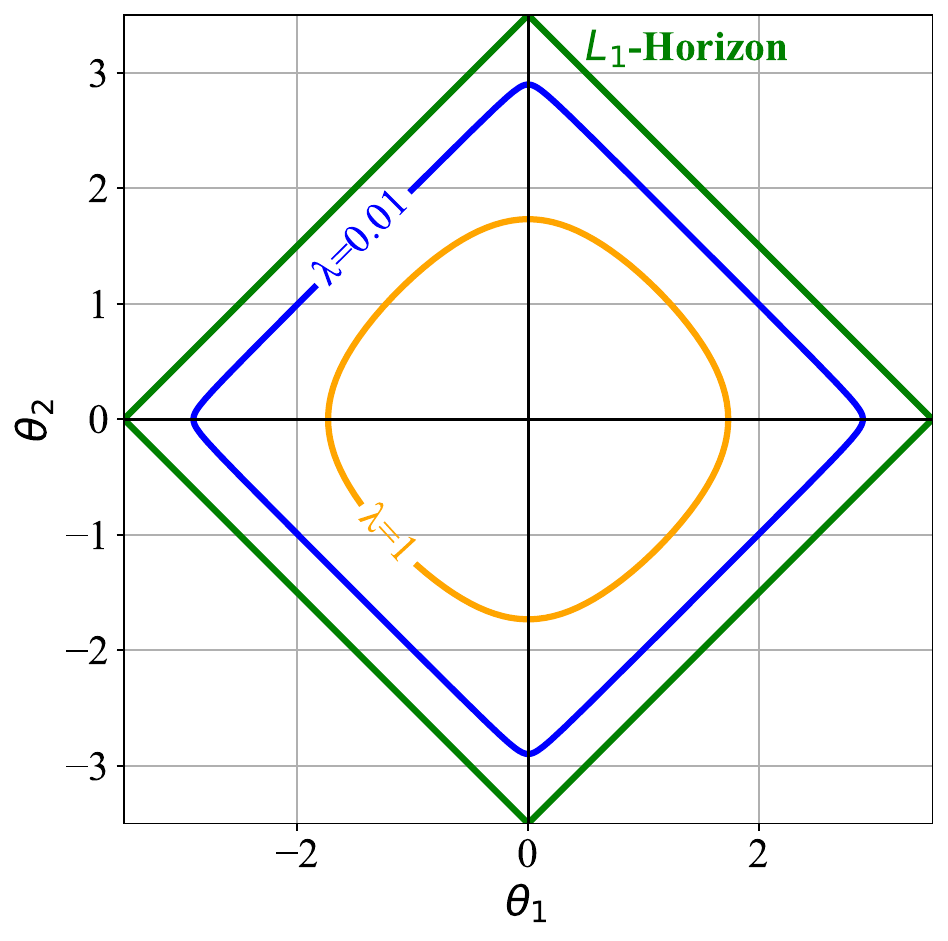}  
    \caption{
    Left: Balance equations. Shown are the solution sets of the balance equations for distinct mirror maps, characterized by $Q(\nabla R)$.
    Right: Horizon functions. 
    Shown are the level sets $Q(\nabla R)=c$ for the hyperbolic entropy with different values of the hyperparameter $\lambda$. 
    Although all choices of $\lambda$ share the same horizon function, given by the $L_1$-norm,  
    for larger $\lambda$ substantially larger parameter magnitudes $\|\theta\|$ are required for these level sets to approximate the limiting shape.
    }
\label{fig:new_balance_equation}
\label{fig:horizon apporox}
\end{figure}

Our starting point is the identification 
of a new \emph{balance equation} for mirror flow, illustrated in Figure~\ref{fig:new_balance_equation}. 
This provides an analog of the layer-wise norm balance equation well known for gradient flow. 
Building on this, 

we introduce the \emph{$Q$-margin} as a suitable notion of margin for mirror flows, defined via the dual potential $Q(\nabla R(\theta))$, where $Q := R^*$ denotes the 
convex conjugate of $R$.
This is the direct mirror analog of the $L_2$-margin for gradient descent. 
Using these ingredients, 
we show that mirror flow converges in direction to a KKT point of a constrained optimization problem whose objective is the square of a \emph{horizon function} of the form 
$\phi_{\alpha}(\theta) := \lim_{\eta\rightarrow0} \eta(\alpha Q (\nabla R(\theta/\eta)))^{1/\alpha}$, corresponding to a max-margin solution in the induced geometry. 
The horizon function 
captures the asymptotic geometry of the mirror potential at large parameter scales and governs the limiting direction of the dynamics. 
Here, $\alpha$ is the asymptotic homogeneity degree of the mirror potential. 
In the homogeneous case $R(\theta) = \frac{1}{p}\|\theta\|_p^p$, this reduces to $\|\theta\|_p$, with $\alpha=p$.

Next, to gain further insight into the role of non-homogeneity in the potential function, 
we consider a family of potentials with a hyperparameter $\lambda \geq 0$ which controls the level of non-homogeneity. 
We show that although distinct non-homogeneous mirror potentials can induce the same horizon function, and thus converge to the same max-margin solution, 
they can have vastly different convergence rates, in some cases requiring exponentially longer time. 
Figure~\ref{fig:horizon apporox} illustrates that larger values of $\lambda$ require disproportionately larger parameter magnitudes for the horizon function to closely approximate its asymptotic limit. 
These results provide concrete guidance for selecting the degree of inhomogeneity to balance convergence rate and other factors such as training stability. 

Finally, we show that mirror descent induces different feature representations through its geometry. In Theorem~\ref{theorem : reform KKT main}, we show for a two-layer network, hyperbolic entropy leads to fewer active neurons, whereas a smoothed homogeneous mirror potential activates more, resulting in sparse versus dense feature learning (Figure~\ref{fig: feature learning}).

\begin{figure}[ht]
    \centering
    \includegraphics[width=0.95\linewidth]{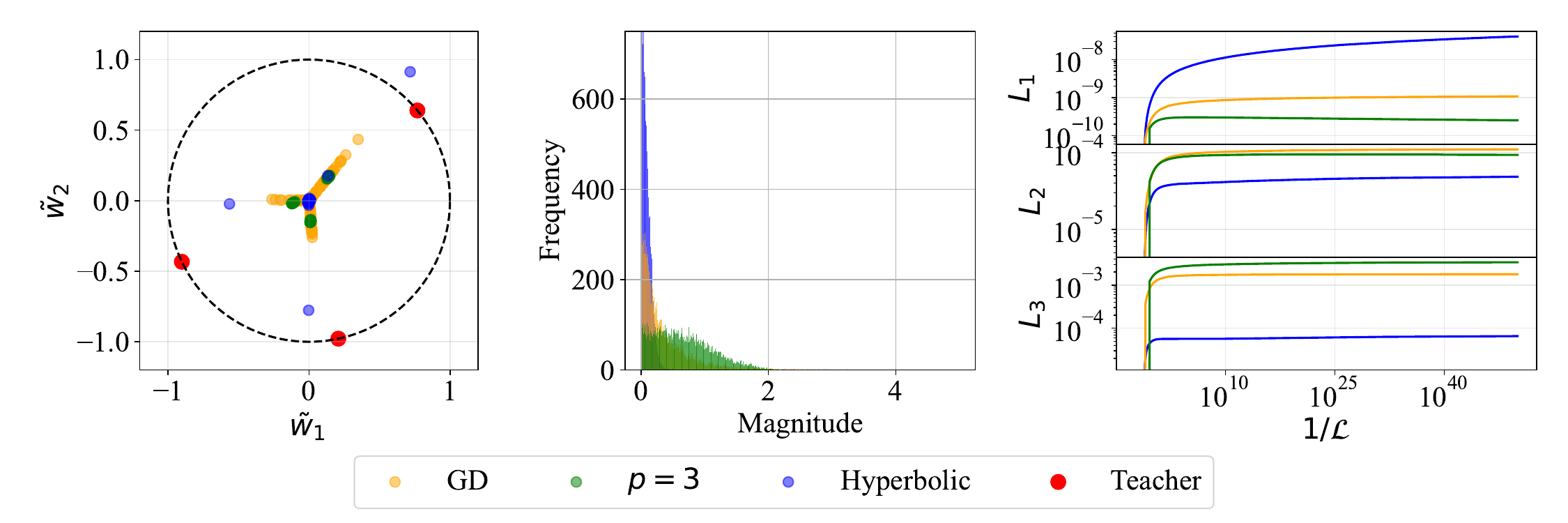}
    \caption{
   \textbf{Different types of feature learning under mirror flows.} The hyperbolic entropy induces sparse feature learning with fewer active neurons, whereas the smoothed homogeneous potential ($p =3$) induces dense feature learning with more active neurons. \textbf{(Left)} Input weight representations $\tilde{w}=|a_j|w_j$ of a two-layer student network in a student-teacher setup: GD (orange) produces diffuse weights near the origin, $p =3$ (green) learns teacher directions with all active neurons, and the hyperbolic map (blue) concentrates mass onto only a few neurons aligned with the teacher (red), reflecting sparsity. \textbf{(Middle)} Weight magnitude distribution of the hidden layer in a three-layer network: GD and $p=3$ yield spread out distributions with many active weights, while the hyperbolic entropy produces a sharply peaked distribution near zero with few large-magnitude weights. \textbf{(Right)} The $L_k$-margin, for $k = 1,2,3$ as a function of $1/\mathcal{L}$ for a three-layer network: each mirror flow converges in direction to their corresponding max-margin.}
    \label{fig: feature learning}
\end{figure}

\paragraph{Main contributions.}
In this work, we characterize the implicit bias of mirror flow in training homogeneous neural network classifiers, with the following contributions:
\begin{itemize}[leftmargin=*]
    \item \textbf{Balance equation.} 
    For layerwise separable mirror maps, we derive a novel balance equation, a conserved quantity of the mirror flow (Lemma~\ref{lemma : invariance balance}), using the Fenchel-Young identity (Lemma~\ref{Lemma : convex duality}). 
    
    \item \textbf{Max-margin.} 
    Based on the balance equation, we introduce a novel $Q$-margin for mirror flow in Theorem~\ref{theorem : main result of margin convergence}
    and derive the corresponding constrained optimization problem in Theorem~\ref{theorem : KKT max margin result}.

    \item \textbf{Reaching the max-margin solution.} 
    We derive tight growth rates for $Q$ and decrease rates for the loss in Theorem~\ref{theorem : convergence and growth rates}. 
These highlight for non-homogeneous mirrors the difficulty of reaching the corresponding max-margin solution. 
    We leverage this analysis to provide guidance for selecting hyperparameters for mirror maps in Lemma~\ref{lemma : hyperbolic entropy alignment horizon}.  

    \item \textbf{Sparse and dense feature learning.} We show that mirror descent can learn both sparse and dense representations, as established in Theorem~\ref{theorem : reform KKT main}, and we empirically verify the different margins reached and the difficulty of reaching them depending on the smoothening parameter $\lambda$. 
    
    \item \textbf{NTK and modularity.} 
    In Appendix~\ref{section : additional implications}, we show that the constrained optimization problem admits an SVM formulation that cannot be captured within an RKHS framework, but instead requires a Reproducing Kernel Banach Space (RKBS). 
We further formulate the constrained optimization problem for layer-wise distinct mirror maps, highlighting its modular structure. 
    
\end{itemize}

\paragraph{Related work.}  
We highlight key related works here and 
defer broader discussion to Appendix~\ref{appendix : extended related work}, including connections to general implicit bias literature and hyperparameter transfer. 

\begin{itemize}[leftmargin=*] 
\item \textbf{Max-margin results.} 
A substantial body of work characterizes the implicit bias of gradient-based optimization in terms of margin maximization. 
Classical results establish that gradient descent on (linearly) separable data converges in direction to the $L_2$ max-margin classifier \citep{soudry2017implicitbiasgradientdescent, Lyu2020Gradient, Jin2017HowTE}. 
These results have recently been extended to steepest-descent \citep{tsilivis2024flavors},
inhomogeneous neural networks \citep{cai2025implicit}, simplicity bias \citep{pmlr-v235-tsoy24a}, and the edge-of-stability regime \citep{NEURIPS2023_eb189151}. 
Moreover, benign overfitting has been studied under approximate max-margin convergence \citep{Karhadkar2024BenignOI}. 
However, a max-margin characterization of mirror flow for neural networks beyond the linear setting has remained open \citep{Sun2023AUA, pesme2024implicit}. 

\item 
\textbf{Mirror flow.}
Mirror descent and its continuous-time limit, mirror flow, are classical tools in optimization that enforce %
constraints on iterates via a Bregman divergence \citep{Wu2021ImplicitRI, marcotte2024momentumconservationlawseuclidean, 10.5555/902667, Majidi2021ExponentiatedGR}. 
Mirror flow also arises naturally as the dynamics of reparametrized gradient descent \citep{Li2022ImplicitBO, jacobs2025mirror, jacobs2025mirror2, Chou2023RobustIR, woodworth2020kernel, 
epub126385}, 
and has been used to promote sparsity \citep{jacobs2025mirror, Lunk2026SparseTO}. 
Its implicit bias has been characterized in restricted settings such as linear classification \citep{Sun2023AUA, pesme2024implicit}, matrix factorization \citep{gunasekar2017implicitregularizationmatrixfactorization},  univariate regression with two-layer neural networks \citep{liang2025implicit}, and single layer attention \citep{Julistiono2024OptimizingAW}. 
We provide the first max-margin result for mirror flow in the broader setting of homogeneous neural networks. 

\item 
\textbf{Balance equation.} 
The balance equation describes a conserved quantity of gradient flow that appears in deep linear networks \citep{domin2024from, Marcotte2023AbideBT}, 
Riemannian gradient flow \citep{marcotte2024momentumconservationlawseuclidean}, 
and structured architectures such as transformers and ResNets \citep{marcotte2025transformativeconservativeconservationlaws}. 
Beyond standard gradient descent, analogous balance equations are known only 
for diagonal linear networks \citep{jacobs2026never}. 
In contrast to prior derivations relying on architectural homogeneity or Riemannian structure, we derive a new balance equation via convex duality, which is central to our max-margin analysis for homogeneous networks.

\end{itemize}

\section{Preliminaries}\label{section : preliminaries}

We consider homogeneous neural networks trained for binary classification. 
In this section, we first recall the balance equation for gradient flow and then introduce the definitions and assumptions from convex analysis needed to define mirror flow. Informally, given an objective function $\mathcal{L} : \mathbb{R}^n \rightarrow \mathbb{R}$, the  mirror flow associated with a strictly convex differentiable potential $R : \mathbb{R}^n \rightarrow \mathbb{R}$ is defined by 
\begin{equation*}
    d \nabla R(\theta_t) = - \nabla\mathcal{L}(\theta_t) dt, \qquad \theta_0 = \theta_{\text{init}}. 
\end{equation*}

\paragraph{Neural network and loss.}  
Let $f$ be a neural network that, given an input $x \in \mathbb{R}^d$ and  parameters $\theta$, outputs a real value $f(\theta, x)$. 
The sign of $f(\theta, x)$ determines the predicted class.  
We denote the training dataset by $Z := \{ (x_i,y_i) : i \in [K] \}$, where $x_i \in \mathbb{R}^d$ and $y_i \in \{\pm 1\}$ denote the input and label, respectively. 
For a loss function $\ell : \mathbb{R} \rightarrow \mathbb{R}$, the empirical training loss is given by 
$\mathcal{L}(\theta):= \sum_{i =1}^K \ell(y_i f(\theta, x_i))$. 

\begin{assumption}\label{assumptions : setting nn}
We impose the following assumption on the classifier $f$ and loss function $\ell$, following \citep{Lyu2020Gradient}: 
\begin{itemize}[leftmargin=*]
    \item (Regularity). For any fixed $x$, the map $\theta\mapsto f(\theta, x)$ is locally Lipschitz and satisfies the chain rule. 
    \item (Homogeneity). There exists $L > 0$ such that for all $c > 0$, $f(c \theta, x) = c^L f(\theta, x)$.  
    \item (Exponential Loss). $\ell(q) = \exp{(-q)}$. 
    \item (Separability). There exists $t_0$ such that $\mathcal{L}({\theta(t_0)}) < 1$. 
\end{itemize} 
\end{assumption} 

Since $f$ is only assumed to be locally Lipschitz, we work with Clarke subdifferentials. 
The subdifferential of $f$ at a point $\theta \in A$, where $A$ is a convex set, is defined as 
\begin{equation*}
    \partial^{\circ}f(\theta) := \text{conv}\left\{ \lim_{k \rightarrow \infty} \nabla f(\theta_k): \theta_k \rightarrow \theta, f \text{ is differentiable at } \theta_k\right\}.
\end{equation*}
Following \citep{Lyu2020Gradient, Davis2018StochasticSM}, 
we say that a locally Lipschitz function $f: \mathbb{R}^n \rightarrow \mathbb{R}$ admits the chain rule if, for any arc $z : [0, \infty) \rightarrow \mathbb{R}^n$, 
$\frac{d}{dt}(f \circ z)(t) = \langle h, \dot{z}(t)\rangle$ for all 
$h \in \partial^{\circ} f(z(t))$, for a.e.\ $t> 0$. 
Concretely, we consider neural networks of the form 
\begin{equation}
\label{equation : homogeneous neural network} 
    f(\theta, x) : =  W_1 \sigma \left( W_2 \cdots \sigma\left( W_L x\right) \right) , 
\end{equation}
where $\theta = [W_1, \ldots , W_L]$ and $\sigma$ is a homogeneous activation function, such as the ReLU or linear. %

\paragraph{Balance equation and margin for gradient flow.} 
For homogeneous neural networks as in Eq.~(\ref{equation : homogeneous neural network}), standard gradient flow with $R(\theta) = \frac{1}{2} \| \theta \|_2^2$ satisfies the following \emph{balance equation} for all $t\geq 0$ and $i \in [L-1]$: 
\begin{equation}
\label{equation : balance equation gradient descent}
   \frac{1}{2} \| W_{i,t} \|_2^2 - \frac{1}{2} \|W_{i+1,t} \|_2^2 = \frac{1}{2}\lambda_{0,i} , 
\end{equation}
where $\lambda_{0,i} = \| W_{i,0} \|_2^2 - \|W_{i+1,0} \|_2^2 $. 
Eq.~(\ref{equation : balance equation gradient descent}) shows that the squared Euclidean norms of the weight matrices across adjacent layers grow proportionally over time. 
This property was exploited in \citep{Lyu2020Gradient} to derive the 
$L_2$ max-margin characterization of the implicit bias of gradient flow. 

We recall the definition of a generalized margin with respect to the $L_2$-norm, following \citep{Lyu2020Gradient}, and later extend it to more general settings. 
This notion captures the minimal signed distance of the training input data to the decision boundary, measured in the geometry induced by the parameter norm. 

\begin{definition}[$L_2$-margin]
\label{definition : norm margin}  
Given a dataset $\{(x_i,y_i) \colon i\in[K]\}$ and a function $f$ that is $L$-homogeneous in its parameters, %
the generalized $L_2$-margin is a function of $\theta$ defined as  
    \begin{equation*}
        \gamma := \min_i \, y_i f\left(\frac{\theta}{\|\theta \|_2}, x_i\right) =  \min_i \frac{y_i f(\theta, x_i)}{\|\theta \|_2^{L}} . 
    \end{equation*}
Moreover, we denote $q_i : =y_i f(\theta, x_i)$ and $q_{\text{min}} := \min_i q_i$. 
\end{definition}

\paragraph{Convex analysis.} 
To define mirror flow, we first introduce the mirror potential 
$R : \mathbb{R}^n \rightarrow \mathbb{R}$. 

\begin{assumption}\label{assumption : Mirror properties}
For our analysis, we impose the following assumptions on $R$: (1) strict convexity (2) twice differentiable (3) coercivity, i.e., $\|\theta\| \rightarrow \infty$ implies $\| \nabla R\| \rightarrow \infty$, so that $\nabla R$ is surjective onto $\mathbb{R}^n$, and (4) $\nabla^2 R$ is locally Lipschitz.
\end{assumption}

We recall the convex conjugate and Fenchel-Young duality in Definition~\ref{definition : convex conjugate} and Lemma~\ref{Lemma : convex duality}. 

\begin{definition}
\label{definition : convex conjugate}
    The convex conjugate of $R$ is the function $Q : \mathbb{R}^n \rightarrow \mathbb{R}$ defined by $ Q(\theta) := \sup_{\xi \in \mathbb{R}^n} \{ \langle\theta, \xi \rangle - R(\xi) \} . $
\end{definition} 

\begin{lemma}[Fenchel-Young identity]
\label{Lemma : convex duality} 
    Let $R$ satisfy Assumption~\ref{assumption : Mirror properties} and let $Q$ denote its convex conjugate. 
    Then for all $\theta \in \mathbb{R}^n$, 
    \begin{equation*}
         \langle \theta, \nabla R(\theta) \rangle = R(\theta) + Q(\nabla R(\theta)). 
    \end{equation*}
\end{lemma}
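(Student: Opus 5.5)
The plan is to evaluate the supremum in Definition~\ref{definition : convex conjugate} at the dual point $\nabla R(\theta)$ and show that it is attained exactly at $\xi = \theta$. Fix $\theta \in \mathbb{R}^n$ and set $v := \nabla R(\theta)$. Then
\begin{equation*}
Q(v) = \sup_{\xi \in \mathbb{R}^n} g(\xi), \qquad g(\xi) := \langle v, \xi\rangle - R(\xi).
\end{equation*}
By Assumption~\ref{assumption : Mirror properties}, $R$ is differentiable and strictly convex, so $g$ is differentiable and strictly concave on $\mathbb{R}^n$. Its gradient is $\nabla g(\xi) = v - \nabla R(\xi)$, which vanishes at $\xi = \theta$ by the choice of $v$.

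The key step is to show that this stationary point is the global maximizer. For a differentiable convex function $R$, the first-order characterization of convexity gives, for every $\xi$,
\begin{equation*}
R(\xi) \geq R(\theta) + \langle \nabla R(\theta), \xi - \theta\rangle .
\end{equation*}
Rearranging yields $\langle v, \xi\rangle - R(\xi) \leq \langle v, \theta\rangle - R(\theta)$, that is, $g(\xi) \le g(\theta)$ for all $\xi$. Strict convexity makes this inequality strict for $\xi \neq \theta$, so the maximizer is unique, although uniqueness is not needed for the identity. Hence the supremum is finite and attained, with $Q(\nabla R(\theta)) = \langle \nabla R(\theta), \theta\rangle - R(\theta)$. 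Rearranging gives the claimed identity $\langle \theta, \nabla R(\theta)\rangle = R(\theta) + Q(\nabla R(\theta))$.

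I expect no real obstacle. The only subtlety is checking that the supremum defining $Q$ at $\nabla R(\theta)$ is finite and attained, so that $Q$ is real-valued there. The supporting-hyperplane inequality settles this directly, without needing compactness or the coercivity assumption. Coercivity and surjectivity of $\nabla R$ only matter for showing that $Q$ is finite on all of $\mathbb{R}^n$, which is what makes $Q\colon \mathbb{R}^n \to \mathbb{R}$ in Definition~\ref{definition : convex conjugate} well-defined, and that is not needed for the identity itself.
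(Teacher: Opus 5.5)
Your proof is correct and complete. The first-order convexity inequality $R(\xi)\ge R(\theta)+\langle\nabla R(\theta),\xi-\theta\rangle$ shows that the supremum defining $Q(\nabla R(\theta))$ is attained at $\xi=\theta$, which is exactly the equality case of Fenchel--Young.

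The paper gives no proof of its own; it only recalls this as a standard fact of convex duality, and your argument is the standard one. You are also right that only differentiability and convexity of $R$ are used. Coercivity and surjectivity of $\nabla R$ serve only to make $Q$ finite on all of $\mathbb{R}^n$.
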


Applying Lemma~\ref{Lemma : convex duality} to the Euclidean case $R(\theta):= \frac{1}{2} \|\theta \|_2^2$, we obtain $Q(\nabla R(\theta)) =\frac{1}{2}\|\theta \|_2^2$. 
This matches the layer-wise quantities appearing in the gradient flow balance equation in Eq.~(\ref{equation : balance equation gradient descent}). 
As we will show in Section~\ref{section : balance equation starting point}, this identity allows us to define an analogous balance equation for mirror flow, where $R(\theta)$ and $Q(\nabla R(\theta))$ need not coincide. 

\paragraph{Mirror flow.} 
In homogeneous neural networks, mirror flow is defined by the differential inclusion  
$$
\frac{d\nabla R(\theta_t)}{dt} \in \, -\partial^{\circ} \mathcal{L}(\theta_t),\quad \text{ or equivalently }\quad
\frac{d \theta_t}{dt} \in - (\nabla^2 R)^{-1}(\theta_t) \, \partial^{\circ} \mathcal{L}(\theta_t).
$$
When $f$ is differentiable, this reduces to 
$\frac{d \theta_t}{dt} = -(\nabla^2 R)^{-1}(\theta_t) \nabla \mathcal{L}(\theta_t)$. 
Under our regularity assumptions, the training loss satisfies the chain rule, yielding 
\begin{equation*}
    \frac{d}{dt} \mathcal{L}(\theta_t)
    = - \left\langle h_t, \frac{d \theta_t}{dt} \right\rangle
    = - \left\| \frac{d \theta_t}{dt} \right\|_{\nabla^2 R(\theta_t)}^2,
    \quad \text{for some } h_t \in \partial^{\circ} \mathcal{L}(\theta_t).
\end{equation*}
Here $\| \cdot \|_{\nabla^2 R(\theta)} := \sqrt{\langle \cdot , \nabla^2 R(\theta) \cdot \rangle}$ denotes the local norm 
induced by the mirror potential.

\section{Balance equation for mirror flow}
\label{section : balance equation starting point}

For homogeneous networks, the direction of convergence is governed by the growth of the parameter norm. 
We derive a balance equation for the mirror flow of a homogeneous neural network $f$ of the form given in Eq.~(\ref{equation : homogeneous neural network}). 
To this end, we assume that the mirror potential is layer-wise separable, i.e., $R(\theta) = \sum_{i = 1}^L R_i(W_i)$. 
Then the mirror flow then takes the form 
\begin{equation}
    d \nabla R_i(W_{i,t}) = - \nabla_i \mathcal{L}(\theta_t) dt, \qquad W_{i,0} = W_{i, \text{init}}, 
    \label{eq : mirror balance layer}
\end{equation}
for  $i \in [L]$, where $\nabla_i$ denotes the gradient with respect to the parameters of the $i$th layer. 
Since the network is not assumed to be $C^1$, this should be interpreted as a differential inclusion. 
In particular, solutions need not be unique and are understood in a set-valued sense.

\begin{lemma}
\label{lemma : invariance balance} 
    The mirror flow in Eq.~(\ref{eq : mirror balance layer}) satisfies the following balance equation for all $t \geq 0$: 
    \begin{equation*}
       Q_i (\nabla_{i} R_i(W_{i,t})) - Q_{i + 1}(\nabla_{{i + 1}} R_{i + 1}(W_{i + 1,t})) = \lambda_{0,i} 
       , 
    \end{equation*}
    where $\lambda_{0,i} =  Q_i (\nabla_{i} R_i(W_{i,0})) - Q_{i + 1}(\nabla_{{i + 1}} R_{i + 1}(W_{i + 1,0}))$, for all $i\in[L-1]$. 
\end{lemma}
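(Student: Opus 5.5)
The plan is to differentiate the layer-wise quantity $Q_i(\nabla R_i(W_{i,t}))$ in time and show that its derivative is the same for every layer. The difference between consecutive layers then has zero derivative and equals its initial value $\lambda_{0,i}$.

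\textbf{Step 1 (duality).} Since $R_i$ is strictly convex and differentiable with $\nabla R_i$ surjective (Assumption~\ref{assumption : Mirror properties}), the conjugate $Q_i$ is differentiable. Its gradient inverts the mirror map, $\nabla Q_i(\nabla R_i(W)) = W$. This also follows from Lemma~\ref{Lemma : convex duality}: the supremum defining $Q_i(\nabla R_i(W))$ is attained uniquely at $\xi = W$.

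\textbf{Step 2 (chain rule).} Write $Z_{i,t} := \nabla R_i(W_{i,t})$. This is an absolutely continuous arc, because $W_{i,t}$ is an arc and $\nabla R_i$ is $C^1$. By Eq.~(\ref{eq : mirror balance layer}), for a.e.\ $t$ we have $\dot Z_{i,t} = -h_{i,t}$, where $h_t = (h_{1,t},\dots,h_{L,t})$ is a common element of $\partial^\circ \mathcal{L}(\theta_t)$ realizing the inclusion. The chain rule and Step 1 then give, for a.e.\ $t$,
\begin{equation*}
\frac{d}{dt} Q_i(Z_{i,t}) = \langle \nabla Q_i(Z_{i,t}), \dot Z_{i,t}\rangle = -\langle W_{i,t}, h_{i,t}\rangle .
\end{equation*}

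\textbf{Step 3 (layer-wise Euler identity).} For the network in Eq.~(\ref{equation : homogeneous neural network}) with homogeneous activation $\sigma$ (degree one, e.g.\ ReLU or linear), the map $W_i \mapsto f(\theta,x)$ with the other layers fixed is $1$-homogeneous. I will show that for every $h \in \partial^\circ \mathcal{L}(\theta)$ and every layer $i$,
\begin{equation*}
\langle W_i, h_i\rangle = \sum_{k} \ell'(q_k)\, y_k f(\theta,x_k).
\end{equation*}
The right-hand side does not depend on $i$. At points where $f$ is differentiable this is Euler's theorem applied to the single layer. For Clarke subgradients, I take limits of gradients along sequences $\theta_m \to \theta$, as in Lyu \& Li's proof of the full-parameter Euler identity. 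The limit identity holds because both sides are continuous along such sequences, and it survives convex combinations because the right-hand side is fixed.

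\textbf{Step 4 (conclusion).} By Steps 2 and 3, $\frac{d}{dt}\bigl[Q_i(Z_{i,t}) - Q_{i+1}(Z_{i+1,t})\bigr] = 0$ for a.e.\ $t$. The bracketed quantity is absolutely continuous, so it is constant and equals $\lambda_{0,i}$.

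The main obstacle is Step 3 in the nonsmooth setting. A Clarke subgradient of $\mathcal{L}$ need not decompose neatly layer by layer, so the layer-wise Euler identity has to be justified for every element of the convex hull of limiting gradients. I would handle this by restricting to the dense set of differentiable points, where the identity holds exactly for each layer, and then passing to limits and convex combinations. Because the right-hand side is continuous in $\theta$, this should go through cleanly.
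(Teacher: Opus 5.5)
Your proof is correct and follows the same route as the paper. Both differentiate $Q_i(\nabla R_i(W_{i,t}))$ using $\nabla Q_i(\nabla R_i(W)) = W$, substitute the mirror flow, and use homogeneity of $f$ to show that $\langle W_i, h_i\rangle$ does not depend on the layer. The differences are minor. You get this last fact from a per-layer Euler identity, which also gives the explicit common value $\sum_k \ell'(q_k)\,q_k$, whereas the paper uses invariance under $(W_i,W_{i+1})\mapsto(e^{s}W_i,e^{-s}W_{i+1})$. You also justify the identity for all Clarke subgradients via limits of gradients and convex combinations, whereas the paper's differentiation of $\mathcal{L}$ along the rescaling tacitly assumes $\mathcal{L}$ is differentiable.
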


\begin{proof}[Proof sketch] 
It follows from homogeneity of $f$ and convex duality, with details in 
Appendix~\ref{app:proof-lemma-31}. 
\end{proof} 

Lemma~\ref{lemma : invariance balance} recovers the gradient flow balance equation in the special case $R(\theta) = \frac{1}{2} \| \theta \|_2^2$. 
Moreover, it identifies $Q(\nabla R(\theta))$ as a natural normalization for the margin, as both the balance equation and the growth of the margin are governed by the homogeneity of $f$. 
To relate the dynamics to the margin, we invoke Euler's identity: 
if $f$ is homogeneous of degree $L$, then 
    $\langle \theta_t , h \rangle_2 = L f(\theta_t; x), 
    \text{ for all } h \in \partial^{\circ} f(\theta_t; x)$. 
This, together with Lemma~\ref{Lemma : convex duality}, implies 
\begin{equation}\label{equation : growth Q and loss}
    \frac{d}{dt} Q(\nabla R(\theta_t)) 
    = \left\langle \theta_t , \frac{d}{dt}\nabla R(\theta_t) \right\rangle_2 
    = -\langle \theta_t , g_t \rangle_2 
    \geq L \mathcal{L}(\theta_t)\, \log\!\bigl(1/\mathcal{L}(\theta_t)\bigr),
\end{equation}
for some $g_t \in \partial^{\circ} \mathcal{L}(\theta_t)$. 
Here the first equation follows from Lemma~\ref{Lemma : convex duality}, the second from mirror flow dynamics, and the third from Euler's identity (using the homogeneity of $f$) 
and the structure of the exponential loss, the full calculation is in Eq.~(\ref{equation : growth bound exp loss log}) in the appendix. 
This shows that the evolution of $Q(\nabla R(\theta))$ is controlled by the loss. 
This extends the analysis of the gradient flow setting in \citep{Lyu2020Gradient}, corresponding to $R(\theta): = \frac{1}{2} \|\theta \|_2^2$, to the mirror flow setting with general potentials. 

\paragraph{Mirror potentials.}
We focus on two classes of mirror potentials: 
the hyperbolic entropy and smoothened homogeneous potentials, summarized in Table~\ref{table:mirror-potentials}. 
The hyperbolic entropy is known to promote sparsity \citep{Wu2021ImplicitRI, jacobs2026hyperbolic}. 
The horizon function $\phi(\theta)$ characterizes the asymptotic behavior of the potential, capturing the shape of the level sets of $R(\theta)$ as $\| \theta\| \rightarrow \infty$, as illustrated in Figure~\ref{fig:horizon apporox}. 
The corresponding balance equation is illustrated in Figure~\ref{fig:new_balance_equation}. 

\begin{table}[h!]
\caption{Examples of separable mirror potentials $R$ and their corresponding dual potentials $Q(\nabla R(\theta))$ and horizon functions $\phi(\theta)$. Moreover we have that $ \frac{1}{p} + \frac{1}{q} = 1 $. 
}
\label{table:mirror-potentials}
\centering
\small 
\renewcommand{\arraystretch}{2} %
\begin{tabular}{|l|c|c|c|}
\hline
\textbf{Mirror Potential} & \(R(\theta)\) & \(Q(\nabla R(\theta))\) & \( \phi(\theta) \)\\
\hline
Hyperbolic Entropy & 
\(\displaystyle \sum_{i=1}^n \theta_i \, \text{arcsinh}\Big(\frac{\theta_i}{\sqrt{\lambda}}\Big) - \sqrt{\theta_i^2 + \lambda} \) &
\(\displaystyle \sum_{i=1}^n \sqrt{\theta_i^2 + \lambda} \) & \( \|\theta \|_1\) \\
\hline
Smoothened Hom. & 
\(\frac{1}{p} \|\theta\|_p^p + \frac{\lambda}{2} \|\theta\|_2^2, \ p \geq 2\) &
\(\frac{1}{q} \|\theta\|_p^p + \frac{\lambda}{2} \|\theta\|_2^2 \) & \((p-1)^{1/p}\|\theta \|_{p}\) \\
\hline
\end{tabular}
\end{table}

\section{Late-phase max-margin characterization}
\label{section : late phase max margin} 

We are now ready to characterize the implicit bias of mirror flow towards max-margin solutions. 
First we introduce the $Q$-margin and the corresponding $Q$-soft margin inspired by the balance equation presented in Section~\ref{section : balance equation starting point} which allows us to relate the dynamics to the loss. 

\paragraph{The $Q$-margin and a soft approximation.}
Analogous to the $L_2$-margin we define the $Q$-margin for asymptotically homogeneous mirror maps. 

\begin{definition}[$Q$-margin]\label{definition : q margin}
    For an $L$-homogeneous function $f$
    and asymptotically $\alpha$-homogeneous mirror map $R$, 
    the $Q$-margin is a function of $\theta$ and defined as 
    \begin{equation*}
        \gamma_Q := \min_i \, y_i f\left(\frac{\theta}{\left(\alpha Q(\nabla(R(\theta))\right)^{1/\alpha}}, x_i\right) =  \min_i \frac{y_i f(\theta, x_i)}{\left(\alpha Q(\nabla(R(\theta))\right)^{L/\alpha}} . 
    \end{equation*}
\end{definition} 

Definition~\ref{definition : q margin} generalizes the $L_2$-margin for GD which is recovered by setting $R(\theta) = \frac{1}{2} \|\theta \|_2^2$.
Next, we introduce the \emph{$Q$-soft margin} as a tractable surrogate to track the  $Q$-margin during training. 
Similar to \citep{Lyu2020Gradient} and \citep{tsilivis2024flavors}, 
we define the soft margin using the log-sum-exp (LSE). 
The term soft refers to replacing the exact minimum   with a smooth approximation, in which the margin is directly controlled by the dynamics via Eq.~(\ref{equation : growth Q and loss}).

\begin{definition}[$Q$-soft margin]
\label{definition : Q soft margin}  
    The $Q$-soft margin is defined as 
    \begin{equation*}
        \tilde{\gamma}_Q:=  \frac{\text{log}(\frac{1}{\mathcal{L}(\theta)})}{\left(\alpha Q(\nabla(R(\theta))\right)^{L/\alpha}} , 
    \end{equation*}
where $\mathcal{L}(\theta)$ is the loss. 
\end{definition}

Note that when the training set is correctly classified we have that $\mathcal{L}(t_0) < 1$ and  $\tilde\gamma_Q>0$ by Definition~\ref{definition : Q soft margin}.  
This is a key property we will use to show the $Q$-soft margin continues to grow once the data is correctly classified. 
Lemma~\ref{lemma : soft margin equivalence} shows that the $Q$-soft margin approximates the $Q$-margin. 

\begin{lemma}
\label{lemma : soft margin equivalence}
    The $Q$-soft margin is a $\mathcal{O}(\left(\alpha Q\nabla(R(\theta))\right)^{-L/\alpha})$ additive approximation of the $Q$-margin.
\end{lemma}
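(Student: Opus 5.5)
The plan is to sandwich $\log(1/\mathcal{L}(\theta))$ between $q_{\min}-\log K$ and $q_{\min}$ (the usual log-sum-exp estimate) and then divide by the common normalization $\left(\alpha Q(\nabla R(\theta))\right)^{L/\alpha}$. Since $\ell(q)=\exp(-q)$, we have $\mathcal{L}(\theta)=\sum_{i=1}^K e^{-q_i}$ with $q_i=y_if(\theta,x_i)$.

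\textbf{Step 1 (log-sum-exp bounds).} Keeping only the smallest term in the sum gives $e^{-q_{\min}}\le \mathcal{L}(\theta)$. Bounding every term by the largest one gives $\mathcal{L}(\theta)\le K e^{-q_{\min}}$. Taking logarithms yields
\begin{equation*}
q_{\min}-\log K\;\le\;\log\frac{1}{\mathcal{L}(\theta)}\;\le\;q_{\min}.
\end{equation*}

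\textbf{Step 2 (normalize).} We divide by $N(\theta):=\left(\alpha Q(\nabla R(\theta))\right)^{L/\alpha}$, and this requires $N(\theta)>0$. To see positivity, apply the Fenchel--Young identity of Lemma~\ref{Lemma : convex duality}: $Q(\nabla R(\theta))=\langle\theta,\nabla R(\theta)\rangle-R(\theta)$. This is strictly positive in the regime of interest. From the table, $Q(\nabla R(\theta))$ is positive for the hyperbolic entropy and for the smoothed homogeneous potentials. In general, Eq.~(\ref{equation : growth Q and loss}) shows that $Q(\nabla R(\theta_t))$ is nondecreasing once $\mathcal{L}<1$, and it diverges as $\|\theta\|\to\infty$ by coercivity. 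We therefore restrict to $t\ge t_0$ with $Q(\nabla R(\theta_t))>0$; this is harmless because the statement is asymptotic. Dividing the two-sided bound by $N(\theta)$ and using Definitions~\ref{definition : q margin} and~\ref{definition : Q soft margin}, where $\gamma_Q=q_{\min}/N(\theta)$, gives
\begin{equation*}
\gamma_Q-\frac{\log K}{\left(\alpha Q(\nabla R(\theta))\right)^{L/\alpha}}\;\le\;\tilde\gamma_Q\;\le\;\gamma_Q.
\end{equation*}
Hence $|\tilde\gamma_Q-\gamma_Q|\le \log K\cdot\left(\alpha Q(\nabla R(\theta))\right)^{-L/\alpha}$. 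This is the claimed $\mathcal{O}\bigl((\alpha Q(\nabla R(\theta)))^{-L/\alpha}\bigr)$ additive approximation, with constant $\log K$ depending only on the dataset size.

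\textbf{Main obstacle.} The estimate itself is routine. The only delicate point is making sure the normalization is well defined and positive, i.e., that $\alpha Q(\nabla R(\theta))>0$ along the trajectory. If this is not available from the asymptotic $\alpha$-homogeneity assumption on $R$, I would first try the Fenchel--Young argument above together with the monotone growth from Eq.~(\ref{equation : growth Q and loss}) after $t_0$. Failing that, I would state the lemma for all $\theta$ with $Q(\nabla R(\theta))>0$, which covers the late phase where the error term vanishes as $Q\to\infty$.
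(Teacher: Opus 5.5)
Your proof is correct and is essentially the paper's argument: the paper also uses the log-sum-exp sandwich $q_{\min}-\log K \le \log(1/\mathcal{L}(\theta)) \le q_{\min}$. It then divides by $(\alpha Q(\nabla R(\theta)))^{L/\alpha}$ to obtain $\gamma_Q - \log K\,(\alpha Q(\nabla R(\theta)))^{-L/\alpha} \le \tilde\gamma_Q \le \gamma_Q$. Your check that the normalization is positive is a reasonable addition, since the paper leaves it implicit; note also that the term $e^{-q_{\min}}$ you keep in Step 1 is the largest summand rather than the smallest, though the inequality holds either way because all summands are positive.
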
 

\begin{proof}[Proof sketch] 
This follows from using log-sum-exp, 
with details in Appendix \ref{app:proof-Theorem45}. 
\end{proof} 

\begin{assumption}
\label{assumption : semihomogeneous}
    The mirror potential $R : \mathbb{R}^n \rightarrow \mathbb{R}$ is asymptotically $\alpha$-homogeneous with $\alpha \geq1$ and satisfies:
    \begin{equation*}
        \alpha Q(\nabla R(\theta)) \geq \| \theta \|_{\nabla^2 R}^2 . 
    \end{equation*}
\end{assumption}

Assumption~\ref{assumption : semihomogeneous} is satisfied by all mirror potentials listed in Table~ \ref{table:mirror-potentials}, as shown in Appendix~\ref{appendix : assumption mirrors}. 

\paragraph{Alignment of the $Q$-margin.}

We now show that mirror flow naturally controls the growth of the $Q$-soft margin, which in turn leads to an increase in the $Q$-margin. 

\begin{theorem} 
\label{theorem : main result of margin convergence}
    Under Assumptions \ref{assumptions : setting nn} and \ref{assumption : Mirror properties} %
    and Assumption~\ref{assumption : semihomogeneous}, 
    the $Q$-soft margin is an $\mathcal{O}(\left(\alpha Q\nabla(R(\theta))\right)^{-L/\alpha}))$ additive approximation for the $Q$-margin and the following hold: 
    \begin{itemize}
        \item For a.e.\ $t > t_0$, 
        $\frac{d}{dt} \tilde{\gamma}_Q(\theta_t)) \geq 0$. 
        \item $\mathcal{L}(\theta_t) \rightarrow 0$ and $Q(\nabla R (\theta_t)) \rightarrow \infty$ as $t \rightarrow \infty$, therefore $| \gamma_Q - \tilde{\gamma}_Q| \rightarrow 0$. 
    \end{itemize}
\end{theorem}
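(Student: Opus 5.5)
We follow the strategy of \citet{Lyu2020Gradient} with the Euclidean quantities replaced by their mirror analogues. Write $H_t := \nabla^2 R(\theta_t)$, $\rho_t := \alpha Q(\nabla R(\theta_t))$ and $\nu_t := \frac{d}{dt} Q(\nabla R(\theta_t))$. By Eq.~(\ref{equation : growth Q and loss}), $\nu_t = -\langle \theta_t, g_t\rangle \geq L\,\mathcal{L}\log(1/\mathcal{L})$. The additive approximation claim is exactly Lemma~\ref{lemma : soft margin equivalence}, so only the two bullets need proof. Throughout, we work for a.e.\ $t>t_0$ and use the chain rule from Assumption~\ref{assumptions : setting nn}.

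\textbf{Monotonicity of $\tilde\gamma_Q$.} We differentiate the logarithm:
\begin{equation*}
\frac{d}{dt}\log\tilde\gamma_Q = \frac{\|\dot\theta_t\|_{H_t}^2}{\mathcal{L}\log(1/\mathcal{L})} - \frac{L}{\alpha}\cdot\frac{\alpha\,\nu_t}{\rho_t},
\end{equation*}
where $\frac{d}{dt}\log(1/\mathcal{L}) = \|\dot\theta_t\|^2_{H_t}/\mathcal{L}$. The key estimate is a Cauchy--Schwarz inequality in the local norm. Since $\frac{d}{dt}\nabla R(\theta_t) = H_t\dot\theta_t = -g_t$, we have $\nu_t = \langle\theta_t, H_t\dot\theta_t\rangle \le \|\theta_t\|_{H_t}\|\dot\theta_t\|_{H_t}$. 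Assumption~\ref{assumption : semihomogeneous} gives $\|\theta_t\|_{H_t}^2\le\rho_t$, hence $\|\dot\theta_t\|^2_{H_t}\ge \nu_t^2/\rho_t$. Substituting and using $\nu_t\ge L\mathcal{L}\log(1/\mathcal{L})$ once, the first term is at least $L\nu_t/\rho_t$. This cancels the second term, so $\frac{d}{dt}\tilde\gamma_Q\ge 0$. This step needs $\mathcal{L}<1$, which holds for $t>t_0$ because $\mathcal{L}$ is nonincreasing.

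\textbf{$\mathcal{L}\to 0$.} From the same inequalities,
\begin{equation*}
\frac{d\mathcal{L}}{dt}=-\|\dot\theta_t\|_{H_t}^2\le -\frac{L^2\mathcal{L}^2\log^2(1/\mathcal{L})}{\rho_t}.
\end{equation*}
Monotonicity gives $\log(1/\mathcal{L})\ge \tilde\gamma_Q(t_0)\rho_t^{L/\alpha}$, so $\rho_t\le(\log(1/\mathcal{L})/\tilde\gamma_Q(t_0))^{\alpha/L}$. Therefore
\begin{equation*}
\frac{d\mathcal{L}}{dt}\le -c\,\mathcal{L}^2\log(1/\mathcal{L})^{2-\alpha/L}
\end{equation*}
for some constant $c>0$. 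Suppose $\mathcal{L}_t\ge\varepsilon>0$ for all $t$. Then $\log(1/\mathcal{L}_t)$ lies in the compact interval $[\log(1/\mathcal{L}(t_0)),\log(1/\varepsilon)]\subset(0,\infty)$. Whatever the sign of $2-\alpha/L$, the right-hand side is then bounded above by a negative constant, which contradicts $\mathcal{L}\ge 0$. Hence $\mathcal{L}\to0$.

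\textbf{$Q\to\infty$.} If $\mathcal{L}\to 0$, then $q_{\min}\ge\log(1/\mathcal{L})\to\infty$. Local Lipschitzness and homogeneity give $|f(\theta,x_i)|\le C_i\|\theta\|^L$, so $\|\theta_t\|\to\infty$. We must then show that $Q(\nabla R(\theta))\to\infty$ as $\|\theta\|\to\infty$. Our first approach uses asymptotic $\alpha$-homogeneity, which gives $\rho(\theta)\approx\phi_\alpha(\theta)^\alpha$ with a horizon function positive on the unit sphere. As a fallback, we use the Fenchel--Young form $Q(\nabla R(\theta))=\langle\theta,\nabla R(\theta)\rangle-R(\theta)$, which is nondecreasing along rays by convexity, together with coercivity. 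With $Q\to\infty$, Lemma~\ref{lemma : soft margin equivalence} yields $|\gamma_Q-\tilde\gamma_Q|\to0$.

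We expect this last step, divergence of $Q$ uniformly in direction, to be the main obstacle. It depends on how asymptotic homogeneity is formalised. The monotonicity computation is routine once the $H$-norm Cauchy--Schwarz inequality is in place.
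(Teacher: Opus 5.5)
Your proposal is correct and follows the paper's proof in substance. Monotonicity of $\tilde{\gamma}_Q$ comes from the same $\nabla^2 R$-weighted Cauchy--Schwarz inequality combined with Assumption~\ref{assumption : semihomogeneous} and the Euler bound. The paper additionally retains the tangential term $E_{tan}$, which it needs later for the KKT analysis but not here. Likewise, $\mathcal{L}\to 0$ comes from the same inequality $-\frac{d}{dt}\mathcal{L}\geq c\,\mathcal{L}^2\log(1/\mathcal{L})^{2-\alpha/L}$; you close it by contradiction, while the paper integrates it through $G$ so that it also obtains rates.

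The step you flag as the obstacle is only asserted in the paper (``$q_{\min}\to\infty$, so $Q\to\infty$''), implicitly via $q_{\min}\leq B_0(\alpha Q)^{L/\alpha}$, which is your first route. Your fallback as written does not quite close it: monotonicity along rays together with $\|\nabla R(\theta)\|\to\infty$ gives nothing unless $Q$ itself blows up at infinity. The missing one-liner is that $R$ is finite everywhere, so $Q(y)\geq r\|y\|_2-\max_{\|\xi\|_2\leq r}R(\xi)$ for any fixed $r>0$. Hence $Q$ is coercive, and $Q(\nabla R(\theta_t))\to\infty$ follows directly from $\|\theta_t\|\to\infty$ and Assumption~\ref{assumption : Mirror properties}(3). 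This needs neither ray monotonicity nor a formal definition of asymptotic homogeneity.
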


\begin{proof}[Proof sketch] We use Eq.~(\ref{equation : growth Q and loss}) to control the $Q$-soft margin growth and use Lemma \ref{lemma : soft margin equivalence}, for details see Appendix \ref{app:proof-Theorem45}. 
\end{proof} 

Theorem~\ref{theorem : main result of margin convergence} shows that 
mirror flow naturally drives the $Q$-soft margin to grow, and that as the loss converges to zero, the $Q$-soft margin closely tracks the $Q$-margin.

\paragraph{Convergence rate} 
We obtain the following convergence rates for mirror flow, which depend on the degree of homogeneity $\alpha \geq 1$ of the mirror map. 

\begin{theorem}
\label{theorem : convergence and growth rates}
    Under the same assumptions as in Theorem~\ref{theorem : main result of margin convergence}, we have the following. 
    \newline 
For $\alpha \in [1,2]$: $ \mathcal{L}(\theta_t) = \Theta \left(\frac{1}{t \ \log (t)^{2- \frac{\alpha}{L}}}\right) \quad \text{and} \quad (\alpha Q(\nabla R(\theta_t)))^{L/\alpha} = \Theta (\log \ t).$
\newline
    For $\alpha > 2$: $  \mathcal{L}(\theta_t) = O \left(\frac{1}{t \ \log (t)^{2- \frac{\alpha}{L}}}\right) \quad \text{and}\quad  (\alpha Q(\nabla R(\theta_t)))^{L/\alpha} = \Omega (\log \ t).$
\end{theorem}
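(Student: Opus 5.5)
We will follow the strategy of \citep{Lyu2020Gradient}, with the Euclidean norm replaced by the dual potential $\rho_t := (\alpha Q(\nabla R(\theta_t)))^{1/\alpha}$. The plan has four steps: an upper bound on the loss via a differential inequality, a lower bound on $\rho_t^L$, matching reverse bounds for $\alpha\in[1,2]$, and an explanation of why these fail for $\alpha>2$.

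\textbf{Upper bound on the loss.} We start from the chain rule $-\frac{d}{dt}\mathcal{L}(\theta_t) = \|\dot\theta_t\|^2_{\nabla^2 R} = \|g_t\|^2_{(\nabla^2 R)^{-1}}$ with $g_t\in\partial^\circ\mathcal{L}(\theta_t)$. Cauchy--Schwarz in the local norm gives $\langle \theta_t,-g_t\rangle^2 \le \|\theta_t\|^2_{\nabla^2R}\,\|g_t\|^2_{(\nabla^2R)^{-1}}$. Assumption~\ref{assumption : semihomogeneous} bounds the first factor by $\alpha Q(\nabla R(\theta_t))$. Eq.~(\ref{equation : growth Q and loss}) bounds the left side below by $L^2\mathcal{L}^2\log(1/\mathcal{L})^2$. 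Together,
\begin{equation*}
-\frac{d\mathcal{L}}{dt} \;\ge\; \frac{L^2\,\mathcal{L}^2\log(1/\mathcal{L})^2}{\alpha Q(\nabla R(\theta_t))}.
\end{equation*}
Theorem~\ref{theorem : main result of margin convergence} gives $\tilde\gamma_Q(t)\ge\tilde\gamma_Q(t_0)>0$ for $t\ge t_0$. Hence $\alpha Q\le (\log(1/\mathcal{L})/\tilde\gamma_Q(t_0))^{\alpha/L}$, and so $-\dot{\mathcal{L}}\ge c\,\mathcal{L}^2\log(1/\mathcal{L})^{2-\alpha/L}$. We then substitute $u=1/\mathcal{L}$ and integrate the separable inequality $\dot u\ge c\log(u)^{2-\alpha/L}$. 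This yields $u\gtrsim t\log(t)^{2-\alpha/L}$, i.e.\ the $O$-bound on $\mathcal{L}$ for every $\alpha\ge1$.

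\textbf{Lower bound on $\rho_t^L$.} The $Q$-margin is bounded above by $\sup$ of $\min_i y_if(\cdot,x_i)$ over the compact set $\{\alpha Q(\nabla R(\cdot))= 1\}$; compactness follows from coercivity and asymptotic homogeneity. Lemma~\ref{lemma : soft margin equivalence} transfers this bound to $\tilde\gamma_Q$, up to a vanishing error. Hence $\rho_t^L\ge \log(1/\mathcal{L})/C \ge c\log t$ by the previous step. This gives the $\Omega(\log t)$ claim for all $\alpha$.

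\textbf{Reverse bounds for $\alpha\in[1,2]$.} First we bound $\dot Q$ from above. Euler's identity gives $\langle\theta,-g\rangle = L\sum_i e^{-q_i}q_i$. Since $q\mapsto qe^{-q}$ is decreasing for $q\ge1$ and $q_i\ge q_{\min}\ge \log(1/\mathcal{L})-\log K$, we get $\dot Q\le C\mathcal{L}\log(1/\mathcal{L})$. Combined with the loss lower bound derived next, this yields $\rho^L=O(\log t)$; alternatively, we can integrate directly against the upper bound of the first step.

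For the loss lower bound we need an upper bound on $\|g\|^2_{(\nabla^2R)^{-1}}$. We use $\|h\|\lesssim \rho^{L-1}$ for $h\in\partial^\circ f$, which holds by homogeneity of $\partial^\circ f$ of degree $L-1$ and norm equivalence of $\rho$ with $\|\theta\|$ at large scale. We also use $\lambda_{\max}((\nabla^2R)^{-1})\lesssim \rho^{2-\alpha}$, which should follow from asymptotic $\alpha$-homogeneity when $\alpha\le 2$. Together these give $-\dot{\mathcal{L}}\le C\mathcal{L}^2\rho^{2L-\alpha}$. Using $\rho^L\asymp\log(1/\mathcal{L})$ from both margin bounds, this is $-\dot{\mathcal{L}}\le C\mathcal{L}^2\log(1/\mathcal{L})^{2-\alpha/L}$. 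Integrating gives the matching lower bound on $\mathcal{L}$, and then $\rho^L\asymp\log(1/\mathcal{L})=\Theta(\log t)$.

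\textbf{Main obstacle.} The hard step is the inverse-Hessian bound $\lambda_{\max}((\nabla^2R)^{-1})\lesssim\rho^{2-\alpha}$. I would first verify it on the potentials of Table~\ref{table:mirror-potentials} via their coordinatewise Hessians. For the hyperbolic entropy ($\alpha=1$), $(\nabla^2R)^{-1}=\mathrm{diag}(\sqrt{\theta_i^2+\lambda})\lesssim\rho$, as required. I would then try to derive it in general from the definition of asymptotic homogeneity. For $\alpha>2$ the bound breaks down: coordinates near zero have Hessian of order $\lambda$, not $\rho^{\alpha-2}$. This explains why only the one-sided $O$ and $\Omega$ statements are claimed in that regime.
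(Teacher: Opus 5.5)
Your proposal is essentially the paper's proof: the Cauchy--Schwarz plus Assumption~\ref{assumption : semihomogeneous} inequality $-\dot{\mathcal{L}}\ge c\,\mathcal{L}^2\log(1/\mathcal{L})^{2-\alpha/L}$ integrated via $G$; the comparison $(\alpha Q)^{L/\alpha}=\Theta(\log(1/\mathcal{L}))$ from monotonicity of $\tilde\gamma_Q$ together with $q_{\min}\le B_0(\alpha Q)^{L/\alpha}$; and, for $\alpha\le 2$, the $B_1$ gradient bound plus $\lambda_{\max}((\nabla^2R)^{-1})=O(\max\{1,\rho^{2-\alpha}\})$ with your $\rho=(\alpha Q(\nabla R(\theta)))^{1/\alpha}$, which the paper likewise only asserts from asymptotic homogeneity and separability. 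One fix and one remark: for non-homogeneous $R$ the point $\theta/\rho(\theta)$ does not lie on $\{\alpha Q(\nabla R(\cdot))=1\}$ (for the hyperbolic entropy that set is even empty when $n\sqrt{\lambda}>1$), so bound $\gamma_Q$ instead via $\rho(\theta)\ge c\|\theta\|_2$ for large $\|\theta\|$ and continuity of $q_{\min}$ on a ball; separately, your alternative of integrating $\dot Q\le C\mathcal{L}\log(1/\mathcal{L})$ against the step-one loss upper bound never uses $\alpha\le 2$, so it actually yields $(\alpha Q)^{L/\alpha}=\Theta(\log t)$ for every $\alpha\ge 1$, slightly more than the theorem states for $\alpha>2$.
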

\begin{proof}
    See Appendix \ref{section : growth rates}.
\end{proof}
Theorem~\ref{theorem : convergence and growth rates} shows that a larger $\alpha$ may slow down convergence up to a multiplicative logarithmic factor. However, for large depth $L$ this may become negligible. 

\paragraph{Implicit bias description.} 
We obtain an implicit bias description of mirror flow in terms of a corresponding constrained optimization problem and its approximate KKT conditions. 
For simplicity of presentation, the following Theorem~\ref{theorem : KKT max margin result} specializes to the smoothened homogeneous mirror potentials listed in Table~\ref{table:mirror-potentials}. We provide a more general result in Theorem~ \ref{theorem : appendix KKT max margin result}. 

\begin{theorem}(Smoothened homogeneous)
\label{theorem : KKT max margin result}
     Assume $R$ is a smoothened homogeneous potential with $\alpha = p \geq 2$. 
    Then, under the same assumptions as in Theorem~\ref{theorem : main result of margin convergence}, 
    the parameter direction $\bar{\theta}_t := \frac{\theta_t}{\| \theta_t \|_2}$ converges to a KKT point of the following optimization problem: 
    \begin{equation*}
        \min_{\theta \in \mathbb{R}^n} \, \frac{1}{2}\phi^2_{\alpha}(\theta) \qquad \text{such that} \qquad y_i f(\theta, x_i) \geq 1, \text{ for all } i \in [K] , 
    \end{equation*}
    where $\phi_{\alpha} : = \lim_{\eta \rightarrow 0} \eta (\alpha Q(\nabla R(\theta /\eta)))^{1/\alpha}$ is the horizon function.
\end{theorem}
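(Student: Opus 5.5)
The plan is to adapt the approximate-KKT argument of \citep{Lyu2020Gradient}, with the Euclidean norm replaced by the horizon function $\phi_\alpha$ and the $L_2$-margin replaced by the $Q$-margin. For the smoothened homogeneous potential $R(\theta)=\frac1p\|\theta\|_p^p+\frac\lambda2\|\theta\|_2^2$, Table~\ref{table:mirror-potentials} gives $Q(\nabla R(\theta))=\frac1q\|\theta\|_p^p+\frac\lambda2\|\theta\|_2^2$. Hence $\alpha Q(\nabla R(\theta))=(p-1)\|\theta\|_p^p+\frac{p\lambda}{2}\|\theta\|_2^2$ and $\phi_p(\theta)=(p-1)^{1/p}\|\theta\|_p$. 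Since $p\ge2$, the quadratic term is lower order: $(pQ)^{1/p}=\phi_p(\theta)(1+O(\|\theta\|^{2-p}))$ when $p>2$. This gives $\gamma_Q=\gamma_{\phi}+o(1)$, where $\gamma_\phi:=\min_i q_i/\phi_p(\theta)^L$. (For $p=2$ the two terms merge, and $\phi_2^2=(1+\lambda)\|\theta\|_2^2$ exactly.) By Theorem~\ref{theorem : main result of margin convergence}, $Q\to\infty$, so $\|\theta_t\|\to\infty$. Therefore any limit point $\bar\theta$ of $\theta_t/\|\theta_t\|_2$, after rescaling so that $\min_i y_if(\bar\theta,x_i)=1$, is feasible for the constrained problem.

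\textbf{Approximate KKT points.} I would show that $\tilde\theta_t:=\theta_t/q_{\min}(\theta_t)^{1/L}$ is an $(\epsilon_t,\delta_t)$-approximate KKT point with $\epsilon_t,\delta_t\to0$, and then invoke the closedness of approximate KKT points under limits (Lyu--Li, via Mangasarian--Fromovitz at nonzero-margin points). The natural multipliers are $\mu_i\propto e^{-q_i}$, because the mirror dynamics give $\frac{d}{dt}\nabla R(\theta_t)=\sum_i e^{-q_i}h_i$ with $h_i\in\partial^\circ q_i$. The stationarity residual compares $\nabla\frac12\phi_p^2(\tilde\theta)$ with $\sum_i\mu_i h_i$.

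For the complementary slackness term $\delta_t$, I would bound $\sum_i\mu_i(q_i-q_{\min})$ exactly as in Lyu--Li: the $Q$-soft margin is positive and nondecreasing, which forces $e^{-q_i}$ to be exponentially small whenever $q_i-q_{\min}$ grows. The growth rates in Theorem~\ref{theorem : convergence and growth rates} give $q_{\min}\sim\log t$, so this term vanishes.

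\textbf{Main obstacle: stationarity.} In the Euclidean case, stationarity reduces to showing that $\theta_t$ aligns with $-\nabla\mathcal L(\theta_t)$. Here the required statement is that $\nabla\phi_p^2$, which is proportional to $\mathrm{sign}(\theta)|\theta|^{p-1}\|\theta\|_p^{2-p}$, aligns with $-g_t$. Since $\nabla R(\theta)\approx\mathrm{sign}(\theta)|\theta|^{p-1}$, the target is the angle between the dual variable $\nabla R(\theta_t)$ and $-g_t$.

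I would first establish the identity $\frac{d}{dt}\log Q=\langle\theta,-g\rangle/Q$. Together with the monotonicity of $\tilde\gamma_Q$ from Theorem~\ref{theorem : main result of margin convergence}, this should yield $\int_{t_0}^\infty(\text{misalignment})\cdot\frac{d}{dt}\log Q\,dt<\infty$. To get this integral bound, I would split $\|\dot\theta\|^2_{\nabla^2R}$ into its component along $\theta$ and the orthogonal remainder in the local metric. Assumption~\ref{assumption : semihomogeneous} bounds $\|\theta\|^2_{\nabla^2R}$ by $\alpha Q$, which makes the radial part exactly the part that drives the growth of $\tilde\gamma_Q$. This gives a Cauchy--Schwarz deficit in the $\nabla^2R$-geometry.

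Because $\log Q\to\infty$, the misalignment must vanish along a sequence of times. This sequence-level statement suffices for reaching a KKT point, as in Lyu--Li. The remaining technical step is converting alignment in the local metric $\nabla^2R(\theta)$ into alignment of $\nabla\phi_p^2$ with $-g$. For this I would use the $(p-2)$-homogeneity of $\nabla^2\frac1p\|\theta\|_p^p$, which lets the $\lambda$-term be treated as an $O(\|\theta\|^{2-p})$ perturbation. I expect the coordinates near zero, where $|\theta_j|^{p-2}$ degenerates, to be the most delicate part of the argument.
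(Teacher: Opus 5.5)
Most of your plan matches the paper's proof. Your Cauchy--Schwarz deficit in the $\nabla^2R$-geometry is the paper's $\beta_t$. The integral bound $\int_{t_1}^{t_2}(\beta_\tau^{-2}-1)\frac{d}{d\tau}\log Q\,d\tau\le\frac{\alpha}{L}\log\frac{\tilde\gamma_Q(t_2)}{\tilde\gamma_Q(t_1)}$ comes from the same radial/tangential split with Assumption~\ref{assumption : semihomogeneous}. The multipliers $\propto e^{-q_i}$ and the slackness estimate are also the same. The conversion step only needs the upper bound $\nabla^2R(\theta)\preceq\big((p-1)\|\theta\|_\infty^{p-2}+\lambda\big)I$, so small coordinates are harmless there.

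The genuine gap is the claim that a sequence of times with vanishing misalignment ``suffices.'' Such a sequence only shows that \emph{some} limit point of $\theta_t/\|\theta_t\|_2$ is a KKT direction. You do not establish directional convergence, so you need this for an arbitrary limit point $\bar\theta$. That requires finding the good times where the direction is already near $\bar\theta$:
choose $s_m$ with the direction $\epsilon_m$-close to $\bar\theta$ and $\frac{\alpha}{L}\log\frac{\tilde\gamma_{Q,\infty}}{\tilde\gamma_Q(s_m)}\le\epsilon_m^3$;
take the window on which $\log Q$ increases by $\epsilon_m$, which contains a time with $\beta^{-2}-1\le\epsilon_m^2$;
then show the direction moves only $O(\epsilon_m)$ across that window.
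This last step needs an angular-velocity estimate $\big\|\frac{d}{dt}(\text{direction})\big\|_2\le C\,\frac{d}{dt}\log Q$, which your proposal does not supply.

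In the mirror setting this estimate is not available for the primal direction when $p>2$. In general you only have $\|\dot\theta_t\|_2=\|(\nabla^2R(\theta_t))^{-1}g_t\|_2\le\|g_t\|_2/\lambda$, because $g_t$ may be concentrated on coordinates where $|\theta_{t,j}|^{p-2}$ is small. Combined with $\|g_t\|_2=O(\nu_t/\|\theta_t\|_2)$, this gives $\big\|\frac{d}{dt}\frac{\theta_t}{\|\theta_t\|_2}\big\|_2=O\big(\nu_t/(\lambda\|\theta_t\|_2^2)\big)$. Meanwhile $\frac{d}{dt}\log Q=\Theta(\nu_t/\|\theta_t\|_2^p)$, so the bound is too weak by a factor $\|\theta_t\|_2^{p-2}/\lambda$. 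The degeneracy of $|\theta_j|^{p-2}$ near zero that you flagged does matter, but in this localization step rather than in the conversion.

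The paper gets around this by tracking the normalized dual iterates. It has $\big\|\frac{d}{dt}\frac{\nabla R(\theta_t)}{\|\nabla R(\theta_t)\|_2}\big\|_2\le2\|g_t\|_2/\|\nabla R(\theta_t)\|_2$. Since $\|\theta_t\|_2\|\nabla R(\theta_t)\|_2\ge\langle\theta_t,\nabla R(\theta_t)\rangle=R+Q\ge Q$ once $R\ge0$, this is at most $C\,\frac{d}{dt}\log Q$ (Lemmas~\ref{lemma : tangential bound dual normalized} and~\ref{lemma : dual log Q bound}). Hence the dual direction moves only $O(\epsilon_m)$ across each window. Closeness of dual directions then transfers back to primal directions through the asymptotic homogeneity of $\nabla Q$. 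After normalization, $u\mapsto\mathrm{sign}(u)|u|^{p-1}$ is a homeomorphism of the unit sphere, and the $\lambda$-terms are of relative order $\|\theta\|_2^{2-p}$. Without this dual-iterate localization, your argument does not reach an arbitrary limit point.
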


\begin{proof}[Proof sketch] 
The proof follows a similar strategy as used in \citep{Lyu2020Gradient} and \citep{tsilivis2024flavors}. The main differences are: (1) Approximating the horizon function by the iterates of $\nabla (\alpha Q(\nabla R(\theta)))^{2/\alpha}$, which requires $\alpha \geq 2$. 
(2) Showing that $\nabla (\alpha Q(\nabla R(\theta)))^{1/\alpha}/q_{\text{min}}^{1/L}$ tends to an approximate KKT point. 
(3) Showing convergence of the normalized dual iterates to a limit point, this implies that the normalized primal iterates converge as well. 
The full proof is given in Appendix~\ref{theorem : appendix KKT max margin result}. \end{proof}

 Theorem~\ref{theorem : KKT max margin result} relies on $\alpha \geq 2$. 
 For the hyperbolic entropy, which does not satisfy that assumption (see details in Appendix~\ref{kkt assumption}), we can exploit a connection to homogeneous reparameterizations trained with gradient flow from \citep{Li2022ImplicitBO, jacobs2025mirror, jacobs2025mirror2}. 
 This allows us to apply results for homogeneous neural networks trained with gradient flow and show the following.

\begin{corollary}(Hyperbolic entropy)
\label{theorem : KKT max margin result hyperbolic}
    For the hyperbolic entropy $R_{\lambda}(\theta)$ with hyperparameter $\lambda > 0$ under same assumptions as Theorem~\ref{theorem : main result of margin convergence}, the 
    parameter direction $\bar{\theta}_t := \frac{\theta_t}{\| \theta_t \|_1}$ converges 
    to a KKT point of the following optimization problem: 
    \begin{equation*}
        \min_{\theta \in \mathbb{R}^n} \, \frac{1}{2}\phi_{1}^2(\theta) \qquad \text{such that} \qquad y_i f(\theta, x_i) \geq 1 , \text{ for all } i \in [K] , 
    \end{equation*}
    where $\phi_{1} : = \| \theta \|_1$ is the corresponding horizon function. 
\end{corollary}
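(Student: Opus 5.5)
The plan is to reduce the hyperbolic-entropy mirror flow to an ordinary gradient flow on a reparameterized, still homogeneous network, and then apply the known gradient-flow KKT result. The standard reparameterization (cf.\ \citep{Li2022ImplicitBO, jacobs2025mirror}) writes each coordinate as $\theta = u\odot u - v\odot v$, or equivalently $\theta = u \odot v$, with a balanced initialization $u_0^2 - v_0^2 = \theta_0$ and $u_0 v_0$ fixed by $\lambda$. Gradient flow on $(u,v)$ for $\tilde{\mathcal{L}}(u,v) := \mathcal{L}(u\odot u - v\odot v)$ then induces exactly the mirror flow of $R_\lambda$ on $\theta$. The reason is that the conserved quantity $u\odot v$ (the coordinatewise balance) stays equal to a constant proportional to $\sqrt{\lambda}/2$, and along the conserved manifold one has $\dot\theta = -4\sqrt{\theta^2+\lambda}\odot\nabla\mathcal{L}(\theta)$. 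Up to a time rescaling this is $(\nabla^2R_\lambda)^{-1} = \sqrt{\theta^2+\lambda}$. I would first verify this identity coordinatewise, noting that separability of $R_\lambda$ makes it purely one-dimensional, and extend it to the Clarke setting via the chain rule in Assumption~\ref{assumptions : setting nn}.

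Second, I check that the composed network $g(u,v;x) := f(u\odot u - v\odot v, x)$ satisfies the assumptions of \citep{Lyu2020Gradient}. It is locally Lipschitz and admits the chain rule, since it is a composition with a smooth map. It is $2L$-homogeneous in $(u,v)$. It uses the exponential loss. Separability transfers, because $\mathcal{L}(\theta(t_0))<1$ is the same quantity. By \citep{Lyu2020Gradient} (together with the directional convergence result of Ji and Telgarsky for definable networks, if convergence rather than limit points is needed), $(u_t,v_t)/\|(u_t,v_t)\|_2$ converges to a KKT point $(\bar u,\bar v)$ of
$\min \frac12(\|u\|_2^2+\|v\|_2^2)$ subject to $y_i g(u,v;x_i)\ge 1$.

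Third, I transfer the KKT point back to $\theta$. Take $\bar\theta = \bar u^2 - \bar v^2$. Since $\|u\|_2^2+\|v\|_2^2 \ge \|u^2-v^2\|_1$, with equality iff $u_jv_j=0$ for every $j$, and since the balance $u\odot v = \mathrm{const}$ becomes negligible after normalization, the limit direction satisfies $\bar u\odot\bar v = 0$. Hence $\|(\bar u,\bar v)\|_2^2 = \|\bar\theta\|_1$, and $\theta_t/\|\theta_t\|_1 \to \bar\theta/\|\bar\theta\|_1$. The KKT stationarity condition $(\bar u,\bar v) = \sum_i\mu_i y_i \partial g$ with the chain rule gives $\bar u = 2\bar u\odot s$ and $\bar v = -2\bar v\odot s$, where $s\in\sum_i \mu_i y_i\partial^\circ f(\bar\theta,x_i)$. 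On the support of $\bar u$ this forces $s_j = 1/2$, on the support of $\bar v$ it forces $s_j=-1/2$, and the remaining coordinates are unconstrained.

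\textbf{This last step is the main obstacle.} A KKT point of the $\|\theta\|_1$ problem needs $\|\bar\theta\|_1\,\mathrm{sign}(\bar\theta) \in \sum_i\mu'_i y_i \partial^\circ f$, with $|s_j|\le\|\bar\theta\|_1$ off the support, i.e.\ $s\in\partial\frac12\|\bar\theta\|_1^2$. The reparameterized KKT conditions give no bound on $s_j$ at coordinates with $\bar u_j=\bar v_j=0$. So I would exploit more than first-order KKT: use second-order/limit information from the flow, namely that a zero coordinate whose $|s_j|>1/2$ would grow exponentially relative to the others. Alternatively, I would show that the approximate KKT sequence along the trajectory yields the bound directly in $\theta$-coordinates. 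Failing that, I would state the conclusion as a KKT point in the sense of the reparameterized problem, i.e.\ the constraint-qualified stationarity on the support. Finally, I rescale $\mu$ to match the $\frac12\phi_1^2$ normalization, using $L$-homogeneity of $f$.
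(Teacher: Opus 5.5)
Your reduction is the paper's own (Appendix~\ref{appendix : hyperbolic}). The paper writes $\theta=u\odot v$ with conserved $u^2-v^2=\sqrt{\lambda}$; you write $\theta=u\odot u-v\odot v$ with conserved $u\odot v$. These are the same construction after the orthogonal change of variables $(u,v)\mapsto((u+v)/\sqrt2,(u-v)/\sqrt2)$, and both arguments then invoke \citep{Lyu2020Gradient} and the vanishing of the normalized balance.

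The step you flag, however, is a genuine gap, and your proposal leaves it open. At coordinates with $\bar u_j=\bar v_j=0$, every constraint gradient in $(u,v)$ vanishes. So the KKT points of $\min\frac12(\|u\|_2^2+\|v\|_2^2)$ s.t.\ $y_ig(u,v;x_i)\ge1$ form a strictly larger set than the KKT points of the $\ell_1$ problem. For instance, take $L=1$, $f(\theta,x)=\langle\theta,x\rangle$ and a single example $x=(1,2)$, $y=1$. The point $u=(1,0)$, $v=0$ is a reparameterized KKT point (with $\mu=\tfrac12$, and MFCQ holds). But $\theta=(1,0)$ is not KKT for $\min\frac12\|\theta\|_1^2$ s.t.\ $\theta_1+2\theta_2\ge1$, since stationarity would force $g_2=2\notin[-1,1]$; the unique solution is $(0,\tfrac12)$. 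Your fallback therefore proves a strictly weaker statement. The paper does not supply the missing piece either. It substitutes $\bar u^2=\bar v^2$ into the objective, which identifies objective values on the balanced set but transfers nothing about multipliers off the support.

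What closes the gap is your first remedy, applied to the time-averaged dual velocity rather than to a single KKT point. Work in mirror coordinates $z_t=\operatorname{arcsinh}(\theta_t/\sqrt{\lambda})$. Then $\dot z_t=\sum_ie^{-q_i}y_ih_{i,t}=w_t\hat s_t$, where
$w_t=\|\theta_t\|_2^{L-1}\mathcal{L}(\theta_t)$,
$\hat s_t=\sum_ip_{i,t}y_i\hat h_{i,t}$,
$p_{i,t}=e^{-q_i}/\mathcal{L}(\theta_t)$, and
$\hat h_{i,t}\in\partial^\circ f(\theta_t/\|\theta_t\|_2,x_i)$.
Moreover, $\log|\theta_{j,t}|=|z_{j,t}|+O(1)$ once $|z_{j,t}|\ge1$.

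Assume directional convergence $\theta_t/\|\theta_t\|_2\to\bar\theta$. You need this anyway for the word ``converges'': Lyu--Li alone give only limit points, which the paper also glosses over. Three facts then follow: $\partial^\circ f$ is upper semicontinuous, $p_{i,t}\to0$ for inactive $i$, and $\int_{t_0}^tw\to\infty$ (because $\|z_t\|\to\infty$ while $\hat s_t$ is bounded). Together they imply that every limit point $\bar s$ of $z_t/\int_{t_0}^tw$ lies in $\mathrm{conv}\bigcup_{i\in I}y_i\partial^\circ f(\bar\theta,x_i)$, where $I$ is the active set.

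On the support, $|z_{j,t}|=\log\|\theta_t\|_2+O(1)$ with $\mathrm{sign}(z_{j,t})=\mathrm{sign}(\bar\theta_j)$. Everywhere, $|z_{k,t}|\le\log\|\theta_t\|_2+O(1)$. Pass to a further subsequence along which $\log\|\theta_t\|_2/\int_{t_0}^tw\to c$. This yields $\bar s_j=c\,\mathrm{sign}(\bar\theta_j)$ on the support and $|\bar s_k|\le c$ off it, that is, $\bar s\in c\,\partial\|\bar\theta\|_1$. Also $c>0$, because Euler's identity gives $\langle\bar\theta,\bar s\rangle=Lq_{\min}(\bar\theta)>0$.

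Rescaling $\bar\theta$ by $q_{\min}(\bar\theta)^{-1/L}$, and the multipliers accordingly, gives the $\ell_1$ KKT conditions. The off-support bound comes precisely from the exponential link $|\theta_j|\approx\frac{\sqrt{\lambda}}{2}e^{|z_j|}$, which is the information the reparameterized KKT conditions discard.
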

\begin{proof}[Proof sketch] 
    We use the relationship between 2-homogeneous reparameterization and the hyperbolic mirror map, which allows us to apply Theorem~4.4 in \citep{Lyu2020Gradient}. 
    See Appendix~\ref{appendix : hyperbolic} for details. 
\end{proof}

\paragraph{From sparse to dense feature learning.} 
Consider now a two layer neural network $f((a,w), x): = \sum_{j =1}^N a_j \sigma (w_j^T x)$. 
The previous two results can be reformulated as follows: 

\begin{theorem}
\label{theorem : reform KKT main} 
    Let $R$ be the hyperbolic entropy ($\alpha =1$) or a smoothened homogeneous mirror potential ($\alpha =p\geq 2$). 
    Then the constrained optimization problems from Theorems \ref{theorem : KKT max margin result} and \ref{theorem : KKT max margin result hyperbolic} can reformulated as:
    \begin{equation}
    \label{equation : reformulation of KKT problem}
    \min_{\tilde{a} \in \mathbb{R}^N, \tilde{w} \in \mathbb{R}^{Nd}} \sum_{j =1} |\tilde{a}_j|^{\alpha/2}  \qquad \text{such that} \qquad \|\tilde{w}_j \|_{L_{\alpha}} =1, \text{ for all } j \in [N] , 
    \end{equation}
    where $\tilde{a}_j = a_j \| w_j \|_{L_{\alpha}}$ and $\tilde{w}_j = w_j /\|w_j \|_{L_{\alpha}}$, and such that $y_i f(\theta, x_i) \geq 1, \text{ for all } i \in [K]$.
\end{theorem}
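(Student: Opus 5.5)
The plan is to show that, for the two-layer network $f((a,w),x)=\sum_{j=1}^N a_j\sigma(w_j^Tx)$ with a $1$-homogeneous activation (so $L=2$), the two problems have the same optimal value and their solutions correspond under a neuron-wise rescaling. The argument uses only two facts: the horizon functions from Table~\ref{table:mirror-potentials}, and the rescaling symmetry $(a_j,w_j)\mapsto(c_ja_j,w_j/c_j)$ with $c_j>0$, which leaves $f$ unchanged.

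\textbf{Step 1 (reduce the objective).} In both cases $\phi_\alpha^2$ is a fixed increasing function of the separable quantity
\[
S(\theta)=\sum_{j=1}^N\bigl(|a_j|^\alpha+\|w_j\|_{L_\alpha}^\alpha\bigr).
\]
For the smoothened homogeneous potential ($\alpha=p$) we have $\phi_\alpha^2=(p-1)^{2/p}S^{2/p}$. For the hyperbolic entropy ($\alpha=1$) we have $\phi_1^2=S^2$. So minimizing $\tfrac12\phi_\alpha^2$ under the margin constraints is equivalent to minimizing $S$ under the same constraints.

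\textbf{Step 2 (optimize out the rescaling).} For each neuron I would minimize over $c_j>0$. By AM--GM,
\[
c_j^\alpha|a_j|^\alpha+c_j^{-\alpha}\|w_j\|_{L_\alpha}^\alpha\;\ge\;2|a_j|^{\alpha/2}\|w_j\|_{L_\alpha}^{\alpha/2}=2|\tilde a_j|^{\alpha/2},
\]
with equality exactly at the balanced point $|c_ja_j|=\|w_j\|_{L_\alpha}/c_j$. Hence
\[
\inf_{c}S=2\sum_j|\tilde a_j|^{\alpha/2},
\]
and this infimum depends on $(a_j,w_j)$ only through $\tilde a_j=a_j\|w_j\|_{L_\alpha}$.

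\textbf{Step 3 (rewrite the constraints).} By positive homogeneity of $\sigma$,
\[
a_j\sigma(w_j^Tx)=\tilde a_j\,\sigma(\tilde w_j^Tx)\qquad\text{with }\|\tilde w_j\|_{L_\alpha}=1.
\]
So the constraints $y_if(\theta,x_i)\ge1$ are exactly constraints on $(\tilde a,\tilde w)$. If $w_j=0$, the neuron contributes nothing; I set $\tilde a_j=0$ and choose any unit $\tilde w_j$.

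\textbf{Step 4 (conclude both directions).}
\begin{itemize}
\item Any feasible $\theta$ gives a feasible pair $(\tilde a,\tilde w)$ whose objective $\sum_j|\tilde a_j|^{\alpha/2}$ is at most $S(\theta)/2$.
\item Conversely, a feasible $(\tilde a,\tilde w)$ lifts to the balanced network $a_j=\mathrm{sign}(\tilde a_j)|\tilde a_j|^{1/2}$, $w_j=|\tilde a_j|^{1/2}\tilde w_j$, which is feasible and attains $S=2\sum_j|\tilde a_j|^{\alpha/2}$.
\end{itemize}
The two problems therefore have equal optimal values, and their minimizers correspond.

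\textbf{Main obstacle.} I expect the delicate part to be the notion of equivalence. Theorems~\ref{theorem : KKT max margin result} and \ref{theorem : KKT max margin result hyperbolic} give KKT points, not global minimizers, so global equivalence alone is not enough. To transfer stationarity I would first show that every KKT point is balanced. The argument is to take the inner product of the stationarity condition $\nabla_\theta\tfrac12\phi_\alpha^2=\sum_i\mu_i\nabla_\theta y_if$ with the rescaling direction $(a_j,-w_j)$. The right-hand side vanishes, because $f$ is invariant along that direction. The left-hand side is then proportional to $|a_j|^\alpha-\|w_j\|_{L_\alpha}^\alpha$, which forces balance. Given balance, the KKT conditions in $(a,w)$ map to those in $(\tilde a,\tilde w)$, with a multiplier for the sphere constraint $\|\tilde w_j\|_{L_\alpha}=1$. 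For $\alpha=1$, $S$ is nonsmooth, so both this balance argument and the AM--GM step must be done with Clarke subgradients. This is where I expect most of the care to go.
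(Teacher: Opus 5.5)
Your proposal is correct, but it obtains the balance condition by a different route than the paper.

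\textbf{The paper's route.} The paper's proof is dynamical. It uses a neuron-wise version of the conserved quantity from Lemma~\ref{lemma : invariance balance}; for the hyperbolic entropy this is the constancy of $\sqrt{a_{j,t}^2+\lambda}-\sum_i\sqrt{w_{j,i,t}^2+\lambda}$. Dividing by the diverging parameter norm, it concludes that the limit direction satisfies $|\bar a_j|=\|\bar w_j\|_1$ (resp.\ $|\bar a_j|^p=\|\bar w_j\|_p^p$). It then treats this as an additional constraint. On that set it rewrites $\|\theta\|_1=2\|a\|_1=2\sum_j\sqrt{|a_j|\,\|w_j\|_1}$, and finally changes variables using the rescaling symmetry.

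\textbf{Your route.} You derive balance from the optimization problem itself. AM--GM over the per-neuron rescaling shows that minimizers are automatically balanced and that the two problems have equal optimal values. Pairing the stationarity condition with the invariance direction $(a_j,-w_j)$ shows that every KKT point is balanced. For $\alpha=1$ this step is easier than you fear: every $g\in\partial\|\theta\|_1$ satisfies $g_k\theta_k=|\theta_k|$, so the pairing gives $\|\theta\|_1\bigl(|a_j|-\|w_j\|_1\bigr)=0$ directly.

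\textbf{What each route buys.} Yours is independent of the dynamics and the initialization. It does not need the neuron-wise conservation law; Lemma~\ref{lemma : invariance balance} states that law only layer-wise, although the same proof gives it. It also supplies the step the paper leaves implicit, namely why adding balance as a constraint does not change the problem. The paper's route instead ties the reformulation to the balance equation that drives the rest of its analysis. It also shows directly that the limit direction of the flow is balanced.

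\textbf{A caution on the KKT transfer.} For $\alpha=1$ the reformulated objective $\sum_j|\tilde a_j|^{1/2}$ is not locally Lipschitz at $\tilde a_j=0$. Clarke-type KKT conditions in the $(\tilde a,\tilde w)$ variables are therefore only meaningful on the support, so it is cleaner to state that correspondence in the original variables. The paper does not attempt this transfer at all, so this is not a gap relative to its proof.
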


\begin{proof}[Proof sketch] 
    It follows from the balance equation for the %
    parameter iterates and the rescale invariance of the two-layer neural network. 
    The full proof is presented in Appendix \ref{subsection : reform}. 
\end{proof}

Theorem~\ref{theorem : reform KKT main} reveals that different values of the homogeneity degree $\alpha$ lead to different forms of feature learning. 
Specifically, for the hyperbolic entropy ($\alpha =1$) mirror flow is biased towards networks with fewer active neurons, whereas for smoothened homogeneous potentials ($\alpha \geq 2$) it is biased towards networks with more active neurons. 
This is illustrated numerically in Figure~\ref{fig: feature learning}. Note in particular that all the considered mirrors exhibit feature learning in the sense that the input weights of the neurons move significantly from their random initial positions. 

\begin{wrapfigure}{r}{0.4\textwidth}
 \vspace{-.75cm}
  \centering
  \includegraphics[width=0.4\textwidth]{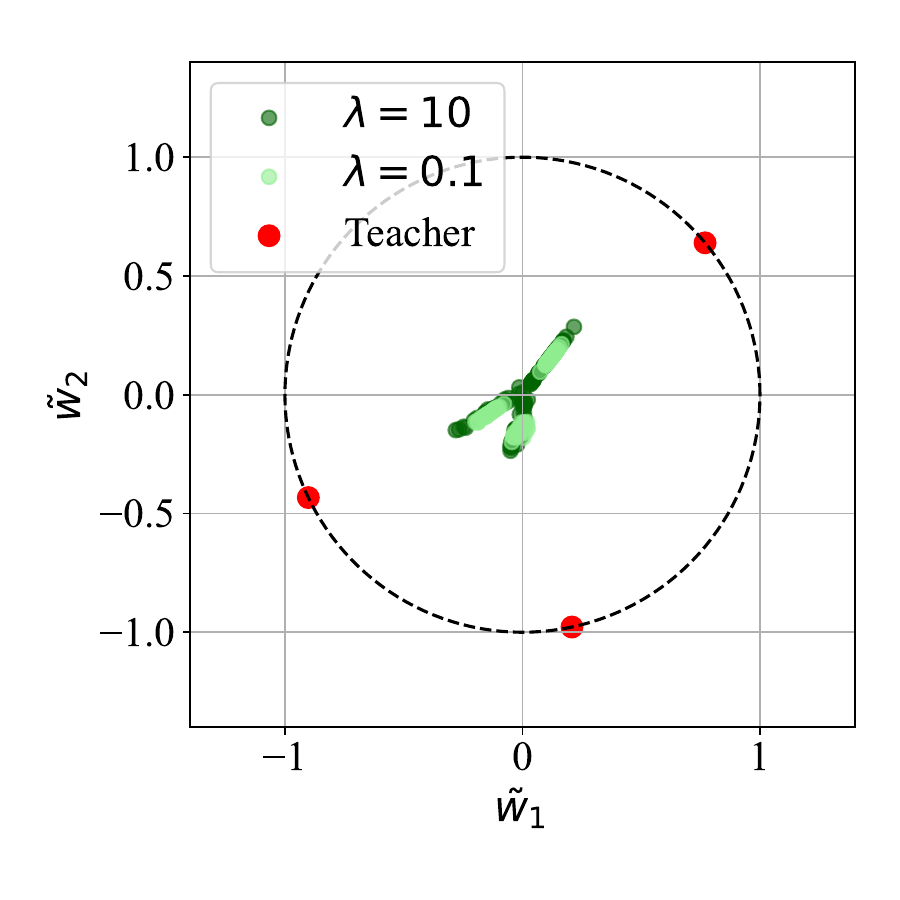}
 \vspace{-.9cm}  
  \caption{Training with smoothened homogeneous potential ($p=3$).
  This shows that increasing $\lambda >0$ slows convergence toward the max-margin solution, so that for finite training time the learned representation remains closer to that of gradient descent.
  }\label{fig : L3 finding different representation}
 \vspace{-1.2cm}
\end{wrapfigure}

\paragraph{Reaching the margin.} 
The results presented so far characterize the implicit bias of mirror flow through the introduction of a $Q$-margin and its maximization in the late phase of training. 
Reaching the max-margin solution, determined by the (homogeneous) horizon function, critically depends on the rate of parameter growth. 
In particular, convergence to the max-margin direction can be extremely slow, in some cases even exponentially slow, as illustrated in Figure~\ref{fig:horizon apporox}. 
As established in Theorems~\ref{theorem : KKT max margin result} and \ref{theorem : KKT max margin result hyperbolic}, the limiting direction is characterized by the horizon function $\phi_{\alpha}$. However, along the dynamics, it is the $Q$-margin that governs parameter growth. 
Thus it is of interest how quickly the $Q$-margin approximates the $\phi_{\alpha}$-margin. We characterize this next.

\begin{lemma} 
\label{lemma : hyperbolic entropy alignment horizon} 
    For the hyperbolic entropy and smoothened homogeneous mirror flow $R_{\lambda} : \mathbb{R}^n \rightarrow \mathbb{R}$, we have that once the time surpasses $\Omega(\exp ((\sqrt{\lambda} n)^L))$ and $\Omega(\exp (\left(\tfrac{1}{2} {p \lambda n} \right)^{L/p}))$, respectively, 
    the relative difference between $Q_{\lambda}(\nabla R_{\lambda}(\theta))$ and $\phi_1(\theta)$ is $O(1)$.  
\end{lemma}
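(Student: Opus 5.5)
We combine two ingredients: a pointwise comparison between $Q_\lambda(\nabla R_\lambda(\theta))$ and its horizon function, and the growth rate of $Q$ along the flow from Theorem~\ref{theorem : convergence and growth rates}.

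\emph{Hyperbolic entropy.} By Table~\ref{table:mirror-potentials}, $Q_\lambda(\nabla R_\lambda(\theta))=\sum_{i=1}^n\sqrt{\theta_i^2+\lambda}$, with $\alpha=1$ and $\phi_1(\theta)=\|\theta\|_1$. Since
\[
0\le \sqrt{\theta_i^2+\lambda}-|\theta_i|\le\sqrt{\lambda},
\]
we obtain $\|\theta\|_1\le Q_\lambda\le\|\theta\|_1+n\sqrt{\lambda}$. Hence the relative difference $(Q_\lambda-\phi_1)/\phi_1$ is at most $n\sqrt{\lambda}/\|\theta\|_1$. This bound is also essentially tight near the origin. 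It is $O(1)$ as soon as $\|\theta\|_1\gtrsim n\sqrt{\lambda}$, equivalently as soon as $Q_\lambda\gtrsim n\sqrt{\lambda}$.

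\emph{Smoothened homogeneous case.} Here $Q_\lambda=\tfrac1q\|\theta\|_p^p+\tfrac\lambda2\|\theta\|_2^2$ with $\alpha=p$, and $\phi_p^p=(p-1)\|\theta\|_p^p$, so $pQ_\lambda=\phi_p^p+\tfrac{p\lambda}{2}\|\theta\|_2^2$. Using $\|\theta\|_2^2\le n^{1-2/p}\|\theta\|_p^2$ for $p\ge2$, the relative excess is
\[
\frac{pQ_\lambda-\phi_p^p}{\phi_p^p}\;\lesssim\;\frac{p\lambda n}{2}\,\|\theta\|_p^{2-p},
\]
up to constants depending only on $p$. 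This is $O(1)$ once $\|\theta\|_p^{p-2}\gtrsim \tfrac12 p\lambda n$, i.e.\ once $(pQ_\lambda)^{1/p}$ is of order $(\tfrac12 p\lambda n)^{1/p}$ or larger. I will not chase the exact exponent bookkeeping in $n$; the target threshold is the quantity appearing in the statement.

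\emph{Translation to time.} Theorem~\ref{theorem : convergence and growth rates} gives $(\alpha Q(\nabla R(\theta_t)))^{L/\alpha}=\Theta(\log t)$ for $\alpha\in[1,2]$, and $\Omega(\log t)$ for $\alpha>2$.
\begin{itemize}
\item For the hyperbolic entropy the requirement $Q_\lambda\gtrsim n\sqrt\lambda$ becomes $Q_\lambda^L\gtrsim(\sqrt\lambda n)^L$, i.e.\ $\log t\gtrsim(\sqrt\lambda n)^L$, i.e.\ $t=\Omega(\exp((\sqrt\lambda n)^L))$.
\item For the smoothened potential the requirement $(pQ_\lambda)^{L/p}\gtrsim(\tfrac12p\lambda n)^{L/p}$ becomes $t=\Omega(\exp((\tfrac12 p\lambda n)^{L/p}))$.
\end{itemize}

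Two directions of the argument need care. In the $\Theta$ regime the upper bound $\log t$ shows that before such times $Q$ may still be too small, so the time scale is genuinely necessary. Conversely, reaching the threshold guarantees the $O(1)$ relative difference, since both comparison bounds above decrease monotonically in $\|\theta\|$.

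\emph{Main obstacle.} The hardest step is making the time conversion rigorous rather than heuristic. The constants hidden in $\Theta(\log t)$ depend on the data margin and on the initialization, and could themselves depend on $\lambda$. The first approach would be to track these constants explicitly in the proof of Theorem~\ref{theorem : convergence and growth rates}, by integrating Eq.~(\ref{equation : growth Q and loss}) together with the upper bound obtained from Assumption~\ref{assumption : semihomogeneous}. The goal is to show they are $\lambda$-independent up to the additive offset at $t_0$. For $p>2$ only the $\Omega$ bound is available, so the statement should be read as a sufficient time scale.
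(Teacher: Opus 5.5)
\textbf{Hyperbolic case: same as the paper.} Your hyperbolic-entropy argument is the paper's own (Corollary~\ref{corollary : hyperbolic entropy reachability}). It uses the sandwich $\|\theta\|_1\le Q_\lambda\le\|\theta\|_1+n\sqrt{\lambda}$, i.e.\ $c=1$ and $a=n\sqrt{\lambda}$ in Assumption~\ref{assumption : bounded Q and horizon}, combined with $Q^L=\Omega(\log t)$. Normalizing by $\phi_1$ rather than $Q_\lambda$ changes nothing there.

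\textbf{Smoothened case: genuine gap.} Your estimate bounds the relative excess by $C_p\,\lambda n\,\|\theta\|_p^{2-p}$. This is $O(1)$ only once $\|\theta\|_p$ is at least of order $(\tfrac{1}{2}p\lambda n)^{1/(p-2)}$. The ``i.e.'' equating this with $(pQ_\lambda)^{1/p}$ being of order $(\tfrac{1}{2}p\lambda n)^{1/p}$ is false. Fed into $(pQ_\lambda)^{L/p}=\Omega(\log t)$, your criterion gives a time scale of order $\exp\bigl((\tfrac{1}{2}p\lambda n)^{L/(p-2)}\bigr)$, not the stated one. At $p=2$ the excess is identically $\lambda$, independent of $\theta$, so your criterion yields no threshold at all. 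This is not $n$-bookkeeping. With the sharp factor $n^{1-2/p}$ your threshold reads $\|\theta\|_p\ge C_p\,\lambda^{1/(p-2)}n^{1/p}$: the $n$-dependence already matches the statement's $n^{1/p}$, and what differs is the exponent of $\lambda$, $1/(p-2)$ instead of $1/p$. As written, the stated threshold is asserted rather than derived.

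\textbf{What the paper does instead.} The missing idea is the paper's additive/multiplicative split at the level of $(pQ_\lambda)^{1/p}$ (Corollary~\ref{corollary : smoothened homogeneous potentials reachability} with Lemma~\ref{lemma : Q and horizon alignment}). For $p\ge 2$ one has $z^2\le|z|^p+1$, so $\|\theta\|_2^2\le\|\theta\|_p^p+n$. Together with subadditivity of $x\mapsto x^{1/p}$ this gives $\phi_p\le(pQ_\lambda)^{1/p}\le c\,\phi_p+a$, with $c=\bigl((p-1+\tfrac{p\lambda}{2})/(p-1)\bigr)^{1/p}$ and $a=(\tfrac{1}{2}p\lambda n)^{1/p}$.
\begin{itemize}
\item The relative difference normalized by $(pQ_\lambda)^{1/p}$ is then at most $(c-1)+a/(pQ_\lambda)^{1/p}$.
\item The part of the $\lambda$-term scaling like $\|\theta\|_p^p$ is absorbed into $c$. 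This $c$ is treated as a fixed constant, and the $(c-1)$ term is never driven to zero.
\item Only the additive offset has to be dominated. That happens once $(pQ_\lambda)^{1/p}\ge a$, which by $(pQ_\lambda)^{L/p}=\Omega(\log t)$ holds after $\Omega(\exp(a^L))$ time, with $a^L=(\tfrac{1}{2}p\lambda n)^{L/p}$.
\end{itemize}
Your criterion instead requires the whole $\tfrac{p\lambda}{2}\|\theta\|_2^2$ term to be dominated by $\|\theta\|_p^p$, and that is exactly what produces the exponent $1/(p-2)$. To get the stated time you must adopt the paper's reading of ``$O(1)$'' as control of the offset term only.

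Your concern that the constant in $\Omega(\log t)$ may depend on $\lambda$ is legitimate. The paper's proof does not track it either, so it is a looseness both arguments share, not a step the paper resolves.
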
 
\begin{proof}
    See Appendix \ref{appendix : alignment}. %
\end{proof}

Lemma~\ref{lemma : hyperbolic entropy alignment horizon} can be used to select training hyperparameters mitigating slow margin alignment.  
The takeaway is that the hyperparameter $\lambda$ should to be relatively small compared to $n$ if we want to reach the max-margin solution in sub-exponential time. 
For example, it is sufficient 
to choose $\lambda =1/ n^2$ for the hyperbolic entropy and $\lambda = 1/pn$ for the smoothened potentials. 
Note that otherwise it can %
take an exponential time in $n$ to reach the max-margin solution, effectively making it unreachable. This is shown in Figure \ref{fig : L3 finding different representation}, for smoothened homogeneous potential in a two layer student-teacher setting. 

\section{Experimental validation}
We validate here our theoretical results. First we confirm Theorem~\ref{theorem : reform KKT main} by considering a two-layer network trained in a student-teacher setup. 
Next, we test the mirror descent algorithms on a standard vision task, CIFAR10. 
The full training details and additional ablations are provided in Appendix~\ref{appendix : experiments}.

\paragraph{Sparse and dense feature learning.}
To illustrate Theorem~\ref{theorem : reform KKT main}, we consider a teacher network with $3$ neurons together with two- and three-layer student networks of hidden dimension $100$. 
The final parameters are shown in Figure~\ref{fig: feature learning}. 
We train using (i) hyperbolic entropy, (ii) gradient descent, and (iii) a smoothened homogeneous mirror potential ($p=3$). 
Observe that all $3$ optimization procedures exhibit feature learning. 
However, as predicted by the theorem, smaller homogeneity degrees ($\alpha=1$) produce solutions with fewer active neurons or weights, 
whereas larger values ($\alpha=3$) produce solutions with more active neurons or weights. 
We refer to these regimes as sparse and dense feature learning. 
Furthermore, Figure~\ref{fig : L3 finding different representation} validates the dependence on $\lambda > 0$, confirming Lemma~\ref{lemma : hyperbolic entropy alignment horizon}. 

\paragraph{Shaping representation structure in a vision task.} 
We train a VGG-16 \citep{Simonyan2014VeryDC} on 
CIFAR-10 \cite{cifar}, initialized in the lazy regime, using a hyperparameter sweep over $\lambda$\footnote{Here $\lambda$ is layerwise rescaled correcting for the width of each layer as detailed in Appendix~\ref{appendix : experiments}.} and the 
learning rate (details in Appendix~ \ref{appendix : experiments}). 
Hyperbolic entropy achieves the best validation 
accuracy (Table~\ref{tab:results} in Appendix~\ref{appendix : experiments}). 
The weight distributions in Figure~\ref{fig : weight dists} confirm the sparse and dense feature learning patterns observed in the student-teacher setting, and show that $\lambda$ must be small to alter the weight representation, consistent with Lemma~\ref{lemma : hyperbolic entropy alignment horizon}. 
Furthermore, as a consequence of the changed weight distribution, pruning the network leads to reduced performance degradation for the hyperbolic entropy and increased degradation for the smoothened homogeneous potential. 
Moreover, we find that the weight distribution of the last layer remains unchanged (Figure~\ref{fig: hist last layer} in Appendix~\ref{appendix : experiments}), consistent with training under standard parameterization (SP) \citep{pmlr-v139-yang21c}. 

\begin{figure}[ht!]
    \centering
    \subfigure
    {
    \includegraphics[width=0.32\textwidth]{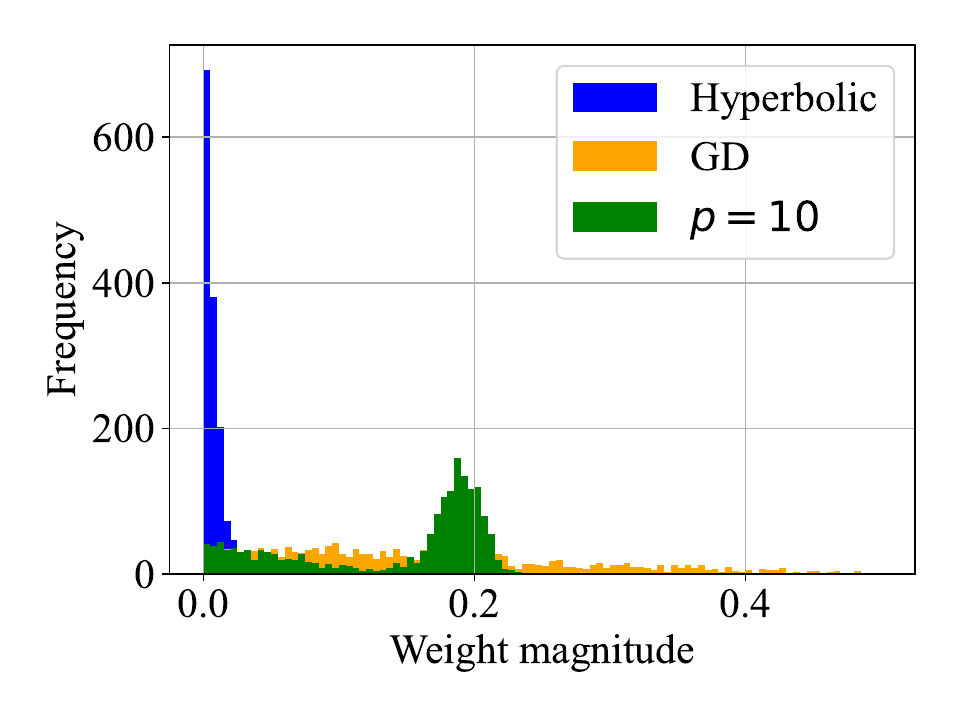}
        \label{fig: dense vs sparse}
    }
    \subfigure
    {\includegraphics[width=0.32\textwidth]{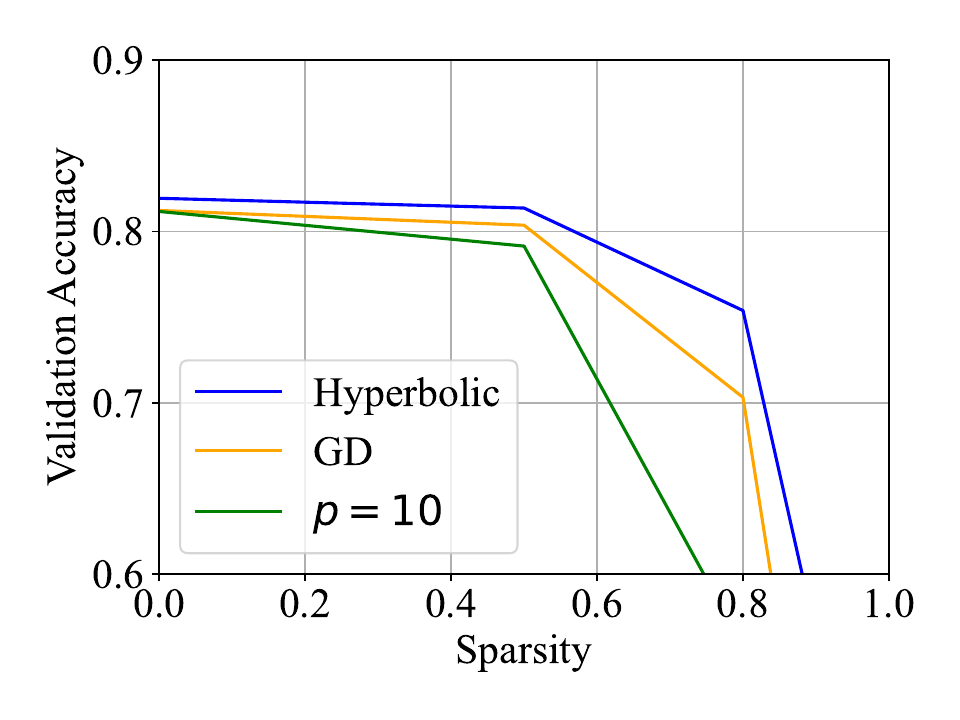}
        \label{fig: sparsity}}
    \subfigure
    {\includegraphics[width=0.32\textwidth]{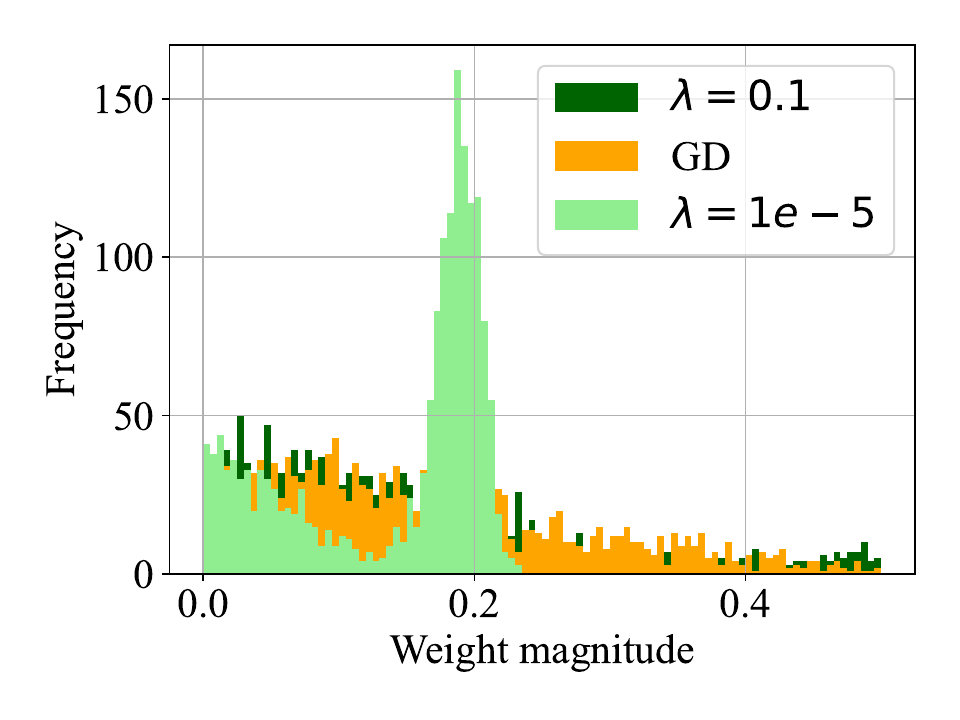}
        \label{fig: lambda large small}}

    \caption{Weight distributions for the first layer of a VGG-16 and weight pruning. (Left) We show the distribution for $3$ mirror maps leading to dense and sparse representations. (Middle) Validation accuracy versus sparsity, we can prune using layer-wise magnitude pruning more weights when training with the hyperbolic entropy and less with the homogeneous potential ($p =10$) due to the change in weight distribution. (Right) Homogeneous mirror potential with $p=10$ both for large and small $\lambda$, large $\lambda$ leads to a similar representation as gradient descent (GD), confirming Lemma~\ref{lemma : hyperbolic entropy alignment horizon}.}
    \label{fig : weight dists}
\end{figure}

\section{Conclusion and future work}
We established directional late-phase implicit bias results for a class of mirror descent algorithms in homogeneous neural networks. Our analysis is based on a novel balance equation derived via the Fenchel-Young identity. 
In addition, we show that the hyperparameter $\lambda > 0$ plays a critical role in whether the implicit bias induced by mirror flow is realized in practice, as large values of $\lambda$ can substantially slow convergence to the corresponding max-margin solution. 
Together, these results %
show how %
mirror %
geometry steers optimization toward distinct max-margin solutions and induces either 
sparse or dense feature learning depending on the chosen geometry. 

In particular, the $L_1$ max-margin promotes sparsity by selecting a small subset of active parameters. In contrast, the $L_p$ max-margin for $p \geq 2$ encourages dense but homogeneous weight distributions, where weights concentrate around similar magnitudes. 
While the former aligns naturally with sparse feature selection, the latter may benefit quantization, as more uniform weight scales facilitate mapping parameters to discrete levels. 
Overall, our results suggest that mirror geometry provides a unified mechanism for steering models toward either sparse or quantization-friendly representations. 

Finally, this raises several questions beyond the scope of the present work, including extending Theorem~\ref{theorem : KKT max margin result} to the $\alpha < 2$ regime (discussed in Appendix \ref{kkt assumption}), as well as understanding the effects of finite learning rate, early stopping, and the generalization properties of these max-margin solutions.

\newpage

\subsection*{Acknowledgments} 
This work was supported in part by the DFG project 464109215 within the Priority Programme SPP 2298 ``Theoretical Foundations of Deep Learning''. 
TJ has been supported by funding from the European
Research Council (ERC) under the Horizon Europe Framework Programme (HORIZON) for proposal
number 101116395 SPARSE-ML. 
GM has been supported in part by 
the DARPA AIQ grant HR00112520014, 
NSF grants 
DMS-2522495, 
DMS-2145630, 
CCF-2212520,  
and the BMFTR in DAAD project 57616814 (SECAI).


\bibliography{refs}

@inproceedings{
pesme2024implicit,
title={Implicit Bias of Mirror Flow on Separable Data},
author={Scott Pesme and Radu-Alexandru Dragomir and Nicolas Flammarion},
booktitle={The Thirty-eighth Annual Conference on Neural Information Processing Systems},
year={2024},
url={https://openreview.net/forum?id=wiMaws0FWB}
}

@inproceedings{
liang2025implicit,
title={Implicit Bias of Mirror Flow for Shallow Neural Networks in Univariate Regression},
author={Shuang Liang and Guido Mont\'ufar},
booktitle={The Thirteenth International Conference on Learning Representations},
year={2025},
url={https://openreview.net/forum?id=IF0Q9KY3p2}
}

@inproceedings{
jacobs2025mirror,
title={Mask in the Mirror: Implicit Sparsification},
author={Tom Jacobs and Rebekka Burkholz},
booktitle={The Thirteenth International Conference on Learning Representations},
year={2025},
url={https://openreview.net/forum?id=U47ymTS3ut}
}

@inproceedings{
jacobs2025mirror2,
title={Mirror, Mirror of the Flow: How Does Regularization Shape Implicit Bias?},
author={Tom Jacobs and Chao Zhou and Rebekka Burkholz},
booktitle={Forty-second International Conference on Machine Learning},
year={2025},
url={https://openreview.net/forum?id=MLiR9LS5PW}
}

@InProceedings{woodworth2020kernel,
  title = 	 {Kernel and Rich Regimes in Overparametrized Models},
  author =       {Woodworth, Blake and Gunasekar, Suriya and Lee, Jason D. and Moroshko, Edward and Savarese, Pedro and Golan, Itay and Soudry, Daniel and Srebro, Nathan},
  booktitle = 	 {Proceedings of Thirty Third Conference on Learning Theory},
  pages = 	 {3635--3673},
  year = 	 {2020},
  noeditor = 	 {Abernethy, Jacob and Agarwal, Shivani},
  volume = 	 {125},
  series = 	 {Proceedings of Machine Learning Research},
  month = 	 {09--12 Jul},
  publisher =    {PMLR},
  url = 	 {https://proceedings.mlr.press/v125/woodworth20a.html}
}

@InProceedings{pmlr-v139-yang21c,
  title = 	 {Tensor Programs {IV}: Feature Learning in Infinite-Width Neural Networks},
  author =       {Yang, Greg and Hu, Edward J.},
  booktitle = 	 {Proceedings of the 38th International Conference on Machine Learning},
  pages = 	 {11727--11737},
  year = 	 {2021},
  noeditor = 	 {Meila, Marina and Zhang, Tong},
  volume = 	 {139},
  series = 	 {Proceedings of Machine Learning Research},
  month = 	 {18--24 Jul},
  publisher =    {PMLR},
  url = 	 {https://proceedings.mlr.press/v139/yang21c.html}
}

@article{Sun2023AUA,
  title={A Unified Approach to Controlling Implicit Regularization via Mirror Descent},
  author={Haoyuan Sun and Khashayar Gatmiry and Kwangjun Ahn and Navid Azizan},
  journal={ArXiv},
  year={2023},
  volume={abs/2306.13853},
  url={https://api.semanticscholar.org/CorpusID:259251984}
}

@inproceedings{
Li2022ImplicitBO,
title={Implicit Bias of Gradient Descent on Reparametrized Models: On Equivalence to Mirror Descent},
author={Zhiyuan Li and Tianhao Wang and Jason D. Lee and Sanjeev Arora},
booktitle={Advances in Neural Information Processing Systems},
noeditor={Alice H. Oh and Alekh Agarwal and Danielle Belgrave and Kyunghyun Cho},
year={2022},
url={https://openreview.net/forum?id=k4KHXS6_zOV}
}

@inproceedings{
Lyu2020Gradient,
title={Gradient Descent Maximizes the Margin of Homogeneous Neural Networks},
author={Kaifeng Lyu and Jian Li},
booktitle={International Conference on Learning Representations},
year={2020},
url={https://openreview.net/forum?id=SJeLIgBKPS}
}

@article{Heeringa2025DeepNA,
  title={Deep Networks are Reproducing Kernel Chains},
  author={Tjeerd Jan Heeringa and Len Spek and Christoph Brune},
  journal={ArXiv},
  year={2025},
  volume={abs/2501.03697},
  url={https://api.semanticscholar.org/CorpusID:275342340}
}

@misc{soudry2017implicitbiasgradientdescent,
      title={The Implicit Bias of Gradient Descent on Separable Data}, 
      author={Daniel Soudry and Elad Hoffer and Mor Shpigel Nacson and Suriya Gunasekar and Nathan Srebro},
      year={2017},
      eprint={1710.10345},
      archivePrefix={arXiv},
      primaryClass={stat.ML},
      url={https://arxiv.org/abs/1710.10345}, 
}

@inproceedings{marcotte2025transformativeconservativeconservationlaws,
title={Transformative or Conservative? Conservation laws for ResNets and Transformers},
author={Sibylle Marcotte and R{\'e}mi Gribonval and Gabriel Peyr{\'e}},
booktitle={Forty-second International Conference on Machine Learning},
year={2025},
url={https://openreview.net/forum?id=aTBwCSkPxv}
}

@inproceedings{Jin2017HowTE,
  title={How to Escape Saddle Points Efficiently},
  author={Chi Jin and Rong Ge and Praneeth Netrapalli and Sham M. Kakade and Michael I. Jordan},
  booktitle={International Conference on Machine Learning},
  year={2017},
  url={https://api.semanticscholar.org/CorpusID:14198632}
}

@misc{marcotte2024momentumconservationlawseuclidean,
      title={Keep the Momentum: Conservation Laws beyond {E}uclidean Gradient Flows}, 
      author={Sibylle Marcotte and Rémi Gribonval and Gabriel Peyré},
      year={2024},
      eprint={2405.12888},
      archivePrefix={arXiv},
      primaryClass={cs.LG},
      url={https://arxiv.org/abs/2405.12888}, 
}

@inproceedings{
cai2025implicit,
title={Implicit Bias of Gradient Descent for Non-Homogeneous Deep Networks},
author={Yuhang Cai and Kangjie Zhou and Jingfeng Wu and Song Mei and Michael Lindsey and Peter Bartlett},
booktitle={Forty-second International Conference on Machine Learning},
year={2025},
url={https://openreview.net/forum?id=ruSU7xtH6v}
}

@article{epub126385,
           pages = {1437--1460},
          number = {3},
           title = {More is less: inducing sparsity via overparameterization},
            year = {2023},
          author = {Hung-Hsu Chou and Johannes Maly and Holger Rauhut},
         journal = {Information and Inference},
       publisher = {Oxford University Press (OUP)},
             url = {},
        volume ={12}
}

@article{Chou2023RobustIR,
    author = {Chou, Hung-Hsu and Rauhut, Holger and Ward, Rachel},
    title = {Robust implicit regularization via weight normalization},
    journal = {Information and Inference: A Journal of the IMA},
    volume = {13},
    number = {3},
    pages = {iaae022},
    year = {2024},
    month = {09},
    issn = {2049-8772},
    doi = {10.1093/imaiai/iaae022},
    url = {https://doi.org/10.1093/imaiai/iaae022},
    eprint = {https://academic.oup.com/imaiai/article-pdf/13/3/iaae022/59066595/iaae022.pdf},
}

@inproceedings{Savarese2019HowDI,
  title={How do infinite width bounded norm networks look in function space?},
  author={Pedro H. P. Savarese and Itay Evron and Daniel Soudry and Nathan Srebro},
  booktitle={Annual Conference Computational Learning Theory},
  year={2019},
  url={https://api.semanticscholar.org/CorpusID:61153558}
}

@article{10.5555/902667,
title = {Exponentiated Gradient versus Gradient Descent for Linear Predictors},
journal = {Information and Computation},
volume = {132},
number = {1},
pages = {1--63},
year = {1997},
issn = {0890-5401},
doi = {https://doi.org/10.1006/inco.1996.2612},
url = {https://www.sciencedirect.com/science/article/pii/S0890540196926127},
author = {Jyrki Kivinen and Manfred K. Warmuth}
}

@InProceedings{pmlr-v235-tsoy24a,
  title = 	 {Simplicity Bias of Two-Layer Networks beyond Linearly Separable Data},
  author =       {Tsoy, Nikita and Konstantinov, Nikola},
  booktitle = 	 {Proceedings of the 41st International Conference on Machine Learning},
  pages = 	 {48728--48767},
  year = 	 {2024},
  noeditor = 	 {Salakhutdinov, Ruslan and Kolter, Zico and Heller, Katherine and Weller, Adrian and Oliver, Nuria and Scarlett, Jonathan and Berkenkamp, Felix},
  volume = 	 {235},
  series = 	 {Proceedings of Machine Learning Research},
  month = 	 {21--27 Jul},
  publisher =    {PMLR},
  url = 	 {https://proceedings.mlr.press/v235/tsoy24a.html}
}

@article{Simonyan2014VeryDC,
  title={Very Deep Convolutional Networks for Large-Scale Image Recognition},
  author={Karen Simonyan and Andrew Zisserman},
  journal={CoRR},
  year={2014},
  volume={abs/1409.1556},
  url={https://api.semanticscholar.org/CorpusID:14124313}
}

@article{Majidi2021ExponentiatedGR,
  title={Exponentiated Gradient Reweighting for Robust Training Under Label Noise and Beyond},
  author={Negin Majidi and Ehsan Amid and Hossein Talebi and Manfred K. Warmuth},
  journal={ArXiv},
  year={2021},
  volume={abs/2104.01493},
  url={https://api.semanticscholar.org/CorpusID:233024829}
}

@inproceedings{NEURIPS2023_eb189151,
 author = {Wu, Jingfeng and Braverman, Vladimir and Lee, Jason D},
 booktitle = {Advances in Neural Information Processing Systems},
 noeditor = {A. Oh and T. Naumann and A. Globerson and K. Saenko and M. Hardt and S. Levine},
 pages = {74229--74256},
 publisher = {Curran Associates, Inc.},
 title = {Implicit Bias of Gradient Descent for Logistic Regression at the Edge of Stability},
 url = {https://proceedings.neurips.cc/paper_files/paper/2023/file/eb189151ced0ff808abafd16a51fec92-Paper-Conference.pdf},
 volume = {36},
 year = {2023}
}

@inproceedings{
jacobs2026never,
title={Never Saddle: Reparameterized Steepest Descent as Mirror Flow},
author={Tom Jacobs and Chao Zhou and Rebekka Burkholz},
booktitle={The Fourteenth International Conference on Learning Representations},
year={2026},
url={https://openreview.net/forum?id=YgudIlQ9nC}
}

@inproceedings{
jacobs2026hyperbolic,
title={Hyperbolic Aware Minimization: Implicit Bias for Sparsity},
author={Tom Jacobs and Advait Gadhikar and Celia Rubio-Madrigal and Rebekka Burkholz},
booktitle={The Fourteenth International Conference on Learning Representations},
year={2026},
url={https://openreview.net/forum?id=XKB5Hu0ACY}
}

@misc{Lunk2026SparseTO,
  title={Sparse Training of Neural Networks based on Multilevel Mirror Descent},
  author={Yannick Lunk and Sebastian James Scott and Leon Bungert},
  year={2026},
  url={https://api.semanticscholar.org/CorpusID:285276107}
}

@misc{vaskevicius2019implicitregularizationoptimalsparse,
      title={Implicit Regularization for Optimal Sparse Recovery}, 
      author={Tomas Vaškevičius and Varun Kanade and Patrick Rebeschini},
      year={2019},
      eprint={1909.05122},
      archivePrefix={arXiv},
      primaryClass={stat.ML},
      url={https://arxiv.org/abs/1909.05122}, 
}

@InProceedings{azulay2021implicitbiasinitializationshape,
  title = 	 {On the Implicit Bias of Initialization Shape: Beyond Infinitesimal Mirror Descent},
  author =       {Azulay, Shahar and Moroshko, Edward and Nacson, Mor Shpigel and Woodworth, Blake E and Srebro, Nathan and Globerson, Amir and Soudry, Daniel},
  booktitle = 	 {Proceedings of the 38th International Conference on Machine Learning},
  pages = 	 {468--477},
  year = 	 {2021},
  noeditor = 	 {Meila, Marina and Zhang, Tong},
  volume = 	 {139},
  series = 	 {Proceedings of Machine Learning Research},
  month = 	 {18--24 Jul},
  publisher =    {PMLR},
  url = 	 {https://proceedings.mlr.press/v139/azulay21a.html}
}

@inproceedings{Wu2021ImplicitRI,
  title={Implicit Regularization in Matrix Sensing via Mirror Descent},
  author={Fan Wu and Patrick Rebeschini},
  booktitle={Neural Information Processing Systems},
  year={2021},
  url={https://api.semanticscholar.org/CorpusID:235248126}
}

@inproceedings{gunasekar2017implicitregularizationmatrixfactorization,
 author = {Gunasekar, Suriya and Woodworth, Blake E and Bhojanapalli, Srinadh and Neyshabur, Behnam and Srebro, Nati},
 booktitle = {Advances in Neural Information Processing Systems},
 noeditor = {I. Guyon and U. Von Luxburg and S. Bengio and H. Wallach and R. Fergus and S. Vishwanathan and R. Garnett},
 pages = {},
 publisher = {Curran Associates, Inc.},
 title = {Implicit Regularization in Matrix Factorization},
 url = {https://proceedings.neurips.cc/paper_files/paper/2017/file/58191d2a914c6dae66371c9dcdc91b41-Paper.pdf},
 volume = {30},
 year = {2017}
}

@inproceedings{
domin2024from,
title={From Lazy to Rich: Exact Learning Dynamics in Deep Linear Networks},
author={Cl{\'e}mentine Carla Juliette Domin{\'e} and Nicolas Anguita and Alexandra Maria Proca and Lukas Braun and Daniel Kunin and Pedro A. M. Mediano and Andrew M Saxe},
booktitle={UniReps: 2nd Edition of the Workshop on Unifying Representations in Neural Models},
year={2024},
url={https://openreview.net/forum?id=yAE3LOjgA4}
}

@inproceedings{Karhadkar2024BenignOI,
title={Benign overfitting in leaky {ReLU} networks with moderate input dimension},
author={Kedar Karhadkar and Erin George and Michael Murray and Guido Mont\'ufar and Deanna Needell},
booktitle={The Thirty-eighth Annual Conference on Neural Information Processing Systems},
year={2024},
url={https://openreview.net/forum?id=88TzdGyPT6}
}

@article{DBLP:journals/corr/abs-2405-14813,
  publtype={informal},
  author={Tim Large and Yang Liu and Minyoung Huh and Hyojin Bahng and Phillip Isola and Jeremy Bernstein},
  title={Scalable Optimization in the Modular Norm},
  year={2024},
  cdate={1704067200000},
  journal={CoRR},
  volume={abs/2405.14813},
  url={https://doi.org/10.48550/arXiv.2405.14813}
}

@inproceedings{
bernstein2025modular,
title={Modular Duality in Deep Learning},
author={Jeremy Bernstein and Laker Newhouse},
booktitle={Forty-second International Conference on Machine Learning},
year={2025},
url={https://openreview.net/forum?id=hErdffTsLu}
}

@article{Davis2018StochasticSM,
  title={Stochastic Subgradient Method Converges on Tame Functions},
  author={Damek Davis and Dmitriy Drusvyatskiy and Sham M. Kakade and J. Lee},
  journal={Foundations of Computational Mathematics},
  year={2018},
  volume={20},
  pages={119--154}, 
  url={https://api.semanticscholar.org/CorpusID:5025719}
}

@inproceedings{Marcotte2023AbideBT, 
title={Abide by the law and follow the flow: conservation laws for gradient flows},
author={Sibylle Marcotte and R{\'e}mi Gribonval and Gabriel Peyr{\'e}},
booktitle={Thirty-seventh Conference on Neural Information Processing Systems},
year={2023},
url={https://openreview.net/forum?id=kMueEV8Eyy}
}

@inproceedings{
tsilivis2024flavors,
title={Flavors of Margin: Implicit Bias of Steepest Descent in Homogeneous Neural Networks},
author={Nikolaos Tsilivis and Gal Vardi and Julia Kempe},
booktitle={NeurIPS 2024 Workshop on Mathematics of Modern Machine Learning},
year={2024},
url={https://openreview.net/forum?id=3eDGC0JtZt}
}

@misc{li2021implicitsparseregularizationimpact,
      title={Implicit Sparse Regularization: The Impact of Depth and Early Stopping}, 
      author={Jiangyuan Li and Thanh V. Nguyen and Chinmay Hegde and Raymond K. W. Wong},
      year={2021},
      eprint={2108.05574},
      archivePrefix={arXiv},
      primaryClass={stat.ML},
      url={https://arxiv.org/abs/2108.05574}, 
}

@inproceedings{
li2021resolvingimplicitbiasgradient,
title={Towards Resolving the Implicit Bias of Gradient Descent for Matrix Factorization: Greedy Low-Rank Learning},
author={Zhiyuan Li and Yuping Luo and Kaifeng Lyu},
booktitle={International Conference on Learning Representations},
year={2021},
url={https://openreview.net/forum?id=AHOs7Sm5H7R}
}

@inproceedings{
Marion2024DeepLN,
title={Deep linear networks for regression are implicitly regularized towards flat minima},
author={Pierre Marion and L{\'e}na{\"\i}c Chizat},
booktitle={The Thirty-eighth Annual Conference on Neural Information Processing Systems},
year={2024},
url={https://openreview.net/forum?id=F738WY1Xm4}
}

@inproceedings{arora2019implicitregularizationdeepmatrix,
 author = {Arora, Sanjeev and Cohen, Nadav and Hu, Wei and Luo, Yuping},
 booktitle = {Advances in Neural Information Processing Systems},
 noeditor = {H. Wallach and H. Larochelle and A. Beygelzimer and F. d\textquotesingle Alch\'{e}-Buc and E. Fox and R. Garnett},
 pages = {},
 publisher = {Curran Associates, Inc.},
 title = {Implicit Regularization in Deep Matrix Factorization},
 url = {https://proceedings.neurips.cc/paper_files/paper/2019/file/c0c783b5fc0d7d808f1d14a6e9c8280d-Paper.pdf},
 volume = {32},
 year = {2019}
}

@inproceedings{lyu2024dichotomyearlylatephase,
  author={Kaifeng Lyu and Jikai Jin and Zhiyuan Li and Simon Shaolei Du and Jason D. Lee and Wei Hu},
  title={Dichotomy of Early and Late Phase Implicit Biases Can Provably Induce Grokking},
  year={2024},
  cdate={1704067200000},
  url={https://openreview.net/forum?id=XsHqr9dEGH},
  booktitle={ICLR},
}

@article{Julistiono2024OptimizingAW,
  title={Optimizing Attention with Mirror Descent: Generalized Max-Margin Token Selection},
  author={Aaron Alvarado Kristanto Julistiono and Davoud Ataee Tarzanagh and Navid Azizan},
  journal={ArXiv},
  year={2024},
  volume={abs/2410.14581},
  url={https://api.semanticscholar.org/CorpusID:273482797}
}

@book{filippov1988differential,
  title={Differential Equations with Discontinuous Right-Hand Sides},
  author={Filippov, A. F.},
  publisher={Springer Dordrecht},
  year={1988}
}

@book{clarke1983optimization,
  title={Optimization and Nonsmooth Analysis},
  author={Clarke, Frank H.},
  publisher={Wiley-Interscience},
  year={1983}
}

@inproceedings{
xie2024implicit,
title={Implicit Bias of {AdamW}: $\ell_\infty$-Norm Constrained Optimization},
author={Shuo Xie and Zhiyuan Li},
booktitle={Forty-first International Conference on Machine Learning},
year={2024},
url={https://openreview.net/forum?id=CmXkdlO6JJ}
}

@inproceedings{
zhang2024the,
title={The Implicit Bias of Adam on Separable Data},
author={Chenyang Zhang and Difan Zou and Yuan Cao},
booktitle={The Thirty-eighth Annual Conference on Neural Information Processing Systems},
year={2024},
url={https://openreview.net/forum?id=xRQxan3WkM}
}

@techreport{cifar,
  title={Learning multiple layers of features from tiny images},
  author={Krizhevsky, Alex and Hinton, Geoffrey and others},
  year={2009},
  publisher={Toronto, ON, Canada},
  institution = {University of Toronto},
}

@inproceedings{
moon2026minor,
title={Minor First, Major Last: A Depth-Induced Implicit Bias of Sharpness-Aware Minimization},
author={Chaewon Moon and Dongkuk Si and Chulhee Yun},
booktitle={The Fourteenth International Conference on Learning Representations},
year={2026},
url={https://openreview.net/forum?id=ErnnE2UNI2}
}

@inproceedings{NEURIPS2021_f4661398,
 author = {Pesme, Scott and Pillaud-Vivien, Loucas and Flammarion, Nicolas},
 booktitle = {Advances in Neural Information Processing Systems},
 noeditor = {M. Ranzato and A. Beygelzimer and Y. Dauphin and P.S. Liang and J. Wortman Vaughan},
 pages = {29218--29230},
 publisher = {Curran Associates, Inc.},
 title = {Implicit Bias of SGD for Diagonal Linear Networks: a Provable Benefit of Stochasticity},
 url = {https://proceedings.neurips.cc/paper_files/paper/2021/file/f4661398cb1a3abd3ffe58600bf11322-Paper.pdf},
 volume = {34},
 year = {2021}
}

@inproceedings{
gadhikar2024masks,
title={Masks, Signs, And Learning Rate Rewinding},
author={Advait Harshal Gadhikar and Rebekka Burkholz},
booktitle={The Twelfth International Conference on Learning Representations},
year={2024},
url={https://openreview.net/forum?id=qODvxQ8TXW}
}

@inproceedings{
gadhikar2026signin,
title={Sign-In to the Lottery: Reparameterizing Sparse Training},
author={Advait Gadhikar and Tom Jacobs and Chao Zhou and Rebekka Burkholz},
booktitle={The Thirty-ninth Annual Conference on Neural Information Processing Systems},
year={2026},
url={https://openreview.net/forum?id=iwKT7MEZZw}
}

@inproceedings{
kolb2025deep,
title={Deep Weight Factorization: Sparse Learning Through the Lens of Artificial Symmetries},
author={Chris Kolb and Tobias Weber and Bernd Bischl and David R{\"u}gamer},
booktitle={The Thirteenth International Conference on Learning Representations},
year={2025},
url={https://openreview.net/forum?id=vNdOHr7mn5}
}

@misc{kolb2025differentiablesparsitydgatingsimple,
      title={Differentiable Sparsity via $D$-Gating: Simple and Versatile Structured Penalization}, 
      author={Chris Kolb and Laetitia Frost and Bernd Bischl and David Rügamer},
      year={2025},
      eprint={2509.23898},
      archivePrefix={arXiv},
      primaryClass={cs.LG},
      url={https://arxiv.org/abs/2509.23898}, 
}
\bibliographystyle{plain}

\newpage 

\appendix

\listofappendices

\newpage

\section{Extended related work} 
\label{appendix : extended related work}
\addcontentsline{apx}{section}{\protect\numberline{\thesection}Extended related work}

\paragraph{Implicit regularization results.} 

Implicit bias of gradient-based optimization has been widely studied as an explanation for generalization in overparameterized models. Prior work shows that gradient descent induces structured solutions such as low-rank or sparse representations even without explicit regularization in regression settings~\citep{gunasekar2017implicitregularizationmatrixfactorization,arora2019implicitregularizationdeepmatrix,vaskevicius2019implicitregularizationoptimalsparse,li2021implicitsparseregularizationimpact, woodworth2020kernel}. Extensions analyze the role of depth, initialization, gradient noise, and training dynamics, including phase transitions such a grokking and alternative optimizers ~\citep{li2021resolvingimplicitbiasgradient,Marion2024DeepLN,azulay2021implicitbiasinitializationshape,lyu2024dichotomyearlylatephase,xie2024implicit,zhang2024the, moon2026minor, NEURIPS2021_f4661398}. More specifically for $\alpha =2$, the optimization problem in Theorem \ref{theorem : reform KKT main} resembles the cost minimization problem for infinite width two layer neural networks studied in \citep{Savarese2019HowDI}.

From a theoretical perspective, nonsmooth analysis provides the natural framework for studying subgradient and gradient flow dynamics in modern machine learning. Classical results on generalized gradients and differential inclusions~\citep{clarke1983optimization,filippov1988differential} formalize the behavior of optimization algorithms in the presence of nonsmooth objectives, and are particularly relevant for architectures involving nonsmooth components such as ReLU activations.

\paragraph{Sparse training.}
Mirror flow has been connected to reparameterized gradient flow \citep{Li2022ImplicitBO} and sparse training using a time-varying mirror flow \citep{jacobs2025mirror}. Sparsity has also been accomplished using deeper reparameterizations \citep{kolb2025deep} and for group sparsity \cite{kolb2025differentiablesparsitydgatingsimple, li2021resolvingimplicitbiasgradient}. These structures however do not correspond anymore to a mirror flow as they would violate conditions in \citep{Li2022ImplicitBO}. Moreover, mirror flow can also be used to promote sign flips which have been shown to be important in sparse training \citep{gadhikar2024masks, gadhikar2026signin}.

\paragraph{Hyperparameter transfer.} 
Mirror descent introduces hyperparameters beyond the learning rate, such as the scaling of the mirror map \citep{jacobs2026hyperbolic, Wu2021ImplicitRI}. 
Principled transfer of hyperparameters across model scales has been studied in the tensor programs framework \citep{pmlr-v139-yang21c}, 
which provides conditions under which learning rates and weight decay remain stable as the width grows. 
We extend this perspective to mirror descent, deriving hyperparameter transfer rules that ensure the corresponding max-margin solution remains reachable, with scaling that depends on the network width.

\newpage

\section{Proof of Lemma \ref{lemma : invariance balance}}
\addcontentsline{apx}{section}{\protect\numberline{\thesection}Proof of Lemma \ref{lemma : invariance balance}}
\label{app:proof-lemma-31}

\begin{proof}[Proof of Lemma \ref{lemma : invariance balance}] 
First note that the homogeneity of $f$ implies 
\begin{equation}
    \langle W_i, \nabla_{i} \mathcal{L}(\theta) \rangle_F =  \langle W_{i +1}, \nabla_{i+1} \mathcal{L}(\theta) \rangle_F , \label{eq:homogeneity-induced}
\end{equation}
where $\langle \cdot , \cdot \rangle_F$ denotes the Frobenius inner product. 
Indeed, homogeneity of $f$ implies that the reparameterization 
	\[
	(W_i, W_{i+1}) \mapsto (e^s W_i, e^{-s} W_{i+1})
	\]
leaves $f_\theta$ and hence $\mathcal{L}(\theta)$ invariant. Thus differentiating at $s=0$ yields 
	\[
	0 = \frac{d}{ds} \mathcal{L}(\theta(t))\Big|_{s=0}
	= \langle \nabla_i \mathcal{L}(\theta), W_i \rangle_F
	- \langle \nabla_{i+1} \mathcal{L}(\theta), W_{i+1} \rangle_F,
	\]
	which implies
	\[
	\langle W_i, \nabla_i \mathcal{L}(\theta) \rangle_F
	=
	\langle W_{i+1}, \nabla_{i+1} \mathcal{L}(\theta) \rangle_F. 
	\]

Now let us consider the evolution of the proposed conserved quantity. 
Since $R$ is layer-wise separable, so is $Q$. We have
\begin{align*}
    d \Big( Q_i (\nabla_{i} R_i(W_i)) - Q_{i+1}(\nabla_{i+1} R_{i+1}(W_{i+1}))\Big)
    &= \left\langle \nabla Q_i(\nabla_i R_i(W_i)), \, d\big(\nabla_i R_i(W_i)\big) \right\rangle \\
    &\quad - \left\langle \nabla Q_{i+1}(\nabla_{i+1} R_{i+1}(W_{i+1})), \, d\big(\nabla_{i+1} R_{i+1}(W_{i+1})\big) \right\rangle.
\end{align*}
By the definition of $Q_i$ (as the Legendre dual of $R_i$), we have
\[
\nabla Q_i(\nabla_i R_i(W_i)) = W_i, 
\qquad
\nabla Q_{i+1}(\nabla_{i+1} R_{i+1}(W_{i+1})) = W_{i+1}.
\]
Moreover, along gradient flow,
\[
d\big(\nabla_i R_i(W_i)\big) = - \nabla_i \mathcal{L}(\theta)\, dt,
\qquad
d\big(\nabla_{i+1} R_{i+1}(W_{i+1})\big) = - \nabla_{i+1} \mathcal{L}(\theta)\, dt.
\]
Substituting, we obtain
\begin{align*}
    d \Big( Q_i (\nabla_{i} R_i(W_i)) - Q_{i+1}(\nabla_{i+1} R_{i+1}(W_{i+1}))\Big)
    &= - \langle W_i, \nabla_i \mathcal{L}(\theta) \rangle_F \, dt
    + \langle W_{i+1}, \nabla_{i+1} \mathcal{L}(\theta) \rangle_F \, dt \\
    &= \Big( -\langle W_i, \nabla_i \mathcal{L}(\theta) \rangle_F 
    + \langle W_{i+1}, \nabla_{i+1} \mathcal{L}(\theta) \rangle_F \Big) dt \\
    &= 0,
\end{align*}
where the last equality follows from Eq.~(\ref{eq:homogeneity-induced}). 
This concludes the proof.
\end{proof} 

\newpage

\newpage

\section{Verification of Assumption \ref{assumption : semihomogeneous}} 
\label{appendix : assumption mirrors} 
\addcontentsline{apx}{section}{\protect\numberline{\thesection}Verification of Assumption \ref{assumption : semihomogeneous}}

We provide explicit calculations for the considered mirror maps presented in Table \ref{table:mirror-potentials} to verify that they satisfy Assumption \ref{assumption : semihomogeneous}. 
Note that since all mirror potentials are separable, we can verify the assumption pointwise. 
Note that both types of potentials are asymptotically homogeneous, with $\alpha =1$ for the Hyperbolic entropy, i.e., $R(\theta) \sim \| \theta\|_1$ for $\|\theta\| \rightarrow \infty$, 
and $\alpha = p$ for the smoothened homogeneous potentials, as then $R(\theta) \sim \|\theta\|_p^p$ for $\|\theta\| \rightarrow \infty$. 

\paragraph{Hyperbolic entropy.} 
Recall the hyperbolic entropy with parameter $\lambda$: 
\begin{equation*}
    R(\theta) = \theta \, \text{arcsinh}\Big(\frac{\theta}{\sqrt{\lambda}}\Big) - \sqrt{\theta^2 + \lambda} . 
\end{equation*}
Then we have to show that for all $\lambda \geq 0$: 
\begin{equation*}
    Q(\nabla R(\theta)) \geq \| \theta \|^2_{\nabla^2 R}.
\end{equation*}
Plugging in the exact quantities: 
\begin{equation*}
    \sqrt{\theta^2 + \lambda} \geq \frac{\theta^2}{\sqrt{\theta^2 + \lambda}} ,
\end{equation*}
where the denominator is always positive. 
Rearranging gives: 
\begin{equation*}
    \theta^2 + \lambda \geq \theta^2 , 
\end{equation*}
which is true for all $\lambda \geq 0$, verifying the assumption. 

\paragraph{Smoothened homogeneous potentials.}  
Recall the smoothened homogeneous potentials with parameter $\lambda \geq 0$:
\begin{equation*}
    R(\theta) = \frac{1}{p} | \theta|^p + \frac{\lambda}{2} \theta^2.
\end{equation*}
Then we have to show that for all $\lambda \geq 0$:
\begin{equation*}
    pQ(\nabla R(\theta)) \geq \| \theta \|^2_{\nabla^2 R}. 
\end{equation*}
Plugging in the exact quantities: 
\begin{equation*}
   (p-1) |\theta|^p +  \frac{p\lambda}{2} \theta^2 \geq (p-1)|\theta|^p + \lambda \theta^2 . 
\end{equation*}
Rearranging gives: 
\begin{equation*}
   \frac{p\lambda}{2} \theta^2  \geq \lambda \theta^2 , 
\end{equation*}
which is true for all $\lambda \geq 0$ and $p \geq 2$, verifying the assumption. 
This also includes homogeneous potentials with $\lambda = 0$ and therefore gradient flow. 

\newpage

\section{Proof of Theorem \ref{theorem : main result of margin convergence}} 
\addcontentsline{apx}{section}{\protect\numberline{\thesection}Proof of Theorem \ref{theorem : main result of margin convergence}}
\label{app:proof-Theorem45}

The proof of Theorem~\ref{theorem : main result of margin convergence} is divided into three lemmas. 
First we show that the soft-margin approximates the $Q$-margin as stated in the main text in Lemma \ref{lemma : soft margin equivalence}.

\begin{proof}[Proof of Lemma~\ref{lemma : soft margin equivalence}] 
Observe that $e^{a_{\text{max}}} \leq \sum_{i =1}^K e^{a_{i}}   \leq K e^{a_{\text{max}}}$ holds for $a_{\text{max}} = \max \{a_1 , \ldots , a_K \}$. 
Applying the log we have $a_{\text{max}}\leq \text{LSE}(a_1,  \ldots , a_K) \leq a_{\text{max}} + \text{log} K$. 
Now using the definitions of both $\tilde{\gamma}$ and $\gamma_Q$ gives $\gamma_Q(\theta) - \log K \ \left(\alpha Q(\nabla(R(\theta))\right)^{-L/\alpha}) \leq \tilde{\gamma}_Q(\theta) \leq \gamma_Q(\theta)$. Note that this proof does not rely on the particular definition of $Q$. 
\end{proof} 

Next, for Lemma \ref{lemma : main helper lemma} below 
we want to utilize the following inequality as in \citep{Lyu2020Gradient}: 
\begin{align}\label{equation : growth bound exp loss log}
    -\langle \theta_t , g_t \rangle_2 &=   \langle \theta_t , \sum_{i =1}^K e^{-q_i} \partial^{\circ} q_i \rangle_2 
    = L \sum_{i =1}^K e^{-q_i}  q_i 
    \geq  L \sum_{i =1}^K e^{-q_i}  q_{\text{min}}
     \geq L \mathcal{L}(\theta_t)\, \log\!\bigl(1/\mathcal{L}(\theta_t)\bigr) , 
\end{align}
where the last inequality follows from $e^{-q_{\text{min}}} \leq \mathcal{L}$. 

\begin{lemma}\label{lemma : main helper lemma}
    If the mirror potential $R$ satisfies Assumption \ref{assumption : semihomogeneous} and $\mathcal{L}(\theta_{t_0}) < 1$, then the $Q$-soft margin satisfies: 
    \begin{equation*}
        \frac{d}{dt} \log  Q(\nabla R(\theta_t)) > 0 \qquad \text{ and } \qquad \frac{d}{dt} \log  \tilde{\gamma} \geq L\left( \frac{d}{dt} \log  Q(\nabla R(\theta_t)) \right)^{-1}  E_{tan} \geq 0 , 
    \end{equation*}
    where $E_{tan} := \frac{1}{Q}\left( \|\frac{d}{dt} \theta_t \|_{\nabla^2 R}^2 - \frac{\langle \theta,\nabla^2 R(\theta_t ) \frac{d}{dt} \theta_t\rangle^2}{\|\theta \|_{\nabla^2 R}^2} \right) $. 
\end{lemma}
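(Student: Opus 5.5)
We follow the template of Lemma~5.1 in \citep{Lyu2020Gradient}, but with the Euclidean inner product replaced by the local metric $\nabla^2 R(\theta_t)$ and $\frac12\|\theta\|_2^2$ replaced by $Q := Q(\nabla R(\theta_t))$. Write $\dot\theta := \frac{d}{dt}\theta_t$, so that $\frac{d}{dt}\nabla R(\theta_t) = \nabla^2 R(\theta_t)\dot\theta = -g_t$ for some $g_t\in\partial^\circ\mathcal{L}(\theta_t)$. We also write $\nu := \langle \theta_t, \nabla^2 R(\theta_t)\dot\theta\rangle = -\langle\theta_t,g_t\rangle$.

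\textbf{Step 1: positivity of $\frac{d}{dt}\log Q$.} By Eq.~(\ref{equation : growth Q and loss}) and Eq.~(\ref{equation : growth bound exp loss log}),
\[
\frac{d}{dt}Q = \nu \ge L\mathcal{L}\log(1/\mathcal{L}).
\]
Since the loss is nonincreasing along the flow, we have $\mathcal{L}(\theta_t)\le\mathcal{L}(\theta_{t_0})<1$ for $t\ge t_0$. The right-hand side is therefore strictly positive. We also know $Q>0$: by Fenchel--Young (Lemma~\ref{Lemma : convex duality}) together with Assumption~\ref{assumption : semihomogeneous}, $\alpha Q \ge \|\theta\|_{\nabla^2R}^2>0$ for $\theta\neq 0$, and $\theta_t\ne0$ because $q_{\min}>0$. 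This gives the first claim.

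\textbf{Step 2: the loss derivative.} By the chain rule, $-\frac{d}{dt}\mathcal{L} = \|\dot\theta\|^2_{\nabla^2R}$. Apply Cauchy--Schwarz in the metric $\nabla^2 R$:
\[
\|\dot\theta\|^2_{\nabla^2R} = \frac{\nu^2}{\|\theta\|^2_{\nabla^2R}} + Q\,E_{tan},
\]
which is an exact orthogonal decomposition, and $E_{tan}\ge0$. Applying Assumption~\ref{assumption : semihomogeneous} to the radial part gives $\nu^2/\|\theta\|^2_{\nabla^2R}\ge \nu^2/(\alpha Q)$.

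\textbf{Step 3: differentiate $\log\tilde\gamma_Q = \log\log(1/\mathcal{L}) - \frac{L}{\alpha}\log(\alpha Q)$.} This yields
\[
\frac{d}{dt}\log\tilde\gamma_Q = \frac{-\dot{\mathcal{L}}}{\mathcal{L}\log(1/\mathcal{L})} - \frac{L}{\alpha}\frac{\nu}{Q}.
\]
Insert Step 2 and use $\mathcal{L}\log(1/\mathcal{L}) \le \nu/L$ from Step 1 in the denominator. The first term is then at least
\[
\frac{L}{\nu}\left(\frac{\nu^2}{\alpha Q}+QE_{tan}\right) = \frac{L\nu}{\alpha Q} + \frac{LQE_{tan}}{\nu}.
\]
The radial part $\frac{L\nu}{\alpha Q}$ cancels exactly against the subtracted term. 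What remains is $L\,E_{tan}\,Q/\nu = L\left(\frac{d}{dt}\log Q\right)^{-1}E_{tan}\ge0$, which is the claimed bound.

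\textbf{Main obstacle.} The delicate point is the cancellation in Step 3. The radial part must be bounded using Assumption~\ref{assumption : semihomogeneous} in exactly the direction that makes the factor $1/\alpha$ match the exponent $L/\alpha$ in the soft margin. Bounding the radial part by $\nu^2/(\alpha Q)$ is valid because shrinking a nonnegative term preserves the lower bound; no reverse inequality is needed.

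A further technical issue is that the differential inclusion requires care. All identities hold for a.e.\ $t$ through the chain-rule assumption applied to $\mathcal{L}$ and to $Q\circ\nabla R$; the latter is $C^1$ in $\theta$, since $R$ is $C^2$ and $\nabla Q(\nabla R(\theta))=\theta$. The inequality $\mathcal{L}\log(1/\mathcal{L})\le\nu/L$ also uses Euler's identity for Clarke subgradients, via Eq.~(\ref{equation : growth bound exp loss log}).
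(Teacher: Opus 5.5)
Your proof is correct and follows essentially the same route as the paper. Both use the exact radial/tangential splitting of $\|\dot\theta\|^2_{\nabla^2 R}$, the bound $\dot Q\ge L\,\mathcal{L}\log(1/\mathcal{L})$ from Euler's identity, and Assumption~\ref{assumption : semihomogeneous} on the radial term, which then cancels $\frac{L}{\alpha}\frac{\dot Q}{Q}$. The differences are cosmetic: you apply $1/(\mathcal{L}\log(1/\mathcal{L}))\ge L/\dot Q$ to the whole sum at once rather than only to the tangential term at the end, and you explicitly justify $Q>0$ and $\theta_t\neq 0$, which the paper leaves implicit.
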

\begin{proof} 
We have 
\begin{align*}
    \frac{d}{dt} \log  \tilde{\gamma_t} & =  \frac{d}{dt} \log  \frac{\log (\frac{1}{\mathcal{L}(\theta_t)})}{\left(\alpha Q(\nabla(R(\theta_t))\right)^{L/\alpha}} \\
    &= \frac{d}{dt} \left( \text{log log}(1/\mathcal{L}(\theta_t)) - \frac{L}{\alpha} \log (\alpha Q(\nabla(R(\theta_t)))\right) \\
    &= \frac{-1}{\mathcal{L}(\theta_t)\log (1/\mathcal{L}(\theta_t))} \frac{d}{dt} \mathcal{L}(\theta_t) -\frac{L}{\alpha} \frac{1}{Q(\nabla(R(\theta_t))} \langle \theta_t, \frac{d \nabla R(\theta_t)}{dt}\rangle 
\end{align*}
Now define $\nu_t := \mathcal{L}(\theta_t)\log (1/\mathcal{L}(\theta_t))$, and note that $\nu_t \rightarrow 0$ if $\mathcal{L}(\theta_t) \rightarrow 0$. 
We use the same idea as in \citep[][Lemma 5.1]{Lyu2020Gradient} where the energy can be split into radial and tangential parts. 
We can use that $\frac{d}{dt} \mathcal{L}(\theta_t) = - \| \frac{d}{dt}\theta_t \|_{\nabla^2 R}^2$ and $\dot{Q} = \langle \theta_t, \frac{d \nabla R(\theta_t)}{dt}\rangle \geq L \nu_t > 0 $ to get:
 
\begin{align*}
    \frac{1}{\nu_t}\|\frac{d}{dt} \theta_t\|_{\nabla^2 R}^2  -\frac{L}{\alpha} \frac{1}{Q} \dot{Q} &= \frac{Q}{\nu_t} (E_{tan}) + \frac{1}{\nu_t} \frac{\langle \theta, \frac{d}{dt} \nabla R(\theta) \rangle^2}{\|\theta \|_{\nabla^2 R}^2} - \frac{L}{\alpha} \frac{\dot{Q}}{Q} \\
    &= \frac{Q}{\nu_t} (E_{tan}) + \frac{1}{\nu_t} \frac{\dot{Q}^2}{\|\theta \|_{\nabla^2 R}^2} - \frac{L}{\alpha} \frac{\dot{Q}}{Q} \\
    &=  \frac{Q}{\nu_t} (E_{tan}) + \frac{\dot{Q}}{\alpha Q} \left( \frac{\alpha Q\dot{Q}}{\nu_t  \|\theta \|^2_{\nabla^2 R}} - L \right) \\ &\geq  \frac{Q}{\nu_t}  (E_{tan}) + \frac{L\dot{Q}}{\alpha Q} \left( \frac{\alpha Q}{  \|\theta \|^2_{\nabla^2 R}} - 1 \right) \\
    &\geq \frac{Q}{\nu_t} (E_{tan}) \\
    &\geq \frac{LQ}{\dot{Q}}  (E_{tan})\\
    &= L\left( \frac{d}{dt} \log  Q(\nabla R(\theta_t)) \right)^{-1} E_{\text{tan}}\\
    &\geq 0 . 
\end{align*}
Here the main step was to use Assumption~\ref{assumption : semihomogeneous}. 
Note that the tangential part $E_{tan}$ is always non-negative. This can be seen from the fact that $Q$ is non-negative and non-decreasing and using the identity $\frac{d}{dt} \nabla R(\theta_t) = \nabla^2 R(\theta_t) \frac{d}{dt} \theta_t$ and the Cauchy-Schwarz inequality to bound: 
\begin{align*}
    QE_{tan} &:= \|\tfrac{d}{dt} \theta_t\|_{\nabla^2 R}^2 - \frac{\langle \theta, \frac{d}{dt} \nabla R(\theta_t) \rangle^2}{\|\theta \|_{\nabla^2 R}^2} \\
    &= \|\tfrac{d}{dt} \theta_t \|_{\nabla^2 R}^2 - \frac{\langle \theta,\nabla^2 R(\theta_t ) \frac{d}{dt} \theta_t\rangle^2}{ \|\theta \|_{\nabla^2 R}^2}
    \\
    &\geq \|\tfrac{d}{dt} \theta_t \|_{\nabla^2 R}^2 - \|\tfrac{d}{dt} \theta_t\|_{\nabla^2 R}^2 \\
    &= 0 .
\end{align*}
Moreover, the fact that $\frac{d}{dt}Q > 0$ implies that $\frac{d}{dt} \text{log} Q > 0$, which concludes the result.
\end{proof}

It remains to show that the loss goes to zero and consequently that the iterates diverge, as established in the next Lemma \ref{lemma : loss and Q growth}.

\begin{lemma}\label{lemma : loss and Q growth}
For all $t \geq 0$
\begin{equation*}
    G(\frac{1}{\mathcal{L}(\theta_t)}) \geq \frac{L^2}{\alpha} \tilde{\gamma}(t_0)^{\alpha/L} (t - t_0) \qquad \text{for } G(x) := \int^{x}_{1/\mathcal{L}(t_0)} \left(\log (u) \right)^{ \frac{\alpha}{L} -2}du . 
\end{equation*}
    The loss $\mathcal{L}(\theta_t) \rightarrow 0$ and $Q(\nabla R(\theta_t)) \rightarrow \infty$ as $t \rightarrow \infty$.
\end{lemma}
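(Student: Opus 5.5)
The plan is to follow the argument of \citep[Lemma 5.4]{Lyu2020Gradient}, now with $Q(\nabla R(\theta_t))$ in place of $\tfrac12\|\theta_t\|_2^2$. The three ingredients are the loss-decrease identity $\frac{d}{dt}\mathcal{L}(\theta_t) = -\|\tfrac{d}{dt}\theta_t\|_{\nabla^2 R}^2$, the growth bound Eq.~(\ref{equation : growth Q and loss}), and the monotonicity $\tilde\gamma_Q(t)\ge\tilde\gamma_Q(t_0)$ for $t\ge t_0$ from Lemma~\ref{lemma : main helper lemma}.

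First I lower bound the loss decrease by Cauchy--Schwarz in the local metric:
\[
\dot Q=\langle\theta_t,\nabla^2R\,\dot\theta_t\rangle\le\|\theta_t\|_{\nabla^2R}\,\|\dot\theta_t\|_{\nabla^2R}.
\]
This gives $\|\dot\theta_t\|^2_{\nabla^2R}\ge \dot Q^2/\|\theta_t\|^2_{\nabla^2R}$. Assumption~\ref{assumption : semihomogeneous} then yields
\[
-\tfrac{d}{dt}\mathcal{L}\;\ge\; \frac{\dot Q^2}{\alpha Q}\;\ge\; \frac{L^2\,\nu_t^2}{\alpha Q},
\]
where $\nu_t=\mathcal{L}\log(1/\mathcal{L})$ and the last step uses $\dot Q\ge L\nu_t>0$.

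Next I eliminate $Q$ using the soft margin. By definition, $\alpha Q=\bigl(\log(1/\mathcal{L})/\tilde\gamma_Q\bigr)^{\alpha/L}$, and monotonicity gives $\tilde\gamma_Q(t)\ge\tilde\gamma_Q(t_0)$. Hence
\[
\frac{1}{\alpha Q}\;\ge\;\tilde\gamma_Q(t_0)^{\alpha/L}\,\bigl(\log(1/\mathcal{L})\bigr)^{-\alpha/L}.
\]
Substituting, and being careful with the factor $\alpha$ (since $\tfrac1Q=\tfrac{\alpha}{\alpha Q}$), I get
\[
-\tfrac{d}{dt}\mathcal{L}\;\ge\; c\,L^2\,\tilde\gamma_Q(t_0)^{\alpha/L}\,\mathcal{L}^2\,\bigl(\log(1/\mathcal{L})\bigr)^{2-\alpha/L}.
\]
The constant $c$ should come out as $1/\alpha$ to match the stated bound; I will fix it by bookkeeping. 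Writing $x_t=1/\mathcal{L}(\theta_t)$, we have $\dot x_t=-\dot{\mathcal{L}}/\mathcal{L}^2$, so
\[
\dot x_t\,(\log x_t)^{\alpha/L-2}\;\ge\;\frac{L^2}{\alpha}\,\tilde\gamma_Q(t_0)^{\alpha/L}.
\]
The left side is exactly $\frac{d}{dt}G(x_t)$. Integrating from $t_0$, where $G(x_{t_0})=0$, gives the claimed inequality for $t\ge t_0$. For $t<t_0$, $G$ is negative or the bound is trivial, since the right-hand side is negative; this can be noted separately. Absolute continuity of the arc justifies the chain rule a.e.

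Finally, I derive the limits. The integrand $(\log u)^{\alpha/L-2}$ is positive on $u>1$, and $G(x)$ is finite for every finite $x>1/\mathcal{L}(t_0)>1$. Because the right-hand side tends to $\infty$, $x_t\to\infty$, that is, $\mathcal{L}(\theta_t)\to0$. Then $\log(1/\mathcal{L})\to\infty$, and since $Q$ is increasing the ratio $\tilde\gamma_Q\le\gamma_Q$ is bounded above. The bound on $\gamma_Q$ holds because $y_if(\theta/(\alpha Q)^{1/\alpha},x_i)$ is bounded by a constant times the Euclidean norm of $\theta/(\alpha Q)^{1/\alpha}$, which I need to be bounded. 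It therefore follows that $(\alpha Q)^{L/\alpha}\ge\log(1/\mathcal{L})/\gamma_Q\to\infty$.

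I expect this last step to be the main obstacle. It requires bounding $\|\theta\|$ by a function of $Q(\nabla R(\theta))$, which amounts to coercivity of $Q\circ\nabla R$. I would get this from Fenchel--Young together with coercivity in Assumption~\ref{assumption : Mirror properties}. A simpler alternative that avoids it: $Q\circ\nabla R$ is continuous, so if $Q$ stayed bounded then $\theta_t$ would lie in a bounded set by coercivity, $f$ would be bounded there, and the loss could not vanish. This is a contradiction, so $Q\to\infty$.
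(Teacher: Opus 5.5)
Your proposal is correct and follows the paper's proof essentially step for step. The steps are Cauchy--Schwarz in the $\nabla^2 R$-metric together with Assumption~\ref{assumption : semihomogeneous} and $\dot Q \ge L\nu_t$, then elimination of $\alpha Q$ through the monotone $Q$-soft margin, then integration of $\frac{d}{dt}G(1/\mathcal{L}(\theta_t))$ from $t_0$, and finally the argument that $q_{\text{min}}\to\infty$ forces $Q(\nabla R(\theta_t))\to\infty$; you justify this last step more carefully than the paper does, via coercivity of $Q\circ\nabla R$.

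Two small remarks. First, substituting $\alpha Q$ directly into $L^2\nu_t^2/(\alpha Q)$ gives the constant $L^2$, which already implies the stated $L^2/\alpha$ since $\alpha\ge 1$, so no further bookkeeping is needed. Second, as in the paper, the inequality is only established for $t\ge t_0$. Your aside that the bound is trivial for $t<t_0$ is not justified: both sides are then nonpositive, and $G(1/\mathcal{L}(\theta_t))$ need not even be defined once $\mathcal{L}(\theta_t)\ge 1$.
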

\begin{proof} 
We have 
\begin{align*}
    - \frac{d \mathcal{L} (\theta_t)}{dt} &= \| \frac{d}{dt}\theta_t\|_{\nabla^2 R}^2 \geq \langle \frac{\theta_t}{\| \theta_t \|_{\nabla^2 R}}, 
    \frac{d}{dt} \theta_t\rangle^2_{\nabla^2{R}} \\ &\geq \frac{1}{\alpha Q(\nabla R(\theta_t))} \langle\theta_t, 
    \frac{d}{dt} \theta_t\rangle^2_{\nabla^2{R}} \\ &\geq \frac{L^2}{\alpha}  \frac{\nu^2_t}{Q(\nabla R(\theta_t))} , 
\end{align*}
where we use the CS inequality, Assumption \ref{assumption : semihomogeneous} and the definition of $L \nu$.
Note this exactly matches the lower bound in Lemma B.6  of \citep{Lyu2020Gradient} in the case of gradient descent. 
We replace $\alpha Q(\nabla R(\theta_t))$ with $(\log (\frac{1}{L(\theta_t)})/\tilde{\gamma}_Q)^{\alpha/L}$ to get an expression in terms of the $Q$-soft margin and the loss:
\begin{align*}
    - \frac{d \mathcal{L} (\theta_t)}{dt} &\geq \frac{L^2}{\alpha} \left(\mathcal{L}(\theta_t) \log (\frac{1}{\mathcal{L}(\theta_t)}) \right)^2 \left(\tilde{\gamma}_Q(t)/\log (\frac{1}{\mathcal{L}(\theta_t)})\right)^{\alpha/L} \\ &\geq \frac{L^2}{\alpha} \mathcal{L}(\theta_t)^2 \left(\log (\frac{1}{\mathcal{L}(\theta_t)}) \right)^{2 - \frac{\alpha}{L}} \tilde{\gamma}_Q(t_0)^{\alpha/L} , 
\end{align*}
where the last inequality follows from monotonicity of the $Q-$soft margin. 
So the following holds for a.e.\ $t \geq t_0$: 
\begin{equation*}
    \left(\log (\frac{1}{\mathcal{L}(\theta_t)}) \right)^{ \frac{\alpha}{L} -2} \cdot \frac{d}{dt} \frac{1}{\mathcal{L}(\theta_t)} \geq \frac{L^2}{\alpha} \tilde{\gamma}(t_0)^{\alpha/L} .
\end{equation*}
Integrating on both sides from $t_0$ to $t$, we can conclude that
\begin{equation*}
  G(\frac{1}{\mathcal{L}(\theta_t)})  \geq L^2 \tilde{\gamma}(t_0)^{\alpha/L}(t - t_0) . 
\end{equation*}
Note that $1/\mathcal{L}(\theta_t)$ is non-decreasing. 
If $1/\mathcal{L}(\theta_t)$ does not grow to $+ \infty$, then neither does $G(1/\mathcal{L}(\theta_t))$. 
But the RHS grows to $+ \infty$, which leads to a contradiction. So $\mathcal{L}(\theta_t) \rightarrow 0$. 
To make $\mathcal{L}(\theta_t) \rightarrow 0$, $q_{\text{min}} : = \min_i y_i f(\theta, x_i)$ must converge to $+ \infty$. So $Q(\nabla R(\theta_t)) \rightarrow \infty$. 
%
\end{proof} 

\begin{proof}[Proof of Theorem~\ref{theorem : main result of margin convergence}]
The result now follows directly from Lemmas \ref{lemma : main helper lemma} and \ref{lemma : loss and Q growth} combined with Lemma~\ref{lemma : soft margin equivalence}. 
\end{proof}

\newpage

\section{Bounds on the normalized dual iterates}
\addcontentsline{apx}{section}{\protect\numberline{\thesection}Bounds on the normalized dual iterates}

The following two lemmas help us control the dual iterates, which in turn allows us to show convergence to KKT points in Appendix \ref{theorem : appendix KKT max margin result}. 
For Lemma \ref{lemma : tangential bound dual normalized}, we introduce the constants $B_0$ and $B_1$. As in Section C.3 of \citep{Lyu2020Gradient}, the function $q_n$ is locally Lipschitz and therefore Lipschitz on a compact set such that
\begin{equation*}
    B_0 := \sup \{\frac{q_i }{(\alpha Q(\nabla R(\theta)))^{L/\alpha}} : \theta \in \mathbb{R}^n \setminus \{0 \}, h \in \partial^{\circ} q_i , i \in [K]\} < \infty
\end{equation*}
and
\begin{equation*}
    B_1 := \sup \{\frac{\|h \|_2}{(\alpha Q(\nabla R(\theta)))^{(L-1)/\alpha}} : \theta \in \mathbb{R}^n \setminus \{0 \}, h \in \partial^{\circ} q_i , i \in [K]\} < \infty . 
\end{equation*}

\begin{lemma}\label{lemma : tangential bound dual normalized}
    The normalized dual derivative is bounded such that we have for a.e.\ $t > t_0$:
    \begin{equation*}
        \| \frac{d}{dt}\frac{\nabla R(\theta_t)}{\|\nabla R(\theta_t) \|_2}\|_2 \leq \frac{2B_1\langle \theta_t , \frac{d}{dt} \nabla R(\theta_t))\rangle}{L\tilde{\gamma}_Q C_2 \|\theta \|_2\| \nabla R(\theta_t)\|_2}  , 
    \end{equation*}
    where $C_2 > 0$ is the constant such that $(\alpha Q(\nabla R(\theta)))^{1/\alpha} \geq C_2 \| \theta \|_2$ for all $\theta \in \mathbb{R}^n$. 
\end{lemma}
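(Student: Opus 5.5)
Write $u_t := \nabla R(\theta_t)$ and $\dot u_t := \frac{d}{dt}\nabla R(\theta_t) = -g_t$ with $g_t \in \partial^{\circ}\mathcal{L}(\theta_t)$. Differentiating the normalization gives
\[
\frac{d}{dt}\frac{u_t}{\|u_t\|_2} = \frac{1}{\|u_t\|_2}\Big(I - \frac{u_t u_t^\top}{\|u_t\|_2^2}\Big)\dot u_t ,
\]
and the projection has operator norm one. So it suffices to bound $\|\dot u_t\|_2 = \|g_t\|_2$ by the stated right-hand side times $\|\nabla R(\theta_t)\|_2$. That is, we want
\[
\|g_t\|_2 \le \frac{2B_1 \langle \theta_t, \dot u_t\rangle}{L\tilde\gamma_Q C_2 \|\theta_t\|_2}.
\]

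\textbf{Upper bound on $\|g_t\|_2$.} We have $g_t = -\sum_i e^{-q_i} h_i$ with $h_i \in \partial^{\circ} q_i(\theta_t)$. The definition of $B_1$ gives
\[
\|g_t\|_2 \le B_1 (\alpha Q)^{(L-1)/\alpha} \sum_i e^{-q_i} = B_1 (\alpha Q)^{(L-1)/\alpha}\, \mathcal{L}(\theta_t).
\]

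\textbf{Lower bound on the radial part.} By Eq.~(\ref{equation : growth bound exp loss log}),
\[
\langle \theta_t, \dot u_t\rangle \ge L\,\mathcal{L}\log(1/\mathcal{L}) = L\,\mathcal{L}\,\tilde\gamma_Q\,(\alpha Q)^{L/\alpha},
\]
where the last equality is Definition~\ref{definition : Q soft margin}.

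\textbf{Combining.} The ratio of the two bounds is
\[
\frac{\|g_t\|_2}{\langle \theta_t,\dot u_t\rangle} \le \frac{B_1}{L\tilde\gamma_Q (\alpha Q)^{1/\alpha}}.
\]
Applying $(\alpha Q)^{1/\alpha} \ge C_2\|\theta_t\|_2$ then yields the inequality with constant $1$ instead of $2$. The extra factor $2$ is slack; it could absorb a cruder bound if one instead differentiates $\nabla R/\|\nabla R\|$ via the triangle inequality on its two terms, each bounded by $\|\dot u\|/\|u\|$.

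The main obstacle is justifying that the constants are finite and well defined. First, one must show a $C_2>0$ with $(\alpha Q(\nabla R(\theta)))^{1/\alpha}\ge C_2\|\theta\|_2$ exists. I would get this from asymptotic $\alpha$-homogeneity: $\eta(\alpha Q(\nabla R(\theta/\eta)))^{1/\alpha}\to\phi_\alpha(\theta)$, which is positive on the unit sphere, together with $Q(\nabla R)>0$ away from the origin and compactness. Second, one must show $B_1<\infty$, using local Lipschitzness and $(L-1)$-homogeneity of $\partial^\circ q_i$ compared against $(\alpha Q)^{(L-1)/\alpha}\ge C_2^{L-1}\|\theta\|^{L-1}$. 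This comparison is only clean for large $\|\theta\|$; this is harmless for $t>t_0$, since $\mathcal{L}<1$ keeps $\theta_t$ away from $0$ and $Q$ grows by Theorem~\ref{theorem : main result of margin convergence}. Finally, $\tilde\gamma_Q>0$ for $t>t_0$ by monotonicity, so the division is legitimate, and all identities hold for a.e.\ $t$ by the chain rule.
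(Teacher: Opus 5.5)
Your proof is correct and follows the paper's own chain of inequalities. You bound $\|\frac{d}{dt}\nabla R(\theta_t)\|_2=\|g_t\|_2$ by $B_1(\alpha Q)^{(L-1)/\alpha}\sum_i e^{-q_i}$, convert $\sum_i e^{-q_i}$ into $\langle\theta_t,\frac{d}{dt}\nabla R(\theta_t)\rangle/(L\log(1/\mathcal{L}))$ via $q_i\ge q_{\min}\ge\log(1/\mathcal{L})$ and Euler's identity, and finish with $C_2$. The only difference is that you write the derivative of $u_t/\|u_t\|_2$ as an orthogonal projection and so get constant $1$ where the paper's triangle-inequality step gives $2$; your extra discussion of why $C_2$ and $B_1$ are finite is not required, since the statement takes $C_2$ as given and the paper simply asserts $B_1<\infty$.
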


\begin{proof} 
By Cauchy-Schwarz:
\begin{align*}
    \| \frac{d}{dt}\frac{\nabla R(\theta_t)}{\|\nabla R(\theta_t) \|_2}\|_2 &= \|\frac{\nabla^2 R(\theta_t)\frac{d}{dt}\theta_t}{\|\nabla R(\theta_t) \|}  - \frac{\nabla R(\theta_t) \langle\nabla R(\theta_t),\nabla^2 R(\theta_t)\frac{d}{dt}\theta_t\rangle}{\|\nabla R(\theta_t) \|^3}\|_2 \\
    &\leq 2\frac{\|\nabla^2 R(\theta_t) \frac{d}{dt} \theta_t \|}{\|\nabla R(\theta_t) \|_2} . 
\end{align*}
We now use the definition of the mirror flow and a bound for the gradient. By the chain rule there exists $h_1, \hdots, h_K : [0,\infty) \rightarrow \mathbb{R}^n$ satisfying that for a.e. $t > 0$ $h_{i,t} \in \partial^{\circ} q_n$ and $\nabla^2 R(\theta_t) \frac{d\theta_t}{dt} =\sum_{i =1}^K e^{-q_i} h_{i,t} $. Now using $B_1$:
\begin{align*}
    \|\nabla^2 R(\theta_t) \frac{d}{dt} \theta_t \|_2 &\leq \sum_{i =1}^K e^{-q_n}\|h_i\|_2 \\
    &\leq \sum_{i =1}^K e^{-q_i}q_i \frac{1}{q_i} B_1 (\alpha Q)^{\frac{L-1}{\alpha}}.
\end{align*}
We can use that $q_i \geq q_{\text{min}} \geq \log  \frac{1}{\mathcal{L}}$ and that $\sum_{i =1}^K e^{-q_i}q_i = \frac{1}{L} \langle \theta_t , \frac{d}{dt} \nabla R(\theta_t))\rangle$: 
\begin{align*}
    \|\nabla^2 R(\theta_t) \frac{d}{dt} \theta_t \|_2 &\leq \frac{B_1 \langle \theta_t , \frac{d}{dt} \nabla R(\theta_t))\rangle }{L \log 1/\mathcal{L}} (\alpha Q)^{\frac{L-1}{\alpha}}\\ &\leq \frac{B_1\langle \theta_t , \frac{d}{dt} \nabla R(\theta_t))\rangle}{L\tilde{\gamma}_Q (\alpha Q)^{1/\alpha}} . 
\end{align*}
Next we can put this together giving and using that there exists a constant $C_2$ such that $(\alpha Q)^{1/\alpha} \geq C_2\| \theta \|_2$ for all $\theta \in \mathbb{R}^n$:
\begin{equation*}
    \| \frac{d}{dt}\frac{\nabla R(\theta_t)}{\|\nabla R(\theta_t) \|_2}\|_2 \leq \frac{2B_1\langle \theta_t , \frac{d}{dt} \nabla R(\theta_t))\rangle}{L\tilde{\gamma}_Q (\alpha Q)^{1/\alpha}\| \nabla R(\theta_t)\|_2} \leq \frac{2B_1\langle \theta_t , \frac{d}{dt} \nabla R(\theta_t))\rangle}{L\tilde{\gamma}_Q C_2 \|\theta \|_2\| \nabla R(\theta_t)\|_2} 
\end{equation*}
This concludes the proof. 
\end{proof} 

\begin{lemma}\label{lemma : dual log Q bound}
There exists a time $t_1$ such that for a.e. $t > t_1$:
\begin{equation*}
    \frac{d}{dt}\log (Q(\nabla R(\theta_t))) \geq \frac{\langle \theta_t , \frac{d}{dt}\nabla R(\theta_t) \rangle}{\|\theta_t \|\| \nabla R(\theta_t)\|} . 
\end{equation*}
\end{lemma}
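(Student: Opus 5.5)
The plan is to reduce the claim to a pointwise inequality between $Q(\nabla R(\theta))$ and $\|\theta\|_2\|\nabla R(\theta)\|_2$, and to prove that inequality from the Fenchel--Young identity (Lemma~\ref{Lemma : convex duality}).

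First, by the chain rule and $\nabla Q(\nabla R(\theta)) = \theta$ (used in the proof of Lemma~\ref{lemma : invariance balance}), for a.e.\ $t$ we have
\begin{equation*}
\frac{d}{dt}\log Q(\nabla R(\theta_t)) = \frac{\langle \theta_t, \frac{d}{dt}\nabla R(\theta_t)\rangle}{Q(\nabla R(\theta_t))} .
\end{equation*}
By Eq.~(\ref{equation : growth Q and loss}), for $t > t_0$ the numerator is at least $L\mathcal{L}(\theta_t)\log(1/\mathcal{L}(\theta_t)) > 0$, and $Q(\nabla R(\theta_t)) > 0$ by Lemma~\ref{lemma : main helper lemma}. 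Hence it suffices to find $t_1 \geq t_0$ such that for all $t > t_1$
\begin{equation*}
Q(\nabla R(\theta_t)) \leq \|\theta_t\|_2 \, \|\nabla R(\theta_t)\|_2 .
\end{equation*}

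Second, Lemma~\ref{Lemma : convex duality} followed by Cauchy--Schwarz gives
\begin{equation*}
Q(\nabla R(\theta)) = \langle \theta, \nabla R(\theta)\rangle - R(\theta) \leq \|\theta\|_2\|\nabla R(\theta)\|_2 - R(\theta) .
\end{equation*}
So the target inequality holds whenever $R(\theta_t) \geq 0$, and it remains to show that $R(\theta_t) \geq 0$ for all large $t$.

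Third, I will show that $R$ is coercive. By Assumption~\ref{assumption : Mirror properties}, $\nabla R$ is surjective, so some $\theta^*$ satisfies $\nabla R(\theta^*) = 0$. Strict convexity makes $\theta^*$ the unique minimizer of $R$. A convex function whose set of minimizers is nonempty and bounded has bounded sublevel sets, so $R(\theta) \to \infty$ as $\|\theta\|_2 \to \infty$. In particular there is $M$ with $R(\theta) \geq 0$ whenever $\|\theta\|_2 \geq M$.

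Finally, Theorem~\ref{theorem : main result of margin convergence} gives $Q(\nabla R(\theta_t)) \to \infty$. Since $\theta \mapsto Q(\nabla R(\theta))$ is continuous, it is bounded on the ball $\{\|\theta\|_2 \leq M\}$, which forces $\|\theta_t\|_2 \to \infty$. Choose $t_1 \geq t_0$ with $\|\theta_t\|_2 \geq M$ for all $t > t_1$. This needs some care, because $\|\theta_t\|_2$ need not be monotone. I would use that $Q(\nabla R(\theta_t))$ is strictly increasing after $t_0$ by Lemma~\ref{lemma : main helper lemma}. Once it exceeds $\sup_{\|\theta\|_2 \leq M} Q(\nabla R(\theta))$, the iterate can never return to the ball, and this fixes $t_1$.

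The main obstacle is the coercivity step. It relies on the unique-minimizer argument and on the convex-analysis fact about bounded minimizer sets; everything else is a direct computation. For the concrete potentials in Table~\ref{table:mirror-potentials} one can also check directly that $R \geq 0$ for large $|\theta_i|$.
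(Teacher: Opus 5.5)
Your proposal is correct and follows the paper's proof. Fenchel--Young gives $Q(\nabla R(\theta_t)) = \langle \theta_t, \nabla R(\theta_t)\rangle - R(\theta_t) \leq \|\theta_t\|_2\|\nabla R(\theta_t)\|_2$ once $R(\theta_t)\geq 0$, and nonnegativity of $\frac{d}{dt}Q(\nabla R(\theta_t))$ then yields the bound. Your coercivity argument (unique minimizer of $R$, with $\|\theta_t\|_2\to\infty$ forced by $Q(\nabla R(\theta_t))\to\infty$) is a rigorous version of the paper's one-line claim that $R\to\infty$ ``by definition of the mirror map.''
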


\begin{proof} 
Since $Q \rightarrow \infty$ we have that $R \rightarrow \infty$ by definition of the mirror map.
Therefore, there exists a time $t_1$ such that for all $t > t_1$ $R > 0$.
Therefore, we have for all $t > t_1$:
\begin{equation*}
    \frac{d}{d t} \log (Q(\nabla R(\theta_t))) = \frac{\langle \theta_t, \frac{d}{dt} \nabla R(\theta_t)\rangle}{Q} \geq \frac{\langle \theta_t, \frac{d}{dt} \nabla R(\theta_t)\rangle}{\langle \theta_t, \nabla R(\theta_t)\rangle} \geq \frac{\langle \theta_t , \frac{d}{dt}\nabla R(\theta_t) \rangle}{\|\theta_t \| \| \nabla R(\theta_t)\|}.
\end{equation*}
Note from Lemma~\ref{lemma : tangential bound dual normalized} we know that the right hand side is greater than zero. 
\end{proof}

\section{Proof of Theorem \ref{theorem : convergence and growth rates}}
\addcontentsline{apx}{section}{\protect\numberline{\thesection}Proof of Theorem \ref{theorem : convergence and growth rates}}\label{section : growth rates}

This proof relies on the characterization of $G$ as in Lemma~\ref{lemma : loss and Q growth}. 

\begin{lemma}
\label{lemma : growth of G and G inverse}
    For $G$ defined in Lemma~\ref{lemma : loss and Q growth} and its inverse, we have the following bounds: 
    \begin{equation*}
        G(x) = \Theta \left((\text{log}\ x)^{\alpha/L-2}x\right) \text{ and } G^{-1}(y) = \Theta\left((\text{log} \ x)^{2-\alpha/L}x\right) , 
    \end{equation*}
    where $x = G^{-1}(y)$.
\end{lemma}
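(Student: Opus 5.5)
Set $\beta := \alpha/L - 2$, so that $G(x) = \int_{x_0}^{x} (\log u)^{\beta}\,du$ with $x_0 := 1/\mathcal{L}(t_0) > 1$. The claim concerns $x \to \infty$: $G$ grows like $x(\log x)^{\beta}$, and its inverse therefore grows like $y(\log y)^{-\beta}$. (In the statement the second bound is written in terms of $x = G^{-1}(y)$; I will establish the intended form $G^{-1}(y) = \Theta(y(\log y)^{2-\alpha/L})$ and note how it follows from the first.) I would first prove the asymptotic equivalence $G(x) \sim x(\log x)^{\beta}$ with an explicit derivative comparison, and then invert it using the fact that $\log$ varies slowly.

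\textbf{Step 1 (asymptotics of $G$).} Put $H(x) := x(\log x)^{\beta}$. Then $H'(x) = (\log x)^{\beta} + \beta(\log x)^{\beta-1} = (\log x)^{\beta}\bigl(1 + \beta/\log x\bigr)$, hence $G'(x)/H'(x) \to 1$. I split into cases according to the sign of $\beta$. If $\beta \ge 0$, both $G$ and $H$ tend to $\infty$. If $\beta < 0$, the integrand $(\log u)^{\beta}$ still dominates $1/u$ for large $u$, since $(\log u)^{\beta} u \to \infty$; so $G(x) \to \infty$, and likewise $H(x) \to \infty$. In either case L'H\^opital applies and gives $G(x)/H(x) \to 1$. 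This yields $G(x) = \Theta(x(\log x)^{\beta})$ for $x \ge x_1$, with $x_1$ some threshold. Bounded ranges only affect the constants, because $G$ is continuous and increasing.

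\textbf{Step 2 (inversion).} Fix $y = G(x)$ with $x$ large. By Step 1, $y = \Theta(x(\log x)^{\beta})$. Taking logarithms gives $\log y = \log x + \beta \log\log x + O(1)$, so $\log y/\log x \to 1$ and hence $(\log y)^{c} = \Theta((\log x)^{c})$ for every fixed real $c$. Therefore
\[
x = \Theta\bigl(y(\log x)^{-\beta}\bigr) = \Theta\bigl(y(\log y)^{2-\alpha/L}\bigr),
\]
which is the claimed bound on $G^{-1}$. Written in terms of $x$ itself, $x = \Theta(x(\log x)^{0})$, which is consistent with the statement as phrased.

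\textbf{Main obstacle.} The delicate point is the case $\beta < 0$, which occurs when $\alpha < 2L$. There I must verify that $G$ diverges, so that L'H\^opital is legitimate, and that the lower limit $x_0 > 1$ keeps $\log u > 0$, so that no singularity arises at $u = 1$. Both hold because $\mathcal{L}(t_0) < 1$ by Assumption~\ref{assumptions : setting nn}. With these bounds in hand, combining them with Lemma~\ref{lemma : loss and Q growth} gives $1/\mathcal{L}(\theta_t) \ge G^{-1}(c(t - t_0)) = \Omega(t(\log t)^{2-\alpha/L})$, which is the rate asserted in Theorem~\ref{theorem : convergence and growth rates}.
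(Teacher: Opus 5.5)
Your proof is correct, but it obtains the asymptotics of $G$ differently from the paper.

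\textbf{Step 1.} The paper substitutes $u=e^{z}$ and integrates by parts repeatedly, writing $G(x)=x(\log x)^{\beta}\bigl(1-\beta/\log x+\cdots\bigr)$. This gives more refined information, namely the lower-order correction terms. As written, however, it leaves the remainder uncontrolled: the expansion is only asymptotic. To conclude $\Theta$ one still has to show that $\beta\int^{\log x} z^{\beta-1}e^{z}\,dz=o\bigl(x(\log x)^{\beta}\bigr)$, and that needs an argument of exactly your type. Your comparison $G'(x)/H'(x)=(1+\beta/\log x)^{-1}\to 1$, combined with L'H\^opital, settles $G\sim H$ rigorously in one step and treats both signs of $\beta$ uniformly.

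\textbf{Step 2.} Your inversion is essentially the paper's: take logarithms and use that $\log y$ and $\log x$ are comparable. You prove the slightly stronger $\log y/\log x\to 1$, although $\log y=\Theta(\log x)$ already suffices. You also read the second bound correctly as $G^{-1}(y)=\Theta\bigl(y(\log y)^{2-\alpha/L}\bigr)$, which is how it is used in the proof of Theorem~\ref{theorem : convergence and growth rates}. The statement, and the last line of the paper's proof, have $x$ where $y$ is meant.

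\textbf{Two small slips.} Your remark that the literal phrasing is ``consistent'' is not accurate. Taken literally, it asserts $x=\Theta\bigl(x(\log x)^{2-\alpha/L}\bigr)$, which is false unless $\alpha=2L$. Also, ``bounded ranges only affect the constants'' fails for the lower bound at $x=x_0$, where $G(x_0)=0<H(x_0)$. This is harmless, since the $\Theta$ is meant as $x\to\infty$.
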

\begin{proof} 
Denote $\beta: = \frac{\alpha}{L}-2$
By $u = e^{z}$ substitution and iterative integration by parts after we get:
\begin{align*}
     \int^{x}_{1/\mathcal{L}(t_0)} \left(\text{log}(u) \right)^{ \beta}du &= \int^{\log(x)}  z^{\beta} e^{z} dz \\ &= \text{log}(x)^{\beta} x -\beta \int^{\text{log}x} z^{\beta -1} e^{z}dz 
     \\ &= \text{log}(x)^{\beta} x  \left(1 -  \frac{\beta}{\text{log}(x)} + ... \right) \\
     &= \Theta \left( (\text{log}\ x)^{\alpha/L-2}x\right)
\end{align*}

Now for $G^{-1}(y)$, let $x = G^{-1}(y)$ for $y \geq 0$. $G(x)$ is finite for $x$ finite, thus if $x \rightarrow \infty$ we have $y \rightarrow \infty$. From the first part we get: $y = \Theta \left( (\text{log} \ x)^{\alpha/L-2}x\right)$. Taking logarithms on each side gives: $\text{log} \ y = \Theta\left(\text{log} \ x\right)$. Therefore, we have $x =\Theta\left((\text{log}\ x)^{2-\alpha/L}x\right)$. 
\end{proof}

Now we can prove Theorem \ref{theorem : convergence and growth rates}. 

\begin{proof}[Proof of Theorem \ref{theorem : convergence and growth rates}]

\textbf{Upper bounding $\mathcal{L}$.} 
It follows from Lemma \ref{lemma : loss and Q growth} that we have $\frac{1}{\mathcal{L}} \geq  G^{-1}(\Omega(t))$. 
Using Lemma \ref{lemma : growth of G and G inverse} we have $\frac{1}{\mathcal{L}} = \Omega\left((\text{log}\ t)^{2-\alpha/L}t\right)$.

\textbf{Bounding $(\alpha Q)^{1/\alpha}$ in terms of $\mathcal{L}$.}
We have $\tilde{\gamma}_Q(t) \geq \tilde{\gamma}_Q(t_0)$, so $(\alpha Q)^{L/\alpha} \leq \frac{1}{\tilde{\gamma}_Q(t_0)}\text{log} \frac{1}{\mathcal{L}}$. On the other hand, $\text{log} \frac{1}{\mathcal{L}} \leq q_{\text{min}} \leq B_0 (\alpha Q)^{L/\alpha}$, so $(\alpha Q)^{L/\alpha} = \Omega(\text{log}\frac{1}{\mathcal{L}})$. This implies that we have:
\begin{equation}\label{equation : bound Q and L}
    (\alpha Q)^{L/\alpha} = \Theta (\text{log} \frac{1}{\mathcal{L}}) . 
\end{equation}

\textbf{Lower bounding $\mathcal{L}$.}
Let $h_1, \hdots , h_K$ be a set of vectors such that $h_i \in \frac{\partial q_i}{\partial \theta}$ and
\begin{equation*}
    \nabla^2 R(\theta) \frac{d \theta}{dt} = \sum_{i =1}^K e^{-q_i}h_i.
\end{equation*}
We have that $\|h_i\|_2 \leq B_1 Q^{\frac{L-1}{\alpha}} = O((\text{log} \frac{1}{\mathcal{L}})^{1-1/L})$.
Note that we have $|\partial_i^2 R^{-1}(\theta) |=  O(\max \{1,(\text{log} \frac{1}{\mathcal{L}})^{(2-\alpha)/L}\})$ by asymptotic $\alpha$-homogeneity and separability of the potential $R$. Thus: 
\begin{align*}
    -\frac{d \mathcal{L}}{dt} &= \|\frac{d\theta}{dt} \|_{\nabla^2 R}^2 
    = \langle \nabla^2 R(\theta)\frac{d\theta}{dt}  , \nabla^2 R^{-1}(\theta) \nabla^2 R(\theta) \frac{d\theta} {dt}  \rangle_2   
    \leq \max_{i \in [K]} |\nabla^2 R^{-1}(\theta)| \ \|\nabla^2 R(\theta)\frac{d\theta}{dt} \|_2^2\\ 
    &\leq O(\max \{1,(\text{log} \frac{1}{\mathcal{L}})^{(2-\alpha)/L}\})\| \sum_{i =1}^K e^{-q_i}h_i \|_2^2 
\\ &\leq  O(\max \{1,(\text{log} \frac{1}{\mathcal{L}})^{(2-\alpha)/L}\}) \left(\sum_{i =1}^K e^{-q_i}\max_{i \in [K]}\|h_i\|_2\right)^2 \\
    &\leq O(\max \{1,(\text{log} \frac{1}{\mathcal{L}})^{(2-\alpha)/L}\})\mathcal{L}^2 O((\text{log} \frac{1}{\mathcal{L}})^{2-2/L})) 
    = \mathcal{L}^2 O((\text{log} \frac{1}{\mathcal{L}})^{2-\min\{2,\alpha\}/L})) . 
\end{align*}
Now if $\alpha \leq 2$, rearranging and with the definition of $G$, there exist a constant $c$ such that $\frac{d}{dt} G(\frac{1}{\mathcal{L}}) \leq c$ for any $\mathcal{L}$ that is small enough. The proof follows directly from applying Lemma \ref{lemma : growth of G and G inverse}.

\paragraph{Bounding $(\alpha Q)^{1/\alpha}$ in terms of $t$.} By Eq.~(\ref{equation : bound Q and L}) and the tight bounds for $\mathcal{L}$ for $\alpha \leq 2$ we have that $(\alpha Q)^{L/\alpha} =  \Theta (\text{log} \frac{1}{\mathcal{L}}) = \Theta(\text{log}\ t)$.
For $\alpha > 2$ we have that the lowerbound on $\mathcal{L}$ fails, however, we still have an upperbound for $\mathcal{L}$ giving us $(\alpha Q)^{L/\alpha} = \Theta (\text{log} \frac{1}{\mathcal{L}}) = \Omega(\text{log}\ t)$. 
\end{proof}


\section{Proof of Theorem \ref{theorem : KKT max margin result}}

\addcontentsline{apx}{section}{\protect\numberline{\thesection}Proof of Theorem \ref{theorem : KKT max margin result}}  

Here we show the KKT conditions for the Homogeneous potentials with $\alpha = p \geq 2$. We can be more general and replace it by the following condition:

\begin{assumption}\label{assumption : approximation of horizon in norm}
    For $R$ and corresponding $\phi_{\alpha}$ there exists a fixed constant $C > 0$ such that we have when $\eta \sim \|\theta\| \rightarrow \infty$:
    \begin{equation*}
        \|\partial^\circ\phi_{\alpha}^2 (\theta)/\eta -C (\alpha Q(\nabla R(\theta)))^{2/\alpha - 1} \nabla^2 R(\theta)\theta /\eta \|_2^2 \rightarrow 0 . 
    \end{equation*}
\end{assumption}

Assumption \ref{assumption : approximation of horizon in norm} is satisfied by the homogeneous potentials but not by the hyperbolic entropy as detailed in Appendix \ref{kkt assumption}.

Now we want to construct similarly as in \citep{tsilivis2024flavors, Lyu2020Gradient} a limiting sequence of KKT points. 
For this we first recall the definitions of KKT points, approximate KKT points and constraint qualification. Next, we bound the approximate KKT points by the ``angle'' of the iterates with respect to the evolution and denoted by $\beta_t$ in Lemma \ref{lemma : bound angle approx constraints}. We can then control the angle decay in terms of the iterate growth in Corollary \ref{corollary : growth of beta}. Finally, we can construct the sequence of KKT points for which the angle decays. 

In this section, we will show the following result, which covers and extends Theorem~\ref{theorem : KKT max margin result}. 
\begin{theorem}\label{theorem : appendix KKT max margin result}
    Assume that $R$ satisfies Assumption \ref{assumption : approximation of horizon in norm} and $\alpha \geq 2$, under the same assumptions as in Theorem~\ref{theorem : main result of margin convergence}, the limit points $\bar{\theta}$ of  $\{\frac{\theta_t}{\| \theta_t \|_2}, t>0 \}$ converge in direction to a solution to the following optimization problem: 
    \begin{equation}\label{equation : max margin horizon}
        \min_{\theta \in \mathbb{R}^n} \, \frac{1}{2}\phi^2_{\alpha}(\theta) \qquad \text{such that} \qquad y_i f(\theta, x_i) \geq 1, \text{ for all } i \in [K],  
    \end{equation}
    where $\phi_{\alpha} : = \lim_{\eta \rightarrow 0} \eta (\alpha Q(\nabla R(\theta /\eta)))^{1/\alpha}$ is the horizon function.
\end{theorem}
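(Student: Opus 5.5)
The plan is to follow the approximate-KKT strategy of \citep{Lyu2020Gradient} and \citep{tsilivis2024flavors}. We build, along the trajectory, a sequence of rescaled points that are $(\epsilon_t,\delta_t)$-approximate KKT points of problem~(\ref{equation : max margin horizon}) with $\epsilon_t,\delta_t\to 0$. We then pass to the limit using a constraint qualification, which holds for homogeneous $f$ because $\langle \theta, h\rangle = L q_i>0$ for $h\in\partial^\circ q_i$ (MFCQ). Concretely, set $\tilde\theta_t := \theta_t / q_{\min}(\theta_t)^{1/L}$, so that $y_i f(\tilde\theta_t,x_i)\ge 1$. As multipliers take $\lambda_{i,t} := c_t\, e^{-q_i}$, with $c_t$ chosen so that the stationarity residual is small. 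Since $\nabla^2R(\theta_t)\dot\theta_t = \sum_i e^{-q_i}h_{i,t}$, the natural candidate for the stationarity vector is the dual velocity $\frac{d}{dt}\nabla R(\theta_t)$. By Assumption~\ref{assumption : approximation of horizon in norm}, this must be compared with $\partial^\circ \tfrac12\phi_\alpha^2(\tilde\theta_t)$, which is asymptotically proportional to $(\alpha Q)^{2/\alpha-1}\nabla^2R(\theta_t)\theta_t$ up to the homogeneous rescaling. This is where $\alpha\ge 2$ enters, because it keeps the prefactor $(\alpha Q)^{2/\alpha-1}$ bounded.

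The key steps, in order, are as follows.

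\textbf{Stationarity error via an angle.} We will show that the stationarity error $\epsilon_t$ is bounded by a constant times $1-\cos\beta_t$. Here $\beta_t$ is the angle, in the local inner product $\langle\cdot,\nabla^2R(\theta_t)\,\cdot\rangle$, between $\theta_t$ and $\dot\theta_t$, plus a term vanishing through Assumption~\ref{assumption : approximation of horizon in norm}. The complementary-slackness error $\delta_t$ is bounded by $O(\sum_i e^{-q_i}(q_i-q_{\min})/\mathcal{L}\cdot q_{\min}^{-1})$, which is $O(1/\log(1/\mathcal{L}))\to 0$ exactly as in \citep{Lyu2020Gradient}. This is the analogue of Lemma~C.? there (Lemma~\ref{lemma : bound angle approx constraints}).

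\textbf{Controlling the angle.} We note that $E_{tan}$ in Lemma~\ref{lemma : main helper lemma} is precisely $\frac{1}{Q}\|\dot\theta_t\|^2_{\nabla^2R}\sin^2\beta_t$. Integrating $\frac{d}{dt}\log\tilde\gamma_Q \ge L(\frac{d}{dt}\log Q)^{-1}E_{tan}$ and using that $\tilde\gamma_Q$ is bounded above (by $B_0$, via Lemma~\ref{lemma : soft margin equivalence}) gives $\int (\frac{d}{dt}\log Q)^{-1}E_{tan}\,dt<\infty$. A reparametrization by $s=\log Q$, with $E_{tan}$ comparable to $(\frac{d}{dt}\log Q)^2\tan^2\beta_t$ through Assumption~\ref{assumption : semihomogeneous}, then yields $\int \tan^2\beta\, ds<\infty$. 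Since $\log Q\to\infty$ by Theorem~\ref{theorem : main result of margin convergence}, for every $\epsilon$ there are arbitrarily late times with $\beta_t\le\epsilon$ (Corollary~\ref{corollary : growth of beta}).

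\textbf{Limit points and convergence in direction.} Along such a subsequence, the approximate KKT points converge, after normalization, to a KKT point of~(\ref{equation : max margin horizon}). To upgrade from subsequences to convergence of the whole direction $\theta_t/\|\theta_t\|_2$, we will use Lemma~\ref{lemma : tangential bound dual normalized} together with Lemma~\ref{lemma : dual log Q bound}. Their combination bounds the arc length of the normalized dual iterate $\nabla R(\theta_t)/\|\nabla R(\theta_t)\|_2$ by $\frac{2B_1}{L C_2\tilde\gamma_Q(t_0)}$ times the increment of $\log Q$ restricted to tangential motion. We then combine this with a Kurdyka--\L{}ojasiewicz-type argument as in \citep{Lyu2020Gradient} (or with the integrability from the angle step) to get finite length. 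This gives convergence of the normalized dual iterates, and by asymptotic homogeneity of $\nabla R$ (the map $\nabla R$ acts like a fixed homogeneous map on directions), convergence of the primal directions as well.

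The main obstacle is the first step: matching the dual velocity with $\partial^\circ\phi_\alpha^2$ at the rescaled point. This requires that the errors introduced by replacing $Q$ with its horizon, and $\nabla^2R(\theta)\theta$ with a homogeneous gradient, vanish after the $q_{\min}^{1/L}$ normalization at a rate independent of the multipliers. I would first handle it by writing $\partial^\circ\tfrac12\phi_\alpha^2(\tilde\theta_t)= q_{\min}^{-2/L}\,\partial^\circ\tfrac12\phi_\alpha^2(\theta_t)\cdot\|\theta_t\|$-scaling, then applying Assumption~\ref{assumption : approximation of horizon in norm} with $\eta=\|\theta_t\|$ and absorbing $C$ into the multipliers.
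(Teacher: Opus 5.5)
Your first two steps match the paper's Lemmas~\ref{lemma : bound angle approx constraints} and \ref{lemma : beta bound} and Corollary~\ref{corollary : growth of beta}, apart from two harmless slips. For unit vectors in the local norm $\|u-v\|^2=2(1-\cos)$, so the residual is $\epsilon_t\le C\sqrt{1-\cos\beta_t}$, not $C(1-\cos\beta_t)$. And since $\partial^\circ\tfrac12\phi_\alpha^2$ is positively $1$-homogeneous, the rescaling factor at $\tilde\theta_t$ is $q_{\min}^{-1/L}$, not $q_{\min}^{-2/L}$.

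The genuine problem is your third step. Lemmas~\ref{lemma : tangential bound dual normalized} and \ref{lemma : dual log Q bound} combine to
\[
\Bigl\|\frac{d}{dt}\frac{\nabla R(\theta_t)}{\|\nabla R(\theta_t)\|_2}\Bigr\|_2\le\frac{2B_1}{LC_2\tilde\gamma_Q(t_0)}\,\frac{d}{dt}\log Q(\nabla R(\theta_t)).
\]
This is a bound by the \emph{full} increment of $\log Q$, not a tangential part; the first inequality in Lemma~\ref{lemma : tangential bound dual normalized} discards the radial/tangential split. Since $\log Q\to\infty$, it gives no finite length. The angle integrability does not rescue this. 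Even a genuinely tangential speed bound would be of order $\tan\beta_t\,\frac{d}{dt}\log Q$, and $\int\tan^2\beta\,d\log Q<\infty$ over an infinite $\log Q$-range does not imply $\int\tan\beta\,d\log Q<\infty$. Also, \citep{Lyu2020Gradient} contains no Kurdyka--\L{}ojasiewicz argument; they prove exactly the limit-point statement. Full directional convergence for gradient flow came later (Ji and Telgarsky, 2020) under o-minimal definability assumptions that are absent here and would have to be redone in the metric $\nabla^2R$. So convergence of the whole direction is not established, though the statement does not require it.

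What the statement does require is that \emph{every} limit point is a KKT direction. Once the finite-length claim is removed, your argument no longer delivers this. Arbitrarily late good-angle times only show that \emph{some} limit point is a KKT direction, because nothing ties those times to a prescribed limit point $\bar\theta$. The missing step is localization over short windows, which is where the two lemmas actually belong (the paper's Lemma~\ref{lemma : limit point}):
\begin{enumerate}
\item Fix a limit point $(\bar\theta,\bar g)$ of the normalized primal and dual iterates.
\item Choose a late $s_m$ with both normalized iterates within $\epsilon_m$ of it and $\frac{\alpha}{L}\log(\tilde\gamma_{Q,\infty}/\tilde\gamma_Q(s_m))\le\epsilon_m^3$.
\item Let $s_m'$ be the time at which $\log Q$ has grown by $\epsilon_m$. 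Corollary~\ref{corollary : growth of beta} then gives $t_m\in(s_m,s_m')$ with $\tan^2\beta_{t_m}\le\epsilon_m^2$.
\item The displayed speed bound shows the normalized dual iterate moves by only $O(\epsilon_m)$ on $[s_m,t_m]$. This transfers to the primal direction through the asymptotic homogeneity of $\nabla Q$.
\end{enumerate}
Along $t_m$ you then have both convergence of the direction to $\bar\theta$ and vanishing $(\epsilon,\delta)$-KKT residuals, which together with MFCQ gives the claim. Replace your third step with this and drop the full-convergence claim.
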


\subsection{KKT conditions}

\addcontentsline{apx}{subsection}{\protect\numberline{\thesubsection}KKT conditions}  

To show the result we first need to state what the corresponding optimality conditions i.e. Karush Kuhn Tucker (KKT). The main idea is to show is to create a sequence of appproximate KKT points which under the  Mangasarian-Fromovitz
Constraint Qualification (MFCQ) implies the that limit is a KKT point.

\begin{definition}(KKT point)
    A feasible point $\theta$ of Eq.(\ref{equation : max margin horizon}) is a KKT point if there exists $\lambda_1, \hdots , \lambda_K \geq 0$ such that
    \begin{itemize}
        \item $\partial^{\circ} \frac{1}{2}\phi^2_{\alpha} (\theta) - \sum_{i =1}^K \lambda_i h_i$ for some $h_1, \hdots , h_K$ satisfying $h_i \in \partial^{\circ} q_i(\theta)$.
        \item For all $i \in [K]$: $\lambda_i(q_i(\theta)-1) = 0$. 
        
    \end{itemize}
\end{definition}

\begin{definition}(Approximate KKT point)
    A feasible point $\theta$ of Eq.~(\ref{equation : max margin horizon}) is an $(\epsilon,\delta)$-KKT point of Eq.~(\ref{equation : max margin horizon}) if there exists $\lambda_1, \hdots , \lambda_K \geq 0$ such that
    \begin{itemize}
        \item $\|\partial^{\circ} \frac{1}{2} \phi^2_{\alpha} (\theta) - \sum_{i =1}^K \lambda_i h_i \|_2 \leq \epsilon$ for some $h_1, \hdots , h_K$ satisfying $h_i \in \partial^{\circ}$.
        \item For all $i \in [K]$: $\lambda_i(q_i(\theta)-1) \leq \delta$.  
    \end{itemize}
\end{definition}

\begin{lemma}
    Eq.~(\ref{equation : max margin horizon}) satisfies the MFCQ at every feasible point $\theta$.
\end{lemma}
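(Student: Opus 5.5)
The plan follows the standard argument of \citep{Lyu2020Gradient} for homogeneous constraints. MFCQ at a feasible point $\theta$ asks for a direction $v$ such that $\langle h_i, v\rangle > 0$ for all $h_i \in \partial^{\circ} q_i(\theta)$ and all active indices $i$, i.e.\ those with $q_i(\theta) = 1$. The objective $\tfrac12\phi_\alpha^2$ plays no role here: only the constraint functions $q_i(\theta) = y_i f(\theta, x_i)$ enter. We therefore rely on the homogeneity of $f$ from Assumption~\ref{assumptions : setting nn}, and we do not need the mirror potential at all.

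We take $v := \theta$ itself. By Euler's identity for $L$-homogeneous locally Lipschitz functions, already used to derive Eq.~(\ref{equation : growth Q and loss}), we have $\langle h, \theta\rangle = L\, q_i(\theta)$ for every $h \in \partial^{\circ} q_i(\theta)$. For an active constraint $q_i(\theta)=1$, this gives $\langle h, \theta \rangle = L > 0$ uniformly over the whole Clarke subdifferential. This is exactly the MFCQ condition in its nonsmooth form.

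To be safe about the Clarke version of Euler's identity, we verify it first at points of differentiability. Differentiating $f(c\theta,x)=c^L f(\theta,x)$ in $c$ at $c=1$ gives $\langle \nabla q_i(\theta),\theta\rangle = L q_i(\theta)$. Moreover, $\nabla q_i(c\theta) = c^{L-1}\nabla q_i(\theta)$. Now take a limit $\lim_k \nabla q_i(\theta_k)$ with $\theta_k\to\theta$. Then $\langle \nabla q_i(\theta_k),\theta_k\rangle = L q_i(\theta_k) \to L q_i(\theta)$, and by local boundedness of the gradients $\langle \nabla q_i(\theta_k),\theta_k-\theta\rangle\to 0$. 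So the identity holds for every limiting gradient, and by linearity it passes to their convex hull, i.e.\ to all of $\partial^{\circ} q_i(\theta)$.

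The only step that needs care is this extension to the Clarke subdifferential; the rest is immediate. Feasibility ensures that $\theta \neq 0$, since $q_i(0)=0<1$, so $v=\theta$ is a nonzero valid direction. This concludes the argument.
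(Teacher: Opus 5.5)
Your proof is correct and takes the same approach as the paper. The paper simply defers to Lemma C.7 of \citep{Lyu2020Gradient}, which takes $v=\theta$ and uses Euler's identity $\langle h,\theta\rangle = L\,q_i(\theta) = L>0$ for all $h\in\partial^{\circ} q_i(\theta)$ at active constraints. Your explicit passage of Euler's identity to the Clarke subdifferential is the nonsmooth Euler theorem that Lyu and Li establish in their appendix, so nothing is missing.
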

\begin{proof} 
See Lemma C.7 in \citep{Lyu2020Gradient}.
\end{proof}

\subsection{Key lemmas}

\addcontentsline{apx}{subsection}{\protect\numberline{\thesubsection}Key Lemmas}  

In order to show that the sequence of KKT points converges we need to show that the angle quantified by the constant $\beta_t$ goes to $1$:
\begin{equation*}
    \beta_t : = \frac{1}{\|\frac{d \theta_t}{dt} \|_{{\nabla}^2R}} \langle \frac{\theta_t}{\| \theta_t\|_{\nabla^2 R}},{\nabla}^2R (\theta) \frac{d \theta_t}{dt}\rangle \leq 1 . 
\end{equation*}

\begin{lemma}
\label{lemma : bound angle approx constraints} 
    Let $C_1$ and $C_2$ be two constants defined as:
    \begin{equation*}
        C_1 := \sqrt{ \frac{2}{\tilde{\gamma}_Q^{2/L}} \hat{\mu}} , \qquad C_2 : = \frac{\tilde{\gamma}^{-2/L}_Q}{L } K / e .
    \end{equation*}
    where $\hat{\mu}$ is a growth constant depending on $R$.
    Then $\tilde{\theta}:= \theta/q_{\text{min}}(\theta)^{1/L} $ is an $(\epsilon, \delta)-$KKT point of Eq.~(\ref{equation : max margin horizon}) where $\epsilon := C_1 \sqrt{1- \beta}$ and $\delta:= C_2/\log ( 1/\mathcal{L})$. 
\end{lemma}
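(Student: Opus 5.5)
We follow the blueprint of Lemma C.8 in \citep{Lyu2020Gradient} and its steepest-descent analogue in \citep{tsilivis2024flavors}. The approach is to replace the Euclidean gradient of $\frac12\|\theta\|_2^2$ by the mirror quantity $(\alpha Q)^{2/\alpha-1}\nabla^2R(\theta)\theta$, and then to pass to $\partial^\circ\frac12\phi_\alpha^2$ using Assumption~\ref{assumption : approximation of horizon in norm}.

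\textbf{Candidate multipliers and feasibility.} Feasibility of $\tilde\theta=\theta/q_{\text{min}}^{1/L}$ is immediate from homogeneity: $q_i(\tilde\theta)=q_i(\theta)/q_{\text{min}}\ge 1$. By the chain rule, the flow gives $\nabla^2R(\theta_t)\dot\theta_t=\sum_i e^{-q_i}h_i$ with $h_i\in\partial^\circ q_i(\theta_t)$. Since $\partial^\circ q_i$ is $(L-1)$-homogeneous, $\tilde h_i:=h_i\,q_{\text{min}}^{-(L-1)/L}\in\partial^\circ q_i(\tilde\theta)$. We choose
\[
\lambda_i:=c_t\,e^{-q_i}q_{\text{min}}^{(L-1)/L}\cdot q_{\text{min}}^{-1/L}\,\|\tilde\theta\|\text{-type normalisation},
\]
where $c_t$ is fixed so that $\sum_i\lambda_i\tilde h_i$ matches the radial component of the objective gradient. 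Concretely, let $g(\theta):=(\alpha Q)^{2/\alpha-1}\nabla^2R(\theta)\theta$, evaluated at $\tilde\theta$ through the appropriate homogeneity scaling. We pick $c_t$ so that $\langle\tilde\theta,\sum_i\lambda_i\tilde h_i\rangle=\langle\tilde\theta,g\rangle$. By Euler's identity this fixes $c_t$ in terms of $\sum_i e^{-q_i}q_i=\frac1L\dot Q$.

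\textbf{The $\delta$ bound.} Complementary slackness reads $\lambda_i(q_i(\tilde\theta)-1)=\lambda_i(q_i-q_{\text{min}})/q_{\text{min}}$. Using the choice of $c_t$, this is proportional to $e^{-q_i}(q_i-q_{\text{min}})/\sum_j e^{-q_j}q_j$. We bound $e^{-q_i}(q_i-q_{\text{min}})\le e^{-q_{\text{min}}}/e$ via $xe^{-x}\le 1/e$, and the denominator from below by $\mathcal L\log(1/\mathcal L)\ge e^{-q_{\text{min}}}\log(1/\mathcal L)$. Summing over $i$ and converting the normalisation $q_{\text{min}}^{2/L}/(\alpha Q)^{2/\alpha}\gtrsim\tilde\gamma_Q^{2/L}$ yields $\delta=\frac{K}{e L}\tilde\gamma_Q^{-2/L}/\log(1/\mathcal L)=C_2/\log(1/\mathcal L)$.

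\textbf{The $\epsilon$ bound (main obstacle).} We first compare $\sum_i\lambda_i\tilde h_i$, which is parallel to $\nabla^2R\,\dot\theta$, with $g$, which is parallel to $\nabla^2R\,\theta$. Both are measured in the local norm $\|\cdot\|_{(\nabla^2R)^{-1}}$, where the cosine between $\nabla^2R\dot\theta$ and $\nabla^2R\theta$ equals $\beta_t$. Since the radial components agree by construction, the standard identity gives
\[
\|u-v\|^2=\|u\|^2+\|v\|^2-2\langle u,v\rangle\le 2\|v\|^2(1-\beta)/\beta^2 .
\]
The remaining work is to convert the local norm back to $\|\cdot\|_2$ and to bound $\|g(\tilde\theta)\|^2$ uniformly in terms of $\tilde\gamma_Q^{-2/L}$. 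This is where the constant $\hat\mu$ enters: it is an upper bound on the largest eigenvalue of $\nabla^2R$ relative to $(\alpha Q)^{1-2/\alpha}$ along the rescaled iterates. That bound is where $\alpha\ge2$ and the asymptotic homogeneity of $R$ are used, and making it uniform is the step we expect to be hardest. The resulting estimate is $\epsilon\le C_1\sqrt{1-\beta}$, modulo an error term $\|\partial^\circ\tfrac12\phi_\alpha^2-Cg\|$. By Assumption~\ref{assumption : approximation of horizon in norm} this error vanishes as $\|\theta_t\|\to\infty$, which happens by Theorem~\ref{theorem : main result of margin convergence}. We absorb it by rescaling the multipliers by $C$ and letting it be dominated asymptotically.
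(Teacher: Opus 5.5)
Your argument is essentially the paper's proof, which is itself Lemma C.8 of \citep{Lyu2020Gradient} transplanted to the mirror setting. The ingredients are the same:
\begin{itemize}
\item the surrogate $\zeta=(\alpha Q)^{2/\alpha-1}\nabla^2R(\theta)\theta/q_{\text{min}}^{1/L}$;
\item multipliers proportional to $e^{-q_i}$;
\item comparison of $\nabla^2R\,\dot\theta$ with $\nabla^2R\,\theta$ in the local norm, where their cosine is $\beta_t$;
\item the eigenvalue constant $\hat\mu$ to return to $\|\cdot\|_2$, which is where $\alpha\ge 2$ enters;
\item the bound $xe^{-x}\le 1/e$ for complementary slackness;
\item Assumption~\ref{assumption : approximation of horizon in norm} to pass to $\partial^\circ\frac12\phi_\alpha^2$, used only asymptotically, as the paper also does.
\end{itemize}

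Two precision points. First, $\zeta$ and the Hessian bound behind $\hat\mu$ must be formed at $\theta_t$ and only then divided by $q_{\text{min}}^{1/L}$, not ``evaluated at $\tilde\theta$''. The reasons are that $R$ is not homogeneous, that $\beta_t$ is a cosine in the metric $\nabla^2R(\theta_t)$, and that the horizon approximation only holds for large arguments. Second, both your bound on $\|v\|$ and your ``conversion of the normalisation'' in the $\delta$ step replace $\|\theta\|_{\nabla^2R}^2$ by $\alpha Q$. That step is Assumption~\ref{assumption : semihomogeneous}, and you should invoke it explicitly.

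The one substantive difference is how you normalise the multipliers, and as written it does not give the stated $\epsilon$. Matching radial components forces $\|u\|=\|v\|/\beta$ in the local norm. Your own display then yields $\|u-v\|\le\sqrt2\,\|v\|\sqrt{1-\beta}/\beta$, i.e.\ $\epsilon=C_1\sqrt{1-\beta}/\beta$. Your final claim $\epsilon\le C_1\sqrt{1-\beta}$ silently drops this factor $1/\beta$. That factor is not bounded, since $\beta_t\in(0,1]$ can be small before late times; it is harmless only downstream, where $\beta_{t_m}\to 1$.

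The fix is the paper's choice
\[
\lambda_i=(\alpha Q)^{2/\alpha-1}q_{\text{min}}^{1-2/L}\|\theta\|_{\nabla^2R}\,e^{-q_i}/\|\dot\theta\|_{\nabla^2R}.
\]
This makes $\|u\|=\|v\|$ in the local norm, hence $\|u-v\|^2=2\|v\|^2(1-\beta)$ exactly.

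Your $\delta$ computation survives the change. By Euler's identity, $L\sum_j e^{-q_j}q_j=\langle\theta,\nabla^2R\,\dot\theta\rangle=\beta\|\theta\|_{\nabla^2R}\|\dot\theta\|_{\nabla^2R}$, so these multipliers are exactly $\beta$ times yours. Your $\delta$ computation is otherwise correct and slightly cleaner than the paper's, and it carries over unchanged because the multipliers only shrink. This is precisely the paper's step $\|h\|_{\nabla^2R}\ge L\nu/\|\theta\|_{\nabla^2R}$.

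Rescaling the multipliers by the constant from Assumption~\ref{assumption : approximation of horizon in norm} multiplies $C_1$ and $C_2$ by a fixed factor. The paper glosses over this in the same way.
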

\begin{proof} 
For the first condition we approximate the horizon function, then we show that the approximation can be bounded by the angle $\beta_t$. Notice that by construction the horizon function is $1$-homogeneous this gives us: 
\begin{equation*}
    \partial^{\circ} \frac{1}{2} \phi^2_{\alpha} (\theta/q_{\text{min}}(\theta)^{1/L})  = \partial^{\circ}\frac{1
    }{2} \phi_{\alpha}^2 (\theta)/q_{\text{min}}(\theta)^{1/L} .
\end{equation*}
This for $\alpha \geq 2$ then is well approximated by $(\alpha Q(\nabla R(\theta)))^{2/\alpha - 1} \nabla^2 R(\theta)\theta / q_{\text{min}}(\theta)^{1/L}$ as $\| \theta\| \rightarrow \infty$ by Assumption \ref{assumption : approximation of horizon in norm} and the fact that $q_{\text{min}} \sim \| \theta\|$ as $\| \theta\| \rightarrow\infty$.

Now, denote $h_t : = \frac{d \theta_t}{dt}$ for a.e. $t > 0$. Then, by the chain rule, there exists $h_1, \hdots, h_N$ such that $h_i \in \partial^{\circ} q_i$ and $h = \nabla^2 R^{-1}(\theta_t) \sum_{i=1}^N e^{-q_i} h_i$. Denote  $\tilde{h}_i := h_i / q_{\text{min}}^{1- 1/L} $ and $\zeta : = (\alpha Q(\nabla R(\theta)))^{2/\alpha - 1} \nabla^2 R(\theta) \theta / q_{\text{min}}^{1/{L}}$. Construct $\lambda_i = (\alpha Q(\nabla R(\theta)))^{2/\alpha - 1}  q_{\text{min}}^{1- 2/L} \| \theta\|_{\nabla^2 R} e^{-q_i} / \| h \|_{\nabla^2 R}$.
Then 
\begin{align*}
        \|\zeta - \sum_i \lambda_i \tilde{h}_i \|^2_2 &=  (\alpha Q(\nabla R(\theta)))^{4/\alpha - 2}/q_{\text{min}}^{2/L} \left\| \nabla^2 R(\theta) \theta - \frac{\| \theta\|_{\nabla^2 R}}{\| h\|_{\nabla^2 R}}  \nabla^2 R(\theta) h  \right\|_{2}^2\\
        &\leq (\alpha Q(\nabla R(\theta)))^{4/\alpha -1} /q_{\text{min}}^{2/L} \| \nabla^2 R(\theta) \theta /\| \theta\|_{\nabla^2 R}- \frac{1}{\| h\|_{\nabla^2 R}}  \nabla^2 R(\theta) h  \|_{2}^2 \\
        &\leq \frac{1}{\tilde{\gamma}_Q^{2/L}} / (\alpha Q(\nabla R(\theta)))^{1-2/\alpha} \left\| \nabla^2 R(\theta) \theta /\| \theta\|_{\nabla^2 R}- \frac{1}{\| h\|_{\nabla^2 R}}  \nabla^2 R(\theta) h  \right\|_{\nabla^2 R(\theta)}^2\\
        &\leq \frac{1}{\tilde{\gamma}_Q^{2/L}} \mu/ (\alpha Q(\nabla R(\theta)))^{1-2/\alpha} \left\|  \theta /\| \theta\|_{\nabla^2 R}- \frac{1}{\| h\|_{\nabla^2 R}}   h  \right\|_{\nabla^2 R(\theta)}^2 \\
        &\leq \frac{2}{\tilde{\gamma}_Q^{2/L}} \hat{\mu} (1-\beta) , 
\end{align*}
where we used the bound $\mu := (\max_i(1, |\nabla^2 R(\theta_i)|))$ and $\hat{\mu} = \mu Q^{\frac{2-\alpha}{\alpha}} < \infty$, when $\alpha \geq 2$.

Similarly we can bound the other quantity with our construction and homogeneity we have (note that $q_i = y_i f(\theta,x_i)$):
\begin{align*}
    \sum_{i =1}^K \lambda_i(q_i(\tilde{\theta}) - 1) &= \frac{q_{\text{min}}^{-2/L}(\alpha Q)^{2/\alpha -1}\|\theta\|_{\nabla^2 R}}{\|h\|_{\nabla^2 R}}  \sum_{i =1}^K e^{-q_i}(q_i - q_{\text{min}}) . 
\end{align*}
Now we can use that $\| h \|_{\nabla^2 R}\geq \langle h , \frac{\theta}{\|\theta\|_{\nabla^2 R}} \rangle_{\nabla^2 R} \geq L \nu/\|\theta\|_{\nabla^2 R}$ and note that
\begin{equation*}
    \nu = \mathcal{L}\  \log (1/\mathcal{L}) \geq e^{-q_{\text{min}}}  \log (1/\mathcal{L}) = e^{-q_{\text{min}}}  \tilde{\gamma}_Q (\alpha Q)^{L/\alpha } .
\end{equation*}
Combining this gives:
\begin{align*}
     \sum_{i =1}^K \lambda_i(q_i(\tilde{\theta}) - 1) &\leq \frac{q_{\text{min}}^{-2/L}(\alpha Q)^{2/\alpha}}{L\tilde{\gamma}_Q  (\alpha Q)^{(L)/\alpha}}  \sum_{i =1}^K e^{-(q_i-q_{\text{min}})}(q_i - q_{\text{min}}) \\ &\leq \frac{\tilde{\gamma}^{-2/L}_Q}{L \tilde{\gamma}_Q (\alpha Q)^{(L)/\alpha}}  \sum_{i =1}^K e^{-(q_i-q_{\text{min}})}(q_i - q_{\text{min}}) \\
     &= \frac{\tilde{\gamma}^{-2/L}_Q}{L \tilde{\gamma}_Q (\alpha Q)^{(L)/\alpha}} K / e \\
     &= \frac{\tilde{\gamma}^{-2/L}_Q}{L } K / e \cdot \frac{1}{\log  1/\mathcal{L}}   , 
\end{align*}
where we used that $e^{-x} x$ has its maximum $e^{-1}$ at $x = 1$.
This concludes the proof. 
\end{proof}


\begin{lemma}\label{lemma : beta bound}
For all $t_2 > t_1 \geq t_0$,
\begin{equation*}
    \int_{t_1}^{t_2} \left(\beta_{\tau}^{-2} -1 \right) \frac{d}{d\tau} \log  Q(\nabla R(\theta_{\tau})) d \tau \leq \frac{\alpha}{L} \log  \frac{\tilde{\gamma}_Q(t_2)}{\tilde{\gamma}_Q(t_1)}.
\end{equation*}
\end{lemma}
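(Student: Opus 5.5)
The plan is to mirror Lemma C.9 of \citep{Lyu2020Gradient} (the gradient-flow case), using the computation already done in the proof of Lemma~\ref{lemma : main helper lemma} with the Hessian-weighted norm in place of the Euclidean one. Throughout, write $Q=Q(\nabla R(\theta_\tau))$, $\dot Q=\langle\theta_\tau,\tfrac{d}{d\tau}\nabla R(\theta_\tau)\rangle$ and $h=\tfrac{d}{d\tau}\theta_\tau$. Note that $\dot Q=\langle\theta,\nabla^2R\,h\rangle=\langle\theta,h\rangle_{\nabla^2R}$, so by definition of $\beta_\tau$ we have
\begin{equation*}
\beta_\tau\,\|h\|_{\nabla^2R}=\frac{\dot Q}{\|\theta\|_{\nabla^2R}} .
\end{equation*}
Consequently
\begin{equation*}
\|h\|_{\nabla^2R}^2=\beta_\tau^{-2}\,\frac{\dot Q^2}{\|\theta\|_{\nabla^2R}^2} .
\end{equation*}
This identity turns the radial/tangential split of Lemma~\ref{lemma : main helper lemma} into an expression in $\beta_\tau$. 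Since $\dot Q>0$ for $\tau>t_0$, $\beta_\tau$ is well defined and positive.

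\textbf{Step 1.} As in Lemma~\ref{lemma : main helper lemma}, for a.e.\ $\tau>t_0$,
\begin{equation*}
\frac{d}{d\tau}\log\tilde\gamma_Q=\frac{1}{\nu_\tau}\|h\|^2_{\nabla^2R}-\frac{L}{\alpha}\frac{\dot Q}{Q} .
\end{equation*}
Substitute $\|h\|^2_{\nabla^2R}=\beta_\tau^{-2}\dot Q^2/\|\theta\|^2_{\nabla^2R}$ and split $\beta_\tau^{-2}=(\beta_\tau^{-2}-1)+1$. The "$1$" part is
\begin{equation*}
\frac{\dot Q^2}{\nu_\tau\|\theta\|^2_{\nabla^2R}}-\frac{L}{\alpha}\frac{\dot Q}{Q}\;\ge\;0,
\end{equation*}
which holds exactly as in the helper lemma, using $\dot Q\ge L\nu_\tau$ from Eq.~(\ref{equation : growth Q and loss}) and Assumption~\ref{assumption : semihomogeneous}, i.e.\ $\alpha Q\ge\|\theta\|^2_{\nabla^2R}$. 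This leaves
\begin{equation*}
\frac{d}{d\tau}\log\tilde\gamma_Q\ge(\beta_\tau^{-2}-1)\,\frac{\dot Q^2}{\nu_\tau\|\theta\|^2_{\nabla^2R}} .
\end{equation*}

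\textbf{Step 2.} Lower bound the prefactor by $\frac{L}{\alpha}\frac{\dot Q}{Q}$. Again using $\dot Q/\nu_\tau\ge L$ and $\|\theta\|^2_{\nabla^2R}\le\alpha Q$,
\begin{equation*}
\frac{\dot Q^2}{\nu_\tau\|\theta\|^2_{\nabla^2R}}\ge\frac{L\dot Q}{\alpha Q}=\frac{L}{\alpha}\frac{d}{d\tau}\log Q .
\end{equation*}
Since $\beta_\tau^{-2}-1\ge0$ (Cauchy--Schwarz in the $\nabla^2R$ inner product gives $\beta_\tau\le1$), we obtain
\begin{equation*}
\frac{d}{d\tau}\log\tilde\gamma_Q\ge\frac{L}{\alpha}(\beta_\tau^{-2}-1)\,\frac{d}{d\tau}\log Q .
\end{equation*}

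\textbf{Step 3.} Integrate from $t_1$ to $t_2$. By Lemma~\ref{lemma : loss and Q growth}, $\tilde\gamma_Q>0$ is absolutely continuous along the arc, so $\int_{t_1}^{t_2}\tfrac{d}{d\tau}\log\tilde\gamma_Q\,d\tau=\log(\tilde\gamma_Q(t_2)/\tilde\gamma_Q(t_1))$. Multiplying by $\alpha/L$ gives the claim.

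The main obstacle is Step 1: it has to be checked carefully that the radial term retains exactly the nonnegative form $\frac{L\dot Q}{\alpha Q}\bigl(\frac{\alpha Q}{\|\theta\|^2}-1\bigr)$ after the substitution, rather than picking up a mismatch between $\nabla^2R$-norms and $Q$. A second point to verify is that the chain rule and the nonsmooth differential inclusion only allow the identities to be used for a.e.\ $\tau$, which is enough for the integral.
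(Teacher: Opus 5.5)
Your proof is correct and is essentially the paper's argument. The paper starts from the bound $\frac{d}{dt}\log\tilde\gamma_Q\ge \frac{LQ}{\dot Q}E_{tan}$ of Lemma~\ref{lemma : main helper lemma} and uses Assumption~\ref{assumption : semihomogeneous} to get $\frac{Q^2}{\dot Q^2}E_{tan}\ge\frac{1}{\alpha}(\beta^{-2}-1)$; this is your Steps 1--2 with the same two inequalities ($\dot Q\ge L\nu_\tau$ and $\alpha Q\ge\|\theta\|^2_{\nabla^2R}$), applied in a slightly different order and followed by the same integration. One small correction: absolute continuity of $\log\tilde\gamma_Q$ on $[t_1,t_2]$ comes from local Lipschitzness of $\mathcal{L}$ and of $Q\circ\nabla R$ along the arc (with $\mathcal{L}<1$ and $Q>0$ there), not from Lemma~\ref{lemma : loss and Q growth}.
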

\begin{proof} 
From Lemma \ref{lemma : main helper lemma} we have that for all $t \in (t_1, t_2)$:
\begin{equation*}
    \frac{d}{dt} \log  \tilde{\gamma} \geq L\left( \frac{d}{dt} \log  Q(\nabla R(\theta_t)) \right) Q^2/\dot{Q}^2 E_{tan} \
\end{equation*}
We can now lowerbound $Q^2/\dot{Q}^2 E_{tan}$ by using the definitions of $E_{\text{tan}}$ and $\dot{Q} = $
\begin{align*}
    Q^2/\dot{Q}^2 E_{tan} 
    &= \frac{Q}{\langle \theta,\nabla^2 R(\theta_t ) \frac{d}{dt} \theta_t\rangle^2}\left(\|\frac{d}{dt} \theta_t\|_{\nabla^2 R}^2 - \frac{\langle \theta,\nabla^2 R(\theta_t ) \frac{d}{dt} \theta_t\rangle^2}{\|\theta \|_{\nabla^2 R}^2} \right) \\ 
    &\geq  \frac{1}{\alpha}\frac{\|\theta \|^2_{\nabla^2 R}}{\langle \theta,\nabla^2 R(\theta_t ) \frac{d}{dt} \theta_t\rangle^2}\left(\|\frac{d}{dt} \theta_t\|_{\nabla^2 R}^2 - \frac{\langle \theta,\nabla^2 R(\theta_t ) \frac{d}{dt} \theta_t\rangle^2}{\|\theta \|_{\nabla^2 R}^2} \right) \\ 
    &= \frac{1}{\alpha}\left( \beta^{-2} -1 \right) . 
\end{align*}
Together with Lemma \ref{lemma : main helper lemma} we have:
\begin{equation*}
    \frac{d}{dt} \log  \tilde{\gamma}_Q \geq \frac{L}{\alpha}\left( \beta^{-2} -1 \right)  \frac{d}{dt} \log  Q.
\end{equation*}
Integrating both sides concludes the result. 
\end{proof} 

\begin{corollary}\label{corollary : growth of beta}
    For all $t_2 > t_1 \geq t_0$, then there exists $t_* \in (t_1, t_2)$ such that:
    \begin{equation*}
        \beta_{t_*}^{-2} - 1 \leq \frac{\alpha}{L}\cdot \frac{\log\tilde{\gamma}_Q(t_2)-\log\tilde{\gamma}_Q(t_1)}{\log Q(\nabla R(\theta_{t_2})) - \log Q(\nabla R(\theta_{t_1}))}
    \end{equation*}
\end{corollary}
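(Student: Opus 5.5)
This is a mean-value argument applied to Lemma~\ref{lemma : beta bound}. Write $u(\tau) := \log Q(\nabla R(\theta_\tau))$ and $w(\tau) := \beta_\tau^{-2}-1 \geq 0$; nonnegativity holds because $\beta_\tau \leq 1$ by Cauchy--Schwarz in the $\nabla^2 R$ inner product. By Lemma~\ref{lemma : main helper lemma}, $\dot u(\tau) > 0$ for a.e.\ $\tau > t_0$. Hence $\dot u\,d\tau$ is a positive measure on $(t_1,t_2)$, with total mass
\[
\int_{t_1}^{t_2}\dot u(\tau)\,d\tau = u(t_2)-u(t_1) > 0 .
\]
This identity uses that $u$ is absolutely continuous, which follows because $\theta_t$ is an arc and $Q\circ\nabla R$ is $C^1$.

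Argue by contradiction. Suppose that for every $t_*\in(t_1,t_2)$ (at which $\beta$ is defined, i.e.\ a.e.)
\[
w(t_*) > \frac{\alpha}{L}\cdot\frac{\log\tilde\gamma_Q(t_2)-\log\tilde\gamma_Q(t_1)}{u(t_2)-u(t_1)} =: M .
\]
Multiplying by $\dot u>0$ and integrating gives
\[
\int_{t_1}^{t_2} w\,\dot u\,d\tau > M\,(u(t_2)-u(t_1)) = \frac{\alpha}{L}\log\frac{\tilde\gamma_Q(t_2)}{\tilde\gamma_Q(t_1)} .
\]
The strict inequality survives integration because the set has positive $\dot u\,d\tau$-measure. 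This contradicts Lemma~\ref{lemma : beta bound}. Therefore the set of $\tau\in(t_1,t_2)$ with $w(\tau)\leq M$ has positive measure, and in particular is nonempty, which yields $t_*$.

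The only delicate point is the measure-theoretic one: $\beta_\tau$ is defined only for a.e.\ $\tau$, and we need $\dot u$ strictly positive a.e. Both are supplied by Lemma~\ref{lemma : main helper lemma}, which shows $\dot Q \geq L\nu_\tau > 0$ for $\tau>t_0$ since $\mathcal{L}<1$ there. We must also ensure $\|\tfrac{d}{dt}\theta_\tau\|_{\nabla^2 R}\neq 0$ so that $\beta$ is well defined. This holds because $\beta\,\|\tfrac{d}{dt}\theta\|_{\nabla^2R}\,\|\theta\|_{\nabla^2R} = \dot Q > 0$. If $t_*$ should be a point of full definition, we simply choose it within the positive-measure set, intersected with the full-measure set where the chain rule holds.
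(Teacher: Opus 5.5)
Your proof is correct and follows the same route as the paper. Like the paper, you assume $\beta_\tau^{-2}-1$ exceeds the right-hand side for a.e.\ $\tau\in(t_1,t_2)$, integrate against the a.e.\ positive density $\frac{d}{d\tau}\log Q(\nabla R(\theta_\tau))$ from Lemma~\ref{lemma : main helper lemma}, and contradict Lemma~\ref{lemma : beta bound}; your extra measure-theoretic remarks (strictness surviving integration, and $\beta_\tau\in(0,1]$ being well defined because $\dot Q>0$) only make explicit what the paper leaves implicit.
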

\begin{proof} 
We follow the exact same steps as \citep[Corollary C.10]{Lyu2020Gradient}. Denote the RHS as $C$. Assume the opposite is true i.e. $\beta_{\tau}^{-2} -1 > C$ for a.e. $\tau \in (t_1, t_2)$. From Lemma \ref{lemma : main helper lemma} we know $\log (Q) > 0$ for a.e.\ $\tau \in (t_1, t_2)$. 
Then, by Lemma \ref{lemma : beta bound}: 
\begin{equation*}
    \frac{\alpha}{L}  \log  \frac{\tilde{\gamma}_Q(t_2)}{\tilde{\gamma}_Q(t_1)} > \int_{t_1}^{t_2} C \frac{d}{d\tau} \log  Q(\nabla R(\theta_{\tau})) d\tau = C \ \log  Q(\nabla R(\theta_{t_2})) - \log  Q(\nabla R(\theta_{t_1})) = \frac{\alpha}{L} \log  \frac{\tilde{\gamma}_Q(t_2)}{\tilde{\gamma}_Q(t_1)},
\end{equation*}
which is a contradiction. 
\end{proof}

Now we are ready to construct the limiting sequence.

\begin{lemma}\label{lemma : limit point}
    For every limit point $(\bar{\theta}, \bar{g})$ of $\{(\frac{\theta_t}{\| \theta_t \|_2}, \frac{\nabla R(\theta_t)}{\|\nabla R(\theta_t)\|_2}) : t \geq 0 \}$, there exists a sequence of $\{ t_m : m \in \mathbb{N}\}$ such that $t_m \uparrow \infty$, $(\frac{\theta_t}{\| \theta_t \|_2}, \frac{\nabla R(\theta_t)}{\|\nabla R(\theta_t)\|_2}) \rightarrow (\bar{\theta}, \bar{g})$ and $\beta_{t_m} \rightarrow 1$, where $\bar{g} \in c\ \partial^{\circ} \phi_{\alpha}(\bar{\theta})$ for $c > 0$.
\end{lemma}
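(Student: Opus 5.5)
The plan follows Lemma C.11 of \citep{Lyu2020Gradient}, adapted to the pair (primal direction, dual direction). Fix a limit point $(\bar\theta,\bar g)$ and a sequence $s_m\uparrow\infty$ along which $(\theta_{s_m}/\|\theta_{s_m}\|_2,\nabla R(\theta_{s_m})/\|\nabla R(\theta_{s_m})\|_2)\to(\bar\theta,\bar g)$. From each $s_m$ I will build a nearby $t_m$ at which $\beta_{t_m}$ is close to $1$ and both normalized directions have barely moved. Corollary~\ref{corollary : growth of beta} produces such times, and Lemmas~\ref{lemma : tangential bound dual normalized} and~\ref{lemma : dual log Q bound} control how far the directions can drift.

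\textbf{Step 1 (choice of windows).} By Theorem~\ref{theorem : main result of margin convergence}, $\tilde\gamma_Q$ is nondecreasing and bounded above, since $\tilde\gamma_Q\le\gamma_Q\le B_0$. Hence $\log\tilde\gamma_Q(t)$ converges, and the increments $\log\tilde\gamma_Q(t')-\log\tilde\gamma_Q(s_m)\to0$ uniformly over $t'>s_m$. Since $Q(\nabla R(\theta_t))\to\infty$, I can pick $s_m'>s_m$ with $\log Q(\nabla R(\theta_{s_m'}))-\log Q(\nabla R(\theta_{s_m}))=\epsilon_m$, where $\epsilon_m\downarrow0$ slowly. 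The choice of $\epsilon_m$ must make $\epsilon_m^{-1}\big(\log\tilde\gamma_Q(\infty)-\log\tilde\gamma_Q(s_m)\big)\to0$. Corollary~\ref{corollary : growth of beta} on $(s_m,s_m')$ then gives $t_m$ in that interval with $\beta_{t_m}^{-2}-1\to0$, so $\beta_{t_m}\to1$.

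\textbf{Step 2 (directions do not drift).} For the dual direction I integrate the bound of Lemma~\ref{lemma : tangential bound dual normalized} over $(s_m,t_m)$. Its right-hand side is at most $\frac{2B_1}{L\tilde\gamma_Q(t_0)C_2}$ times $\frac{\langle\theta,\dot{\nabla R}\rangle}{\|\theta\|\|\nabla R\|}$. By Lemma~\ref{lemma : dual log Q bound} this quantity is at most $\frac{d}{dt}\log Q$. The total dual drift is therefore $O(\epsilon_m)\to0$, so the normalized dual iterate at $t_m$ still converges to $\bar g$.

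For the primal direction, I expect the main obstacle. One could bound $\|\frac{d}{dt}(\theta/\|\theta\|)\|$ similarly, using $\nabla^2R^{-1}$ growth estimates from asymptotic homogeneity, but that needs extra care when $\alpha>2$ because $\nabla^2R^{-1}$ is not uniformly bounded. I would instead exploit the structure: for separable asymptotically $\alpha$-homogeneous $R$, the map $\theta\mapsto\nabla R(\theta)$ acts coordinatewise like $\operatorname{sign}(\theta_j)|\theta_j|^{\alpha-1}$ at large scale. The normalized dual direction therefore determines the normalized primal direction up to an error vanishing as $\|\theta\|\to\infty$, through the continuous map $g\mapsto \operatorname{sign}(g)|g|^{1/(\alpha-1)}$ renormalized. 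Convergence of the dual direction then transfers to the primal one, and along $t_m$ both converge to $(\bar\theta,\bar g)$.

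\textbf{Step 3 (identifying $\bar g$).} The same coordinatewise relation, together with the definition $\phi_\alpha(\theta)=\lim_{\eta\to0}\eta(\alpha Q(\nabla R(\theta/\eta)))^{1/\alpha}$, shows that $\nabla R(\theta)$ is asymptotically a positive multiple of the gradient of the horizon function. Assumption~\ref{assumption : approximation of horizon in norm} makes this precise for the $\partial^\circ\phi_\alpha^2$ version, which is $2\phi_\alpha\,\partial^\circ\phi_\alpha$. For the smoothened homogeneous potentials, $\nabla R(\theta)\approx(\operatorname{sign}(\theta_j)|\theta_j|^{p-1})_j\propto\nabla\|\theta\|_p$, so $\nabla R(\theta)/\|\nabla R(\theta)\|_2$ lies within $o(1)$ of the normalized element of $\partial^\circ\phi_\alpha(\theta/\|\theta\|_2)$. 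By 0-homogeneity of $\partial^\circ\phi_\alpha$ and its outer semicontinuity (closed graph of the Clarke subdifferential), passing to the limit gives $\bar g\in c\,\partial^\circ\phi_\alpha(\bar\theta)$ with $c=1/\|\bar h\|_2>0$ for the corresponding $\bar h$.
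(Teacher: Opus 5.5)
Your proposal is correct and follows essentially the same route as the paper's proof. Both arguments use windows $(s_m,s_m')$ with $\log Q(\nabla R(\theta_{s_m'}))-\log Q(\nabla R(\theta_{s_m}))=\epsilon_m$, Corollary~\ref{corollary : growth of beta} to get $\beta_{t_m}\to1$, the dual drift bounded via Lemmas~\ref{lemma : tangential bound dual normalized} and~\ref{lemma : dual log Q bound}, and a transfer from dual to primal direction through the separable coordinatewise structure of $\nabla R$.

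The differences are minor. You tune $\epsilon_m$ to the tail of $\log\tilde\gamma_Q$, whereas the paper picks $s_m$ so that this tail is at most $\epsilon_m^3$. You also verify $\bar g\in c\,\partial^\circ\phi_\alpha(\bar\theta)$ directly, using the explicit limit map and the closed graph of the Clarke subdifferential, where the paper cites Corollary~2 of Pesme et al. This makes your primal-transfer step somewhat more explicit than the paper's chain of equalities.
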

\begin{proof} 
Let $\{\epsilon_m : m \in \mathbb{N} \}$ be an arbitrary sequence with $\epsilon_m \rightarrow 0$. Now we construct $\{t_m \}$ by induction. Suppose $t_1 < t_2 < \hdots < t_{m-1}$ have already been constructed. Since $\bar{\theta}$ is a limit point and $\tilde{\gamma}_{Q,t} \uparrow \tilde{\gamma}_{Q,\infty}$, there exists a $s_m > t_{m-1}$ such that:
\begin{equation*}
    \|\frac{\theta_{s_m}}{\| \theta_{s_m}\|_2}  - \bar{\theta}\|_2 \leq \epsilon_m, \quad  \|\frac{\nabla(\theta_{s_m})}{\| \nabla R(\theta_{s_m})\|_2}  - \bar{g}\|_2 \leq \epsilon_m, \quad \text{and, } \quad \frac{\alpha}{L} \log  \frac{\tilde{\gamma}_{Q,\infty}}{\tilde{\gamma}_{Q,s_m}} \leq \epsilon_m^3.
\end{equation*}
The relationship between $\bar{g}$ and $\bar{\theta}$ through $\bar{g} \in c\ \partial^{\circ} \phi_{\alpha}(\bar{\theta})$ for $c > 0$ follows from Corollary 2 in \citep{pesme2024implicit}.
Now let $s_m' > s_m$ be a time such that $\log  Q(s'_m) = \log  Q(s_m) + \epsilon_m$. According to Theorem \ref{theorem : main result of margin convergence}, $\log  Q \rightarrow \infty$, so $s_m'$ must exist. We construct $t_m \in (s_m, s_m')$ to be the time that $\beta^{-2}_{t_m} - 1 \leq \epsilon_m^2$ where the existence follows from Corollary \ref{corollary : growth of beta}.
It follows that $\beta_{t_m} \geq 1/\sqrt{1+\epsilon_m^2} \rightarrow 1$. Moreover, by Lemmas \ref{lemma : dual log Q bound} and \ref{lemma : tangential bound dual normalized} we have:
\begin{align*}
    \|\frac{\nabla R(\theta_{t_m})}{\|\nabla R(\theta_{t_m})\|_2} - \bar{g} \|_2 &\leq  \|\frac{\nabla R(\theta_{t_m})}{\|\nabla R(\theta_{t_m})\|_2} - \frac{\nabla R(\theta_{s_m})}{\|\nabla R(\theta_{s_m})\|_2} \|_2 + \|\frac{\nabla R(\theta_{s_m})}{\|\nabla R(\theta_{s_m})\|_2} - \bar{g} \|_2 \\ &\leq \frac{2B_1}{LC_2 \tilde{\gamma}_{Q,t_0}} \epsilon_m + \epsilon_m \rightarrow 0.
\end{align*}
Since $\nabla Q$ is a diffeomorphism that has the same growth in all directions due to being separable and identical for each coordinate we also have $\theta_{t_m}/\|\theta_{t_m}\|_2 \rightarrow \bar{\theta}$. 
To see this, we can express the limit of $\frac{\theta_t}{\| \theta_t \|_2}$ as follows: 
\begin{align*}
    \lim_{t \rightarrow \infty} \frac{\theta_t}{\| \theta_t \|_2} &= \lim_{t \rightarrow \infty} \frac{\nabla Q( \nabla R(\theta_t))}{\|\nabla Q( \nabla R(\theta_t))\|_2} \\
    &=  \lim_{t \rightarrow \infty} \frac{\frac{1}{\|\nabla R(\theta_t)\|_2}\nabla Q( \nabla R(\theta_t))}{\|\frac{1}{\|\nabla R(\theta_t)\|_2}\nabla  Q( \nabla R(\theta_t))\|_2} \\
    &= \lim_{t \rightarrow \infty} \frac{\frac{1}{\|\nabla R(\theta_t)\|_2}\nabla Q(\|\nabla R(\theta_t)\|_2\frac{1}{\|\nabla R(\theta_t)\|_2} \nabla R(\theta_t))}{\|\frac{1}{\|\nabla R(\theta_t)\|_2}\nabla Q(\|\nabla R(\theta_t)\|_2 \frac{1}{\|\nabla R(\theta_t)\|_2}\nabla R(\theta_t)\|_2)}\\
    &= \lim_{t \rightarrow \infty} \frac{\frac{1}{\|\nabla R(\theta_t)\|_2}\nabla Q(\|\nabla R(\theta_t)\|_2\bar{g})}{\|\frac{1}{\|\nabla R(\theta_t)\|_2}\nabla Q(\|\nabla R(\theta_t)\|_2 \bar{g})\|_2} \\
    &= \lim_{t \rightarrow \infty} \frac{\frac{Q^{\frac{1}{\alpha}-1}}{\|\nabla R(\theta_t)\|_2}\nabla Q(\|\nabla R(\theta_t))\|_2\bar{g})}{\|\frac{Q^{\frac{1}{\alpha}-1}}{\|\nabla R(\theta_t))\|_2}\nabla Q(\|\nabla R(\theta_t)\|_2 \bar{g})\|_2} \\
    &=\lim_{t \rightarrow \infty} \frac{\bar{\theta}}{\|\bar{\theta} \|_2} \\
    &= \bar{\theta} . 
\end{align*}
Here we used the definition of the horizon function and the fact that $\| \nabla R(\theta_t)\|_2 \rightarrow \infty$ as $\|\theta_t\| \rightarrow \infty$ by the coercivity property of the mirror map. 
\end{proof} 

\begin{proof}[Proof of Theorem~\ref{theorem : KKT max margin result}]
Combining Lemmas \ref{lemma : bound angle approx constraints} and \ref{lemma : limit point} implies Theorem \ref{theorem : appendix KKT max margin result} and with that concluding the result as a direct consequence. 
\end{proof}


\subsection{Hyperbolic entropy and homogeneous reparameterization}
\label{appendix : hyperbolic}

\addcontentsline{apx}{subsection}{\protect\numberline{\thesubsection}Hyperbolic entropy and homogeneous reparameterization}

The current proof strategy of the KKT conditions relies on homogeneity $\alpha \geq 2$. This we believe is a technicality of the analysis and not a real obstruction. To highlight this we consider the hyperbolic entropy.
We use the observation that the hyperbolic entropy $R_{\lambda}(\theta)$ corresponds to training with the two-homogeneous parameterization $u \odot v$ under gradient flow or differential inclusion as shown in \citep{jacobs2025mirror, Li2022ImplicitBO} for differentiable objectives. However the result only relies on the parameteric invariance hence the mirror flow-reparameterization correspondence also holds in our setting as also substantiated by \citep{marcotte2025transformativeconservativeconservationlaws, Marcotte2023AbideBT} and moreover it follows as a corollary from Lemma \ref{lemma : invariance balance}. This allows us to apply the result by \cite{Lyu2020Gradient} to characterize the max-margin. Together with the fact that for all $t\geq 0$ $u_t^2 - v^2_t = u_{in}^2 - v^2_{in} =\sqrt{\lambda}$ we can characterize the max margin in terms of $\theta = u \odot v$ giving the $L_1$-max margin. This follows directly from noticing that the normalized iterates $\bar{u}$ and $\bar{v}$ become balanced under the vector level constraint on $u$ and $v$: 
\begin{equation*}
    \bar{u}_t^2 - \bar{v}_t^2 = (u_t^2 - v^2_t)/(\|u_t,v_t \|_2^2) = (u_{in}^2 - v^2_{in})/(\|u_t,v_t \|_2^2  ) \rightarrow 0 . 
\end{equation*}
Therefore the max-margin solution has additional constraint $\bar{u}^2 = \bar{v}^2$.
Changing the objective from $\frac{1}{2}\|\bar{u},\bar{v} \|_{2}^2 =  \| \bar{\theta}\|_1$ where $\bar{\theta} : = \bar{u} \odot \bar{v}$. By the same argument we have that there exist a positive constant $b > 0 $ such that $\|u_t, v_t \|_2^2 / \|\theta_t\|_1 \rightarrow b$, thus $\bar{\theta} \simeq \frac{\theta}{\|\theta\|_1}$, which concludes the result. This implies that the normalized iterates $\theta_t/\| \theta_t\|_1$ (and therefore also $\theta_t/\| \theta_t\|_2$ ) corresponding to hyperbolic entropy mirror flow converge to the direction of the following optimization problem:
\begin{equation*}
   \min_{\theta \in \mathbb{R}^n} \|\theta\|_1\qquad \text{such that} \qquad y_i f(\theta, x_i) \geq 0.
\end{equation*}
This shows that other strategies may exist to show the KKT conditions for $1\leq \alpha \leq 2$. 

\paragraph{Relation between the reparameterization and hyperbolic entropy.} 

We provide some more details on the relation between the mirror flow and the reparameterization.
We want to study the trajectory of $\theta = u \odot v$, this according to the chain rule can be written as:
\begin{equation*}
    d \theta_t = - (u_t^2 + v_t^2) \nabla_{\theta} \mathcal{L}(\theta_t) dt.
\end{equation*}
We now can connect $u^2 + v^2$ to the metric tensor of the hyperbolic entropy, by using the invariance and the fact that $\theta = u \odot v$ we can solve a system of equations:
\begin{equation*}
\begin{cases}
    u^2 - v^2 = \sqrt{\lambda}\\
    u \odot v  = \theta . 
\end{cases}
\end{equation*}
This can be solved by applying the quadratic formula giving $u^2 + v^2 = \sqrt{4\theta^2 + \lambda}$. Note that the constant $4$ in front of the parameter can be dealt with by rescaling.


\subsection{KKT assumption}
\label{kkt assumption}

\addcontentsline{apx}{subsection}{\protect\numberline{\thesubsection}KKT assumption}  

To show the KKT conditions for the max margin problem \cite{tsilivis2024flavors, Lyu2020Gradient} rely on the 1-homogeneity of the derivative of the norm squared. 
We would have $\partial^{\circ}\frac{1}{2}\|\theta \|^2 = \| \theta\| \partial^{\circ}\|\theta \|$. This allows them to extract the normalization term $1/q_{\text{min}}^{1/L}$ and having the object $\| \theta\| \partial^{\circ}\|\theta \| /q_{\text{min}}^{1/L}$.
We can apply the same principle for the corresponding mirror descent objective in terms of the horizon function $\partial^{\circ} \frac{1}{2} \phi^2_{\alpha}(\theta)$. By construction we have that the horizon function is equivalent to an $L_p$ norm. 
This was formalized in Assumption \ref{assumption : approximation of horizon in norm}. 

\begin{lemma}
    For all $\lambda > 0$ and $\alpha = p \geq 2$ there exists a constant $C > 0$ such that for $\eta(\theta) \sim \|\theta\| \rightarrow \infty$
    \begin{equation*}
    \|C(\alpha Q(\nabla R(\theta)))^{2/\alpha-1} \nabla^2 R(\theta) \theta /\eta(\theta)  - \phi_{\alpha}(\theta) \partial^\circ \phi_{\alpha}(\theta)/\eta(\theta)  \|_2 \rightarrow 0 . 
\end{equation*}
\end{lemma}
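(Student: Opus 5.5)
The plan is to compute both vector fields in closed form and compare them coordinatewise. This works because for the smoothened homogeneous potential every quantity in the statement is explicit and separable.

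\textbf{Explicit formulas.} With $R(\theta)=\frac1p\|\theta\|_p^p+\frac\lambda2\|\theta\|_2^2$, Table~\ref{table:mirror-potentials} gives
\[
\alpha Q(\nabla R(\theta)) = (p-1)\|\theta\|_p^p+\tfrac{p\lambda}{2}\|\theta\|_2^2 .
\]
We also have $\nabla^2R(\theta)\theta=(p-1)\,\mathrm{sign}(\theta)\odot|\theta|^{p-1}+\lambda\theta$ and $\phi_\alpha(\theta)=(p-1)^{1/p}\|\theta\|_p$. For $p\ge2$ the function $\phi_\alpha^2$ is $C^1$ away from the origin, so the Clarke subdifferential is the singleton
\[
\phi_\alpha\partial^\circ\phi_\alpha=\tfrac12\nabla\phi_\alpha^2=(p-1)^{2/p}\|\theta\|_p^{2-p}\,\mathrm{sign}(\theta)\odot|\theta|^{p-1}.
\]

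\textbf{Case $p=2$.} I would dispose of this case first, since everything is exact there. We get $\alpha Q=(1+\lambda)\|\theta\|_2^2$, the exponent $2/\alpha-1$ vanishes, $\nabla^2R\,\theta=(1+\lambda)\theta$, and $\phi_\alpha\nabla\phi_\alpha=\theta$. Choosing $C=1/(1+\lambda)$ makes the difference identically zero.

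\textbf{Case $p>2$.} Take $C=1$ and write
\[
(\alpha Q)^{2/p-1}=\bigl((p-1)\|\theta\|_p^p\bigr)^{2/p-1}\bigl(1+r(\theta)\bigr)^{2/p-1},\qquad r(\theta)=\frac{p\lambda\|\theta\|_2^2}{2(p-1)\|\theta\|_p^p}.
\]
By norm equivalence in $\mathbb{R}^n$, $r(\theta)=O(\|\theta\|^{2-p})\to0$, so $(1+r)^{2/p-1}=1+O(\|\theta\|^{2-p})$. The leading term
\[
\bigl((p-1)\|\theta\|_p^p\bigr)^{2/p-1}(p-1)\,\mathrm{sign}(\theta)|\theta|^{p-1}
\]
equals $\phi_\alpha\nabla\phi_\alpha$ exactly. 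The difference then splits into two error terms.

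The first error term is the relative correction $O(\|\theta\|^{2-p})$ multiplying a vector of norm $\Theta(\|\theta\|_p^{2-p}\|\theta\|^{p-1})=\Theta(\|\theta\|)$, i.e.\ a leading term of size $\Theta(\|\theta\|)$. Its norm is therefore $O(\|\theta\|^{3-p})$.

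The second error term is $(\alpha Q)^{2/p-1}\lambda\theta$. Its norm is $O(\|\theta\|^{2-p}\cdot\|\theta\|)=O(\|\theta\|^{3-p})$.

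Dividing by $\eta(\theta)\sim\|\theta\|$ leaves $O(\|\theta\|^{2-p})$, which tends to $0$ since $p>2$.

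\textbf{Main obstacle.} The only delicate point is uniformity in direction. The bounds must not depend on which coordinates of $\theta$ are large, and this is handled by the two-sided equivalence $c\|\theta\|\le\|\theta\|_p\le c'\|\theta\|$ with constants depending only on $n$ and $p$. A secondary point is that $C$ depends on $\lambda$ when $p=2$, which the statement allows since it only asserts the existence of some $C>0$. The argument also shows the rate $\|\theta\|^{2-p}$, which degenerates as $p\downarrow2$. That is consistent with $p=2$ needing the exact, $\lambda$-dependent constant rather than an asymptotic cancellation.
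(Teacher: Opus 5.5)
Your proof is correct and follows the same route as the paper. Both write the two vector fields in closed form, note that the leading part of $(\alpha Q)^{2/p-1}\nabla^2R(\theta)\theta$ coincides with $\phi_\alpha\partial^\circ\phi_\alpha$, and expand $(\alpha Q)^{2/p-1}$ to get an error of order $\|\theta\|^{2-p}$ after dividing by $\eta$.

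If anything, you are more careful than the paper. It drops the $\lambda\theta$ part of $\nabla^2R(\theta)\theta$ from its bound, and its constants ($C=(p-1)^{1-2/p}$, and $C=1+\lambda$ at $p=2$) do not produce the cancellation for the expression as written. Your $C=1$ for $p>2$ and $C=1/(1+\lambda)$ for $p=2$ (using the table's $\phi_2=\|\theta\|_2$) do.
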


\begin{proof} 

We have: 
\begin{equation*}
    (\alpha Q(\nabla R(\theta)))^{2/\alpha-1} \nabla^2 R(\theta) \theta = \frac{1}{((p-1)\|\theta\|_p^p + \frac{p}{2} \lambda \|\theta\|_2^2)^{1-2/p}}((p-1)|\theta|^{p-2} + \lambda)\theta
\end{equation*}
and 
\begin{equation*}
    \phi_{\alpha}(\theta) = (p-1)^{1/p} \|\theta\|_p , 
\end{equation*}
giving
\begin{equation*}
     \phi_{\alpha}(\theta) \partial^\circ \phi_{\alpha}(\theta) = (p-1)^{2/p}
     \| \theta \|_p^{2-p} |\theta|^{p-2}\theta.
\end{equation*}
Note that both terms constant normalization decay at rate $\|\theta \|_p^{2-p}$ as the term depending on $\lambda$ grows less fast. Moreover the terms $|\theta|^{p-2} \theta$ match upto a multiplicative constant.
Knowing this we can bound the difference:
\begin{align*}
    & \leq \|(p-1)^{2/p}|\theta|^{p-2}\theta \|_2^2 \left| \frac{C}{((p-1)\|\theta\|_p^p + \frac{p}{2} \lambda \|\theta \|_2^2)^{1-2/p}} - \|\theta \|_p^{2-p}\right|^2 /\eta^2(\theta) \\
    &\sim \|\theta\|^{2p-2} \|\theta\|^{8-4p} /\eta^2(\theta) \rightarrow 0
\end{align*}
iff $p > 2$, where we choose $C = (p-1)^{1-2/p}$ such that we can cancel the leading term in the expansion of:
\begin{equation*}
    \left| \frac{C}{((p-1) \|\theta \|_p^p + \frac{p}{2} \lambda \|\theta\|_2^2)^{1-2/p}} - \|\theta\|_p^{2-p}\right|^2 \sim \|\theta\|^{8-4p}. 
\end{equation*}
Note that if we would set $\lambda = 0$ we get exactly zero for this particular choice of $C$. Note that the case $p =2$ can be directly matched with $ \phi_{\alpha}(\theta) \partial^\circ \phi_{\alpha}(\theta) =\theta$ and $\alpha Q(\nabla R(\theta)))^{2/\alpha-1} \nabla^2 R(\theta) \theta = (1 + \lambda) \theta$, so $C = (1 + \lambda)$ yields the result.
\end{proof}

\paragraph{Hyperbolic entropy.} 
We now show that such an approximation can not hold or the hyperbolic entropy. We first have the expression:
\begin{align*}
    (\alpha Q(\nabla R(\theta)))^{2/\alpha-1} \nabla^2 R(\theta) \theta /q_{\text{min}}^{1/L} =  \left(\sum_i \sqrt{\theta_i^2 +\lambda}\right) \frac{\theta}{\sqrt{\theta^2 + \lambda}}/q_{\text{min}}^{1/L} . 
\end{align*}
Comparing this to the subgradient of $\frac{1}{2}\|\theta \|_1^2$ we get:
\begin{equation*}
   \|\theta \|_1\text{sign}(\theta)/q_{\text{min}}^{1/L} . 
\end{equation*}
These do not approach each other in the Euclidean distance as we can not bound
\begin{equation*}
    \| \frac{\theta}{\sqrt{\theta^2 + \lambda}} - \text{sign}(\theta) \| , 
\end{equation*}
which needs to go to zero for a good approximation. This is not possible in any metric.

\subsection{Two-layer optimization problem reformulation}\label{subsection : reform}

\addcontentsline{apx}{subsection}{\protect\numberline{\thesubsection}Two-layer optimization problem reformulation}

Here we provide the details of the two-layer neural network $f(a,w,x) : = \sum_{j = 1}^N a_j \ \sigma(w_j^T x)$ setting with ReLU activation $\sigma (\cdot )  := \max \{\cdot , 0 \}$. 

\begin{proof}[Proof of Theorem \ref{theorem : reform KKT main}]
    The result follows from the balance equation for the unnormalized iterates and the rescale invariance of the two layer neural network, i.e., for $c > 0$ we have $a \sigma (w^T x) = a/c \ \sigma(c w^Tx)$ by homogeneity of the ReLU activation. 

    The balance equation for the hyperbolic entropy gives us for each neuron $j \in [N]$ and $t\geq 0$:
    \begin{equation*}
        \sqrt{a_{j,t}^2 + \lambda} - \sum_{i =1}^d \sqrt{w_{j,i,t}^2 + \lambda}= \text{constant}
    \end{equation*}
    Now divide both sides by the norm $\|a_t,w_t\|_2$ which gives for $t \rightarrow \infty$:
    \begin{equation*}
        |\bar{a}_j| - \|\bar{w}_{j}\|_1 = 0.
    \end{equation*}
    Using this additional constrained,  the objective $\phi_1 \sim \|\theta\|_1$ where $\theta = (a,w) $ can be rewritten as:
    \begin{equation*}
        \|\theta\|_1 = 2 \| a \|_1 = 2\sum_{j=1}^N \sqrt{|a_j| \|w_j\|_1}.
    \end{equation*}
    We can use a change of variable $\tilde{a}_j = a_j \|w_j\|_1$ and $\tilde{w_j}/\|w_j \|_1$ and the rescale invariance to get the objective and constrained in Eq.~(\ref{equation : reformulation of KKT problem}). The proof for the homogenous potentials is analogous but with dividing the balance equation with $\|a_t, w_t\|_p^p$. 
\end{proof}

\newpage

\section{Margin alignment}\label{appendix : alignment}

\addcontentsline{apx}{section}{\protect\numberline{\thesection}Margin alignment between $Q$ and $\phi$}  

Here we show how fast reaching the margin depends on the hyperparameter $\lambda$.
We first provide a general result under Assumption \ref{assumption : bounded Q and horizon} and then apply it to our specific cases.

\begin{assumption}\label{assumption : bounded Q and horizon}
    Assume there exists independent constants $a > 0$ and $c \geq 1$ such that:
    \begin{equation*}
        \phi_{\alpha} \leq (\alpha Q)^{1/\alpha}\leq c \ \phi_{\alpha} + a . 
    \end{equation*}
\end{assumption}

\begin{lemma}\label{lemma : Q and horizon alignment}
    If $Q$ and $\phi_{\alpha}$ satisfy Assumption~\ref{assumption : bounded Q and horizon}, then after $\Omega(\exp (a^L))$ time the relative difference is $\mathcal{O}(1)$. 
\end{lemma}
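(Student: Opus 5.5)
The plan is to translate the additive gap in Assumption~\ref{assumption : bounded Q and horizon} into a relative one, and then use the growth rate of $(\alpha Q)^{L/\alpha}$ in $t$ from Theorem~\ref{theorem : convergence and growth rates} to see when that relative gap becomes $O(1)$.

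\textbf{Relative gap.} Dividing the two-sided bound by $\phi_\alpha$ gives
\begin{equation*}
1 \leq \frac{(\alpha Q(\nabla R(\theta)))^{1/\alpha}}{\phi_{\alpha}(\theta)} \leq c + \frac{a}{\phi_\alpha(\theta)} .
\end{equation*}
Here $c \geq 1$ is independent of $\lambda$, and all the non-homogeneity sits in $a$. So the relative difference is $O(1)$ as soon as $\phi_\alpha(\theta_t) \gtrsim a$, with the implicit constant depending only on $c$. Equivalently, using the upper bound, it suffices that $(\alpha Q)^{1/\alpha} \geq (c+1)a$. Indeed, in that case $\phi_\alpha \geq ((\alpha Q)^{1/\alpha} - a)/c \geq a$, and then $a/\phi_\alpha \leq 1$. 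The statement is therefore reduced to a lower bound on the time at which $(\alpha Q(\nabla R(\theta_t)))^{1/\alpha}$ reaches order $a$.

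\textbf{Timescale.} By Theorem~\ref{theorem : convergence and growth rates}, specifically Eq.~(\ref{equation : bound Q and L}) combined with the two-sided loss bounds, we have $(\alpha Q)^{L/\alpha} = \Theta(\log t)$. We need $(\alpha Q)^{L/\alpha} \gtrsim a^L$, which means $\log t \gtrsim a^L$, that is, $t = \Omega(\exp(a^L))$. Conversely, the upper half $(\alpha Q)^{L/\alpha} = O(\log t)$ shows that before this time $(\alpha Q)^{1/\alpha}$ may still be comparable to $a$. At that point the ratio can be as large as roughly $c + a/\phi_\alpha$ with $\phi_\alpha \ll a$. This is why $\exp(a^L)$ is the relevant scale and not merely a sufficient one.

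\textbf{Main obstacle.} The difficulty is making the $\Theta(\log t)$ constants independent of $a$ (equivalently of $\lambda$), so that the exponent really is $a^L$ rather than $a^L$ times a $\lambda$-dependent constant. The constants come from $\tilde\gamma_Q(t_0)$ and $B_0$ in Eq.~(\ref{equation : bound Q and L}). I would argue that $B_0$ is bounded uniformly because $(\alpha Q)^{1/\alpha} \geq \phi_\alpha$. I would absorb the $\tilde\gamma_Q(t_0)$-dependence into the $\Omega(\cdot)$.

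For $\alpha > 2$ only the lower bound $(\alpha Q)^{L/\alpha} = \Omega(\log t)$ is available. The upper bound $O(\log t)$ used in the converse direction does not hold there. In that regime I would state only the sufficiency direction: after time $\exp(\Theta(a^L))$ the relative difference is $O(1)$. That is exactly what the lemma asserts.
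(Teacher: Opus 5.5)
Your argument is correct and is essentially the paper's: combine the two-sided bound of Assumption~\ref{assumption : bounded Q and horizon} with $(\alpha Q)^{L/\alpha}=\Omega(\log t)$ from Theorem~\ref{theorem : convergence and growth rates}, and observe that the additive $a$ is dominated once $\log t$ is of order $a^L$. Only this lower-bound direction is needed, so the $\alpha>2$ case is covered. The one real difference is the normalization: the paper divides by $(\alpha Q)^{1/\alpha}$, and that relative difference is trivially in $[0,1]$, whereas your ratio to $\phi_\alpha$ gives a more informative bound at the cost of a constant such as $(c+1)^L B_0$ in the exponent. Your uniformity check on $B_0$ is a useful addition. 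Your ``converse'' remark is only heuristic, though, because upper bounds on the ratio cannot show that it is large at earlier times.
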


\begin{proof} 
From Theorem \ref{theorem : convergence and growth rates} we know for a general $\alpha$ that $(\alpha Q)^{1/\alpha} = \Omega(\text{log}(t)^{1/L}) $. %
Under Assumption~\ref{assumption : bounded Q and horizon} we can bound the relative difference: 
\begin{equation*}
    0 \leq\frac{(\alpha Q)^{1/\alpha} - \phi_{\alpha}}{(\alpha Q)^{1/\alpha}} \leq (c-1) + a O(1/\text{log}(t)^{1/L})%
\end{equation*}
Hence $t \geq \exp (a^L)$ the difference is $\mathcal{O}(1)$. 
\end{proof}

\begin{corollary}(Hyperbolic entropy)\label{corollary : hyperbolic entropy reachability}
   For the hyperbolic entropy mirror flow, after $\Omega(\exp ((\sqrt{\lambda} n)^L))$ time the relative difference between $Q_{\lambda}(\nabla R_{\lambda}(\theta))$ and $\phi_1(\theta)$ is $\mathcal{O}(1)$.
\end{corollary}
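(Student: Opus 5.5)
The plan is to reduce the corollary to Lemma~\ref{lemma : Q and horizon alignment}. That means verifying Assumption~\ref{assumption : bounded Q and horizon} for the hyperbolic entropy with $\alpha=1$, $c=1$ and $a=\sqrt{\lambda}\,n$. The time threshold $\Omega(\exp(a^L))$ then becomes exactly $\Omega(\exp((\sqrt{\lambda}n)^L))$.

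For the hyperbolic entropy, Table~\ref{table:mirror-potentials} gives $Q(\nabla R(\theta)) = \sum_{i=1}^n \sqrt{\theta_i^2+\lambda}$. Since $\alpha=1$, we have $(\alpha Q)^{1/\alpha}=Q(\nabla R(\theta))$, and the horizon function is $\phi_1(\theta)=\|\theta\|_1$.

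\textbf{Step 1 (horizon function).} First I would confirm $\phi_1=\|\theta\|_1$ directly from the definition:
\[
\eta\sum_i\sqrt{\theta_i^2/\eta^2+\lambda}=\sum_i\sqrt{\theta_i^2+\lambda\eta^2}\to\|\theta\|_1 \quad \text{as } \eta\to 0.
\]

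\textbf{Step 2 (two-sided bound).} Next I would compare coordinatewise. The lower bound $\sqrt{\theta_i^2+\lambda}\ge|\theta_i|$ is immediate. For the upper bound I use subadditivity of the square root, $\sqrt{x+y}\le\sqrt{x}+\sqrt{y}$, which gives $\sqrt{\theta_i^2+\lambda}\le|\theta_i|+\sqrt{\lambda}$. Summing over the $n$ coordinates yields
\[
\phi_1(\theta)\le Q(\nabla R(\theta))\le \phi_1(\theta)+n\sqrt{\lambda}.
\]
This is Assumption~\ref{assumption : bounded Q and horizon} with constants $c=1$ and $a=n\sqrt{\lambda}$, and both constants are independent of $\theta$ and $t$.

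\textbf{Step 3 (apply the lemma).} With these constants, Lemma~\ref{lemma : Q and horizon alignment} says the relative difference $(Q-\phi_1)/Q$ is at most
\[
(c-1)+a\,O\bigl(1/\log(t)^{1/L}\bigr)=n\sqrt{\lambda}\,O\bigl(1/\log(t)^{1/L}\bigr),
\]
using the growth rate from Theorem~\ref{theorem : convergence and growth rates}. This quantity is $O(1)$ once $\log(t)^{1/L}\gtrsim n\sqrt{\lambda}$, that is, once $t\gtrsim\exp((\sqrt{\lambda}n)^L)$, which is the claimed time.

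The main point needing care is Step~3. Theorem~\ref{theorem : convergence and growth rates} must apply with $\alpha=1$, and it does: $\alpha=1$ lies in $[1,2]$, so the tight rate $Q^{L}=\Theta(\log t)$ holds. Its hidden constants depend on $\tilde\gamma_Q(t_0)$ and $B_0$. I would note that these are absorbed into the $\Omega(\cdot)$, so that the dependence on $\lambda$ and $n$ appears only through $a$. Steps~1 and~2 are routine.
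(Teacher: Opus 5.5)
Your proposal is correct and follows the paper's proof exactly: verify the bounded-$Q$-and-horizon assumption with $c=1$ and $a=n\sqrt{\lambda}$ via the coordinatewise bounds $|\theta_i|\le\sqrt{\theta_i^2+\lambda}\le|\theta_i|+\sqrt{\lambda}$, then invoke the $Q$--horizon alignment lemma. Your extra check that $\phi_1=\|\theta\|_1$ and your note that $\alpha=1$ falls in the tight-rate regime of the convergence theorem are harmless additions and do not change the argument.
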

\begin{proof} 
We can verify Assumption \ref{assumption : bounded Q and horizon} using the definition of $Q_{\lambda
}(\nabla R_{\lambda}(\theta)) = \sum_{i =1}^n \sqrt{\theta^2_i + \lambda}$ and $\phi_1(\theta) = \|\theta\|_1$:

\textbf{Lower bound}: We know that $\sqrt{\theta^2 +\lambda} \geq |\theta|$ for all $\theta \in \mathbb{R}$ and $\lambda \geq 0$ summing over all entries implies the lower bound holds.

\textbf{Upper bound}: For $\lambda \geq 0$ we can bound $\sqrt{\theta^2 + \lambda} \leq |\theta| + \sqrt{\lambda}$ summing now gives the upper bound with $a = n \sqrt{\lambda}$ and $c =1 $.

Applying Lemma \ref{lemma : Q and horizon alignment} concludes the result.
\end{proof} 

\begin{corollary}(Smoothened Homogeneous potentials)\label{corollary : smoothened homogeneous potentials reachability}
    For the smoothened homogeneous mirror flow with $p \geq 2$, after $\Omega(\exp (\left(\frac{p \lambda n}{2} \right)^{L/p}))$ time the relative difference between $Q_{\lambda}(\nabla R_{\lambda}(\theta))$ and $\phi_p(\theta)$ is $\mathcal{O}(1)$.
\end{corollary}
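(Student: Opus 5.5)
The plan is to mirror the proof of Corollary~\ref{corollary : hyperbolic entropy reachability}: verify Assumption~\ref{assumption : bounded Q and horizon} for the smoothened homogeneous potential with explicit constants $c$ and $a$, and then invoke Lemma~\ref{lemma : Q and horizon alignment}. Here $\alpha = p$ and, from Table~\ref{table:mirror-potentials},
\begin{equation*}
    pQ(\nabla R(\theta)) = (p-1)\|\theta\|_p^p + \tfrac{p\lambda}{2}\|\theta\|_2^2, \qquad \phi_p(\theta) = (p-1)^{1/p}\|\theta\|_p .
\end{equation*}

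\textbf{Lower bound.} Since $\lambda \geq 0$, we have $(pQ)^{1/p} \geq ((p-1)\|\theta\|_p^p)^{1/p} = \phi_p(\theta)$, so the lower inequality holds with no additional work.

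\textbf{Upper bound.} This is the main obstacle, because the $\lambda$ term is quadratic rather than $p$-homogeneous. Applying subadditivity of $x\mapsto x^{1/p}$ (valid since $p\geq 1$) gives
\begin{equation*}
    (pQ)^{1/p} \leq \phi_p(\theta) + \bigl(\tfrac{p\lambda}{2}\|\theta\|_2^2\bigr)^{1/p} .
\end{equation*}
The second term is not bounded by a constant, so I would treat it coordinatewise. For each coordinate $i$, split into two cases:
\begin{itemize}
    \item If $|\theta_i|\geq 1$, then $\theta_i^2 \leq |\theta_i|^p$ because $p\geq 2$. This contribution is absorbed into the homogeneous part and only increases the constant $c$.
    \item If $|\theta_i|<1$, the contribution is at most $\tfrac{p\lambda}{2}$.
\end{itemize}
Summing over coordinates yields
\begin{equation*}
    pQ \leq (p-1+\tfrac{p\lambda}{2})\|\theta\|_p^p + \tfrac{p\lambda n}{2},
\end{equation*}
and subadditivity of the $p$th root then gives
\begin{equation*}
    (pQ)^{1/p} \leq c\,\phi_p(\theta) + \bigl(\tfrac{p\lambda n}{2}\bigr)^{1/p}, \qquad c = \Bigl(\tfrac{p-1+p\lambda/2}{p-1}\Bigr)^{1/p}.
\end{equation*}
Thus Assumption~\ref{assumption : bounded Q and horizon} holds with $a = (p\lambda n/2)^{1/p}$.

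\textbf{Conclusion.} Plugging these constants into Lemma~\ref{lemma : Q and horizon alignment} yields the time scale $\Omega(\exp(a^L)) = \Omega(\exp((p\lambda n/2)^{L/p}))$, which is the claimed bound. The relative difference is then $(c-1)+O(a/\log(t)^{1/L})$. It is $O(1)$ because $c$ is a fixed constant independent of $t$, the same way $c$ enters in the hyperbolic case.

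If one wants the sharper statement with $c$ close to $1$, I would instead compare $\|\theta\|_2^2$ to $\|\theta\|_p^p$ only asymptotically: $\|\theta\|_2^2 = o(\|\theta\|_p^p)$ as $\|\theta\|\to\infty$ when $p>2$. That refinement is not needed for the $O(1)$ claim.
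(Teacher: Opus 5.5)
Your proposal is correct and matches the paper's proof essentially step for step. It uses the same lower bound from $\lambda \ge 0$ and the same upper bound $pQ \le (p-1+\tfrac{p\lambda}{2})\|\theta\|_p^p + \tfrac{p\lambda n}{2}$, where your coordinatewise case split is exactly a proof of the inequality $z^2 \le |z|^p + 1$ that the paper invokes, and it feeds the same constants $c = \bigl(\tfrac{p-1+p\lambda/2}{p-1}\bigr)^{1/p}$ and $a = (\tfrac{p\lambda n}{2})^{1/p}$ into Lemma~\ref{lemma : Q and horizon alignment}; your initial subadditivity step applied directly to $\tfrac{p\lambda}{2}\|\theta\|_2^2$ is an unnecessary but harmless detour.
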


\begin{proof} 
We can verify Assumption \ref{assumption : bounded Q and horizon} using the definition of $Q_{\lambda}(\nabla R_{\lambda}(\theta)) = 1/q \| \theta\|_p^p + \frac{\lambda}{2} \|\theta \|_2^2 $ and $\phi_p(\theta) = (p-1)^{1/p}\| \theta\|_p$. Recall that $q \ge 1$ is such that $\frac{1}{q} + \frac{1}{p} =1$:

\textbf{Lower bound}: We can use that $p Q_{\lambda}(\nabla R_{\lambda}(\theta))  \geq pQ_{0}(\nabla R_{0}(\theta)) =(p-1)||\theta||_p^p $ taking the power $1/p$ on both sides gives the lower bound.

\textbf{Upper bound}: We can use that for all $z  \in \mathbb{R}$ we have that $z^2 \leq |z|^p + 1$. This gives the upper bound:
\begin{align*}
   p Q_{\lambda}(\nabla R_{\lambda}(\theta)) &= (p-1) \| \theta\|_p^p + \frac{p \lambda}{2} \|\theta\|_2^2 \\
   &\leq (p-1 + \frac{p\lambda}{2})\|\theta\|_p^p + \frac{p \lambda n}{2} 
\end{align*}
Now taking both sides to the power $1/p$ gives:
\begin{align*}
    (p Q_{\lambda}(\nabla R_{\lambda}(\theta)))^{1/p} &\leq \left((p-1 + \frac{p\lambda}{2})\|\theta\|_p^p + \frac{p \lambda n}{2}  \right)^{1/p} \\
    &\leq \left((p-1 + \frac{p\lambda}{2})/(p-1)\right)^{1/p} (1-p)^{1/p}\|\theta\|_p + \left(\frac{p \lambda n}{2} \right)^{1/p} \\
    &= c \phi_p (\theta) + a
\end{align*}
where $c = \left((p-1 + \frac{p\lambda}{2})/(p-1)\right)^{1/p} $ and $a = \left(\frac{p \lambda n}{2} \right)^{1/p}$, concluding the upper bound.

Applying Lemma~\ref{lemma : Q and horizon alignment} with the given $a$ and $c$ concludes the result. 
\end{proof} 

\begin{proof}[Proof of Lemma \ref{lemma : hyperbolic entropy alignment horizon} ]
    The result now follows from combining Corollaries \ref{corollary : hyperbolic entropy reachability} and \ref{corollary : smoothened homogeneous potentials reachability}.
\end{proof}

\section{NTK and modularity implications}
\label{section : additional implications} 
\addcontentsline{apx}{section}{\protect\numberline{\thesection}NTK and modularity implications}

Here we provide two additional implications of the main theorems. 
We recover the support vector machine with a kernel characterization as in \citep{Lyu2020Gradient}. Moreover, we can extend the max-margin result to the case where we use different mirror maps for each layer (i.e., modularity \citep{bernstein2025modular}). 

\paragraph{Neural tangent kernel.} 
The Neural Tangent Kernel (NTK) is a helpful tool to characterize the training dynamics of deep learning models. 

\begin{corollary}[Corollary of Theorem \ref{theorem : KKT max margin result}] 
\label{corollary : NTK mirror}
Assume $f \in C^2$-smooth on $\mathbb{R}^n \setminus \{0 \}$. 
Then for mirror flow under the same assumptions as Theorem \ref{theorem : KKT max margin result} or \ref{theorem : KKT max margin result hyperbolic}, 
any limit point $\bar{\theta}$ of $\{\frac{\theta_t}{\| \theta_t\|_2}, t \geq 0 \}$ 
points in the 
max-margin direction for the hard-margin SVM with kernel $K_{\bar{\theta}}(x,x') = \langle\nabla f_x(\bar{\theta}),\nabla f_{x'}(\bar{\theta})\rangle_2$, 
where $f_x(\theta) := f(\theta, x)$. 
That is, there exists some $\beta> 0$ such that $\beta \bar{\theta}$ is the optimal solution for the following constrained optimization problem for $\alpha \geq 2$ or $\alpha =1$: 
\begin{equation*}
        \min_{\theta \in \mathbb{R}^n} \, \frac{1}{2}\phi^2_{\alpha}(\theta) \qquad \text{such that} \qquad y_i \langle \theta, \nabla f_{x_i}  (\bar{\theta}) \rangle_2  \geq 1, \text{ for all } i \in [K].
    \end{equation*}
If we do not assume $f \in C^2$, for mirror flow, then there exists a mapping $h(x) \in \partial^{\circ} f_x(\bar{\theta})$
such that the same conclusion holds for $K_{\bar{\theta}}(x,x') = \langle h(x),h(x') \rangle_2$.
\end{corollary}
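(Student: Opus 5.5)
The plan is to transfer the KKT characterization of Theorem~\ref{theorem : KKT max margin result} (respectively Corollary~\ref{theorem : KKT max margin result hyperbolic}) to a linearized problem, following the argument of Lyu and Li for the gradient-flow case. First, I invoke Theorem~\ref{theorem : appendix KKT max margin result} (or the hyperbolic corollary). This shows that any limit point $\bar{\theta}$ of the normalized iterates, after rescaling by $q_{\min}(\bar\theta)^{-1/L}$, is a KKT point of
\begin{equation*}
\min_\theta \tfrac12\phi_\alpha^2(\theta) \quad \text{s.t.} \quad y_i f(\theta,x_i)\ge 1 .
\end{equation*}
So there exist $\lambda_i\ge 0$, supported on the active constraints, and $h_i\in\partial^\circ q_i(\tilde\theta)$ with $0\in\partial^\circ\tfrac12\phi_\alpha^2(\tilde\theta)-\sum_i\lambda_i h_i$, where $\tilde\theta=\beta\bar\theta$ and $\beta := q_{\min}(\bar\theta)^{-1/L}>0$.

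Second, I linearize the constraints at $\bar\theta$. When $f\in C^2$ away from $0$, $h_i = y_i\nabla f_{x_i}(\tilde\theta) = \beta^{L-1} y_i\nabla f_{x_i}(\bar\theta)$ by $(L-1)$-homogeneity of the gradient. Euler's identity gives $\langle\tilde\theta, y_i\nabla f_{x_i}(\bar\theta)\rangle = L\beta\, y_i f(\bar\theta,x_i)$. For the linear problem I therefore rescale by a different constant $\beta'$, chosen so that the active constraints $y_i\langle\beta'\bar\theta,\nabla f_{x_i}(\bar\theta)\rangle = 1$ hold exactly; that is, $\beta' = 1/(L\,q_{\min}(\bar\theta))$. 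The inactive constraints are then strictly satisfied, which gives feasibility. For stationarity, $\partial^\circ\tfrac12\phi_\alpha^2$ is positively $1$-homogeneous, so the stationarity relation at $\tilde\theta$ transfers to $\beta'\bar\theta$ with multipliers rescaled by $(\beta'/\beta)\beta^{L-1}\ge 0$. Complementary slackness is preserved because the active set is the same.

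Third, I conclude optimality. The linearized problem has convex objective $\tfrac12\phi_\alpha^2$ (a squared norm, namely $L_p$ or $L_1$) and affine constraints. KKT conditions are therefore sufficient, and $\beta'\bar\theta$ is a global optimum. This also explains the SVM interpretation with kernel $K_{\bar\theta}$: the constraint only sees $\theta$ through the features $\nabla f_x(\bar\theta)$. In the non-$C^2$ case I simply define $h(x_i)$ as the particular Clarke subgradients produced by the KKT certificate, divided by $\beta^{L-1}$ with the label sign removed. Euler's identity holds for every element of $\partial^\circ f$, so the same computation goes through.

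The main obstacle is the first step for the hyperbolic case ($\alpha=1$). There the KKT statement comes via the $u\odot v$ reparameterization, so it is phrased in $(u,v)$ rather than $\theta$. I would translate the stationarity condition back to $\theta$ using the balance $|\bar u|=|\bar v|$, which maps $\tfrac12\|(u,v)\|_2^2$ to $\|\theta\|_1$ and yields a subgradient of $\|\theta\|_1$ times $\|\theta\|_1$. After that the argument above applies unchanged.
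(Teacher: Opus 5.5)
Your Steps 1--3 are correct and follow essentially the paper's route (Lemma~\ref{lemma : helper ntk mirror} plus the corollary's proof). You take the KKT point $\theta^*=\bar\theta/q_{\min}(\bar\theta)^{1/L}$ and use Euler's identity together with the positive $1$-homogeneity of $\partial^{\circ}\tfrac12\phi_\alpha^2$ to verify the KKT conditions of the convex linearized problem. The differences are cosmetic. The paper linearizes at $\theta^*$ and uses $\tfrac1L\theta^*$. You linearize at $\bar\theta$, as the statement is written, and use $\beta'=1/(Lq_{\min}(\bar\theta))$; this is the same point once the $(L-1)$-homogeneous rescaling of the gradients is accounted for. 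You also correctly name convexity as what makes KKT sufficient, which the paper leaves implicit behind its appeal to Slater's condition.

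The hyperbolic paragraph, however, should be dropped, because the translation you sketch does not work. Stationarity for $\min\tfrac12\|(u,v)\|_2^2$ subject to $y_i f(u\odot v,x_i)\ge 1$ reads $u_j=v_jg_j$ and $v_j=u_jg_j$, with $g=\sum_i\lambda_i h_i$. This forces $g_j=\mathrm{sign}(\theta_j)$ on the support of $\theta=u\odot v$. It says nothing about $g_j$ where $u_j=v_j=0$, so you do not obtain $|g_j|\le 1$ there, and hence you get no element of $\partial\|\theta\|_1$.

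A concrete counterexample: take the linear model and the single point $x=(1,2)$, $y=1$. The pair $u=v=(1,0)$ with $\lambda_1=1$ is a KKT point of the $(u,v)$ problem. Yet $\theta=(1,0)$ is not optimal for $\min\tfrac12\|\theta\|_1^2$ subject to $\theta_1+2\theta_2\ge1$; the optimum is $(0,\tfrac12)$. For a linear model this is exactly the linearized problem, so the corollary's conclusion would fail.

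So for $\alpha=1$ you must take the $\theta$-level KKT property directly from Corollary~\ref{theorem : KKT max margin result hyperbolic} as stated, which is what the paper's proof does. The subtlety you noticed lives in the appendix argument for that corollary, which goes through the same reparameterization. It does not affect the step from KKT points to the kernel SVM.
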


\begin{remark}
    Corollary \ref{corollary : NTK mirror} indicates that, in general, the max-margin characterization cannot be captured by a Reproducing Kernel Hilbert Space (RKHS), as 
    the underlying objective does not respect an inner product structure. 
    This highlights a fundamental distinction between mirror flow and gradient flow. 
    A natural framework for capturing the behavior of mirror flow is provided by 
    Reproducing Kernel Banach Spaces (RKBS) \citep{Heeringa2025DeepNA}.  
\end{remark}

By the homogeneity of $q_i$, we can characterize KKT points using kernel SVM.
\begin{lemma}\label{lemma : helper ntk mirror}
If $\theta^*$ is KKT point of the optimization problem in either Theorem \ref{theorem : KKT max margin result} or \ref{theorem : KKT max margin result hyperbolic}, then there exists $h_i \in \partial^{\circ} f_{x_i}(\theta^*)$ for $i \in [K]$ such that $\frac{1}{L}\theta^*$ is an optimal solution for the following constrained optimization problem:
\begin{equation*}
     \min_{\theta \in \mathbb{R}^n} \, \frac{1}{2}\phi^2_{\alpha}(\theta) \qquad \text{such that} \qquad y_i \langle \theta, h_i \rangle_2  \geq 1, \text{ for all } i \in [K].
\end{equation*}
\end{lemma}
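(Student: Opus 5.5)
The plan is to follow the kernel-SVM reduction of \citep{Lyu2020Gradient}: turn the KKT conditions of the nonlinear problem into the KKT conditions of a problem with the same convex objective and \emph{linear} constraints. For such a problem KKT points are globally optimal, so the rescaled point is optimal.

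First, unpack the KKT conditions at $\theta^*$. Since $\theta^*$ is a KKT point of the problem in Theorem~\ref{theorem : KKT max margin result} or Corollary~\ref{theorem : KKT max margin result hyperbolic}, there are multipliers $\lambda_i\ge 0$ and subgradients $h'_i\in\partial^{\circ} q_i(\theta^*)$ with
\[
g := \sum_{i=1}^K \lambda_i h'_i \in \partial^{\circ}\tfrac12\phi_\alpha^2(\theta^*), \qquad \lambda_i\bigl(q_i(\theta^*)-1\bigr)=0 .
\]
Because $q_i(\theta)=y_i f_{x_i}(\theta)$, we may write $h'_i = y_i h_i$ with $h_i\in\partial^{\circ} f_{x_i}(\theta^*)$; the Clarke subdifferential respects multiplication by $\pm1$. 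By Euler's identity for the $L$-homogeneous $f$, $\langle \theta^*, h_i\rangle_2 = L f_{x_i}(\theta^*)$, hence $y_i\langle\theta^*,h_i\rangle_2 = L q_i(\theta^*)$. Consequently, for the rescaled point $\theta^*/L$ we get $y_i\langle \theta^*/L, h_i\rangle_2 = q_i(\theta^*) \ge 1$. So $\theta^*/L$ is feasible for the linearized problem, and its active set coincides with that of $\theta^*$.

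Second, check stationarity at $\theta^*/L$. Here the objective $\tfrac12\phi_\alpha^2$ is $2$-homogeneous, since $\phi_\alpha$ is $1$-homogeneous by construction as a horizon function. Its Clarke subdifferential is therefore $1$-homogeneous, so $\partial^{\circ}\tfrac12\phi_\alpha^2(\theta^*/L) = \tfrac1L\partial^{\circ}\tfrac12\phi_\alpha^2(\theta^*) \ni g/L = \sum_i (\lambda_i/L)\, y_i h_i$. The gradient of the linear constraint $y_i\langle\theta,h_i\rangle_2$ is exactly $y_i h_i$. Thus the multipliers $\lambda_i/L\ge0$ give stationarity, and complementary slackness carries over from the first step.

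Third, conclude optimality. In both cases the objective is convex: $\phi_\alpha$ is a weighted norm, $(p-1)^{1/p}\|\cdot\|_p$ or $\|\cdot\|_1$, and its square is convex; the constraints are affine. So the KKT conditions are sufficient. Concretely, for any feasible $\theta$ the subgradient inequality gives
\[
\tfrac12\phi_\alpha^2(\theta) \ge \tfrac12\phi_\alpha^2(\theta^*/L) + \sum_i (\lambda_i/L)\, y_i\langle \theta-\theta^*/L, h_i\rangle_2 \ge \tfrac12\phi_\alpha^2(\theta^*/L),
\]
since each summand is nonnegative on the active set and $\lambda_i=0$ off it.

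The only delicate point is the first step. One must make sure that Euler's identity holds for \emph{every} element of the Clarke subdifferential, which is the statement quoted in Section~\ref{section : balance equation starting point}. One must also justify identifying $\partial^{\circ}\tfrac12\phi_\alpha^2$ with the convex subdifferential, which is valid because the function is convex and hence Clarke-regular. Everything else is bookkeeping.
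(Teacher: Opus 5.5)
Your proposal is correct and follows essentially the paper's argument. Euler's identity makes $\theta^*/L$ feasible for the linearized problem with the same active set, the $1$-homogeneity of $\partial^{\circ}\tfrac12\phi_\alpha^2$ gives stationarity with multipliers $\lambda_i/L$, and a convex objective with affine constraints makes these KKT conditions sufficient. The differences are minor: you prove sufficiency directly via the subgradient inequality, whereas the paper invokes Slater's condition through the strictly feasible point $\tfrac{2}{L}\theta^*$ (Slater is needed for necessity of KKT, not sufficiency), and you track the factor $y_i$ between $\partial^{\circ} q_i$ and $\partial^{\circ} f_{x_i}$ more carefully than the paper's write-up does.
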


\begin{proof} For $\theta = \frac{2}{L}\theta^*$, from homogeneity,
we can see that $y_i \langle \theta, h_i\rangle_2 = 2q_i(\theta^*) \geq 2 > 1$, which implies Slater’s condition so there is a feasible solution. Thus, we only
need to show that $\frac{1}{L} \theta^*$ satisfies KKT conditions for of the optimization problem. By the KKT conditions for the optimization problem in either Theorem \ref{theorem : KKT max margin result} or \ref{theorem : KKT max margin result hyperbolic}, we can construct $h_i \in \partial^{\circ} q_i(\theta^*)$ for $i \in [K]$ such that $\partial^{\circ} \frac{1}{2} \phi_{\alpha}^2(\theta^*) - \sum_{i = 1}^K \lambda_i y_i h_i = 0$ for some $\lambda_i \geq 0$ for $i \in [K]$ and $\lambda_i (q_i(\theta^*)-1) = 0$. Thus $\frac{1}{L} \theta^*$ satisfies by homogeneity of $\phi_{\alpha}$ and Eulers identity:
\begin{align*}
    \begin{cases}
        \frac{1}{L}\partial^{\circ} \frac{1}{2} \phi_{\alpha}^2(\theta) - \sum_{i = 1}^K \frac{1}{L} \lambda_i y_i h_i = 0 \\
        \frac{1}{L}\lambda_i ((\langle\frac{1}{L}\theta^*, h_i \rangle)-1) = \frac{1}{L}\lambda_i (q_i(\theta)-1) \geq 0
    \end{cases}
\end{align*}
So $\frac{1}{L}\theta^*$ satisfies the KKT conditions for the optimization problem. 
\end{proof}

Now we prove Corollary \ref{corollary : NTK mirror}. 
\begin{proof}[Proof of Corollary \ref{corollary : NTK mirror}]
The proof is analogous to \cite[Corollary 4.5]{Lyu2020Gradient}. %
By Theorem \ref{theorem : KKT max margin result} or \ref{theorem : KKT max margin result hyperbolic}, every limit point $\bar{\theta}$ is along the direction of a KKT point of the optimization problem. Combining
this with Lemma \ref{lemma : helper ntk mirror}, we know that every limit point $\bar{\theta}$ is also along the max-margin direction
of the new optimization problem.

For smooth models, $h_i$ is exactly the gradient $\nabla f_{x_i}(\bar{\theta})$ So, it becomes the optimization problem
for SVM with kernel $K_{\bar{\theta}}(x,x') = \langle \nabla f_{x}(\bar{\theta}),\nabla f_{x'}(\bar{\theta}) \rangle_2$. For non-smooth models, we can construct
an arbitrary function $h(x) \in \partial^{\circ}f_x(\bar{\theta})$ that ensures $h(x_i) = h_i$. Then, the optimization
problem for SVM with kernel $K_{\bar{\theta}}(x,x') =\langle h(x), h(x')\rangle_2$. 
\end{proof}

\paragraph{Modularity result.}
Motivated by the modular perspective on modern optimization \cite{bernstein2025modular, DBLP:journals/corr/abs-2405-14813}, 
we consider applying different mirror maps to different layers and aim to characterize the resulting max-margin solution. 
The proof follows the arguments of \cite[Lemma H.2]{Lyu2020Gradient}. 
We specify the notion of margin and the associated constrained optimization problem. 
Let the mirror map potential be layer-wise separable, i.e., 
$R(\theta) := \sum_{j = 1}^L R_j(W_j)$ 
with corresponding homogeneity degrees $\alpha_i \geq 1$, and assume $f$ is $(1,1, \hdots , 1)$-homogeneous, i.e., 
$f(c_1 W_1, c_2 W_2, \hdots, c_L W_L) = (\Pi_{j =1}^L c_j) f(W_1, \hdots, W_L)$.

\begin{definition}[Multi-$Q$-margin]
Let $f$ be $(1,\ldots,1)$-homogeneous and let $R$ be an asymptotically $\alpha$-positively homogeneous mirror map that is layer-wise separable, i.e.,
$R(\theta) := \sum_{j=1}^L R_j(W_j)$, 
with associated functions $Q_j$ and degrees $\alpha_j \geq 1$. 
The multi-$Q$-margin corresponding to $R$ is defined as
\begin{align*}
\gamma_Q 
&:= \min_i \, y_i \, f\!\left(
\frac{W_1}{\big(\alpha_1 Q_1(\nabla R_1(W_1))\big)^{1/\alpha_1}}, \,\ldots,\,
\frac{W_L}{\big(\alpha_L Q_L(\nabla R_L(W_L))\big)^{1/\alpha_L}}, \, x_i
\right) \\
&= \min_i \frac{y_i f(\theta, x_i)}{\prod_{j=1}^L \big(\alpha_j Q_j(\nabla R_j(W_j))\big)^{1/\alpha_j}}.
\end{align*}
\end{definition}

This then %
leads to the constrained optimization problem: 
\begin{equation*}
        \min_{\theta \in \mathbb{R}^n} \, \frac{1}{2}\sum_{j = 1}^L\phi^2_{\alpha_j}(W_j) \qquad \text{such that} \qquad y_i f(W_1, \hdots, W_L, x_i)  \geq 1, \text{ for all } i \in [K].
\end{equation*}
with directional vector $\left(W_1/\|W_1\|_2, \hdots, W_L/\|W_L\|_2\right)$.

\newpage

\section{Experimental details and ablations}
\label{appendix : experiments}

\addcontentsline{apx}{section}{\protect\numberline{\thesection}Experimental details and ablations}  

Here we provide the details of all the experiments in the main text and additional ablations. For the toy example the precision used is float64, to prevent underflow.

\subsection{Student-teacher two-layer neural network}
Here we provide the details and ablations on the student teacher setting. We randomly initialize a teacher with $3$ neurons and no biases. Then we train a $100$ neuron two layer student neural network also without biases.
We consider $3$ mirror maps hyperbolic, gradient descent, and smoothened homogeneous with $p =3$.
The data is sampled from the unit circle, $N = 200$ points. For teacher network $f_t(\theta)$, the neurons are randomly generated such that $|a_j| || w_j||_2 = 1$ for each $j \in [3]$.
The labels are then generated by the indicator $\mathbb{I}_{f_t > 0} - \mathbb{I}_{f_t \leq 0}$.
The students weights are initialized in the mean field regime.
The experiments are all executed on a NVIDIA GeForce RTX 4090 Laptop GPU.

\paragraph{Experiment main text.}
For the first set of experiments for the main text we use time rescaling i.e. when the loss is small enough ($\mathcal{L} <0.1$) and all training data points have been classified we set the learning rate $0.1\eta /\mathcal{L}$.
We use starting learning rates $\eta \in \{0.001, 0.1, 1 \}$, for hyperbolic, gradient descent and $L_3$ in that order. We train until the training loss is below $1e-50$. We repeat the experiment over $6$ seeds.
Here $\lambda = 0.1$ for the hyperbolic entropy and $\lambda =1$ for the smoothened homogeneous potential.
The max margin values are reported in Table \ref{table:max_margin_student_teacher_time_rescaled}. Every mirror map maximizes their margin value.
In the main text this lead to learning different representations which all exhibit feature learning (Figure \ref{fig: feature learning}).

\begin{table}[ht!]
\caption{Reported max margin values for the student teacher setup with time rescaling over $6$ seeds.}
\label{table:max_margin_student_teacher_time_rescaled}
\centering
\resizebox{\textwidth}{!}{%
\begin{tabular}{l| c c c}
\hline
 & $L_1$ & $L_2$ & $L_3$\\
\hline
Hyp. 
    & $\bm{3.31\times10^{-5} \pm 2.00\times10^{-6}}$ 
    & $2.76\times10^{-4} \pm 8.80\times10^{-8}$ 
    & $4.55\times10^{-4} \pm 3.50\times10^{-6}$ \\
GD 
    & $1.83\times10^{-6} \pm 1.02\times10^{-7}$ 
    & $\bm{2.94\times10^{-4} \pm 4.42\times10^{-8}}$ 
    & $1.34\times10^{-3} \pm 4.59\times10^{-5}$ \\
$p=3$ 
    & $1.19\times10^{-6} \pm 1.27\times10^{-8}$ 
    & $2.91\times10^{-4} \pm 7.77\times10^{-7}$ 
    & $\bm{1.76\times10^{-3} \pm 5.18\times10^{-6}}$ \\
\hline
\end{tabular}%
}
\end{table}

\paragraph{The influence of $\lambda$.}
We now do not rescale time but consider the same fixed learning rate. We now want to investigate the role of $\lambda$ in both the hyperbolic and smoothened homogeneous potentials. For the hyperbolic entropy we consider $\lambda \in \{1, 0.1, 0.01\}$ and for the smoothened homogeneous we consider $\lambda \in \{ 10,1, 0.1\}$.
We train for $T =10000$ epochs and repeat for $6$ seeds.
We both report the final margin values and the representation reached. 

We illustrate this in Figures \ref{fig: circles hyperbolic plot} and \ref{fig: circles homogenous plot} for seed $42$.
The main observation is that feature learning still occurs but the representation found has more smeared out neurons as in the case for gradient descent. 
In Tables \ref{table:margin_l_values} and \ref{table:margin_l_values_L2_L3} we report the max margin values reached. Indeed the corresponding margin to each mirror map becomes smaller when using a larger $\lambda$. 

\paragraph{Reaching the margin.} 
Next we empirically verify that for different values of $\lambda > 0$, the time required to reach the corresponding max-margin solution can vary substantially. 
To this end, we train for a long but fixed duration without time rescaling. 
The results in Figure~\ref{fig: circles homogenous plot} show that, for large $\lambda$, the returned representation remains similar to that of gradient descent, with neurons more widely spread. 
This behavior persists even when the training time is increased tenfold, as shown in Figure~\ref{fig: ten times longer}. 
These findings indicate that, to obtain representations that differ from those of gradient descent, one must either train for extremely long times or appropriately tune the hyperparameter $\lambda$.

\begin{table}[ht!]
\caption{Reported margin values for different $\lambda$ values for the hyperbolic entropy.}
\label{table:margin_l_values}
\centering
\begin{tabular}{l| c c}
\hline
$\lambda$ & $L_1$ & $L_2$ \\
\hline
$1$ 
& $9.37\times10^{-6} \pm 1.14\times10^{-6}$ 
& $\bm{2.48\times10^{-4} \pm 1.54\times10^{-5}}$ \\

$0.1$ 
& $1.21\times10^{-5} \pm 7.51\times10^{-7}$ 
& $2.29\times10^{-4} \pm 2.09\times10^{-5}$ \\

$0.01$ 
& $\bm{1.55\times10^{-5} \pm 1.23\times10^{-6}}$ 
& $2.20\times10^{-4} \pm 2.58\times10^{-5}$ \\
\hline
\end{tabular}
\end{table}

\begin{table}[ht!]
\caption{Reported margin values for different $\lambda$ values for the smoothened homogeneous potential $p =3$.}
\label{table:margin_l_values_L2_L3}
\centering
\begin{tabular}{l| c c}
\hline
$\lambda$ & $L_2$ & $L_3$ \\
\hline
$10$
& $2.24\times10^{-4} \pm 3.54\times10^{-6}$ 
& $1.21\times10^{-3} \pm 1.83\times10^{-5}$ \\

$1$ 
& $2.32\times10^{-4} \pm 2.53\times10^{-6}$ 
& $1.43\times10^{-3} \pm 1.35\times10^{-5}$ \\

$0.1$ 
& $\bm{2.34\times10^{-4} \pm 2.43\times10^{-6}}$ 
& $\bm{1.43\times10^{-3} \pm 1.46\times10^{-5}}$ \\
\hline
\end{tabular}
\end{table}

\begin{figure}
    \centering
    \includegraphics[width=0.95\linewidth]{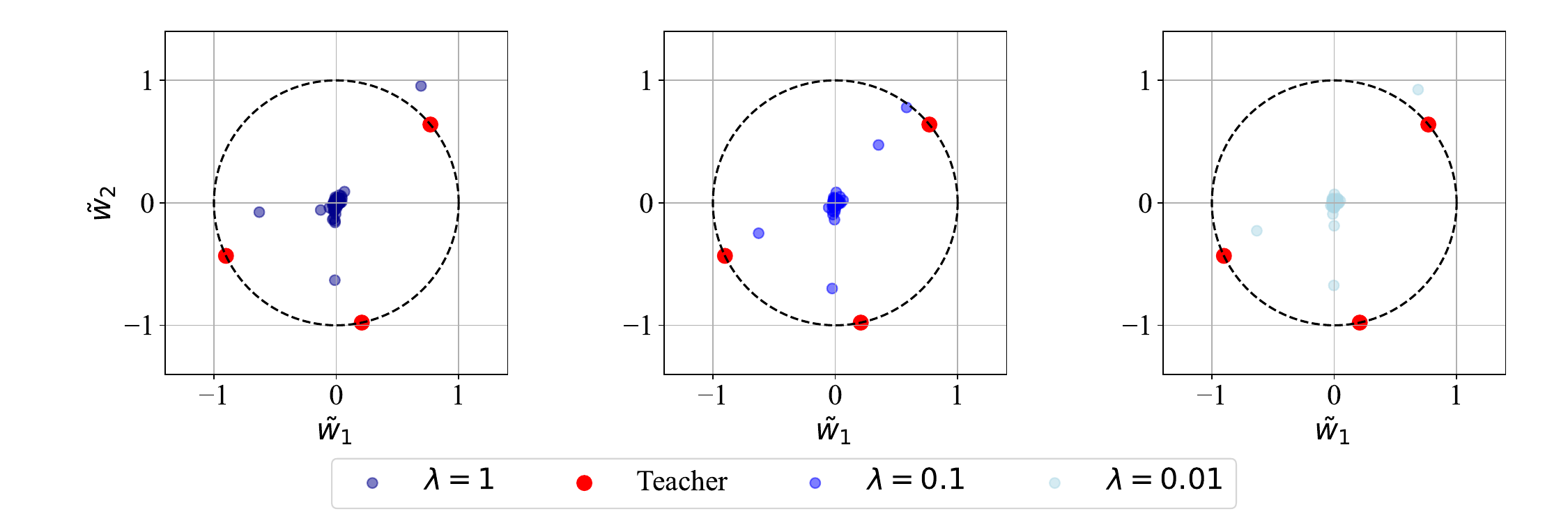}
    \caption{Learned representation by mirror descent with hyperbolic entropy for various $\lambda$. Larger $\lambda$ leads to a different representation that looks more like gradient descent solution with more neurons spread out.}
    \label{fig: circles hyperbolic plot}
\end{figure}

\begin{figure}
    \centering
    \includegraphics[width=0.95\linewidth]{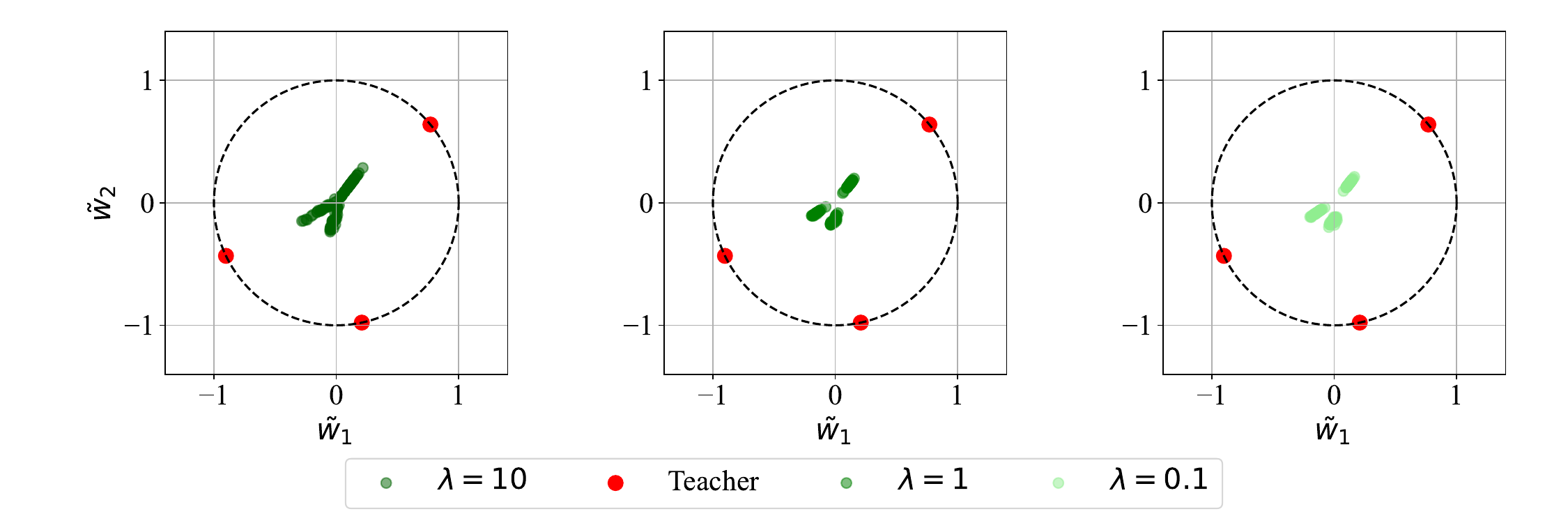}
    \caption{Learned representation by mirror descent with smoothened homogeneous ($p =3$) for various $\lambda$. Larger $\lambda$ leads to a different representation that looks more like gradient descent solution with more neurons spread out.}
    \label{fig: circles homogenous plot}
\end{figure}

\paragraph{Long training.}
Here we investigate the exponential separation for $\lambda =10$ for $T \in \{1000, 10000, 100000 \}$.
We observe that margin grows but the representation does not change indicating it is really hard to reach the time rescaled solution or small $\lambda$ solution when starting with large $\lambda$. 

\begin{table}[ht!]
\caption{Reported $L_3$ margin values for different epoch times $T$ for the smoothened homogeneous mirror.}
\label{table:margin_time_scales_L3}
\centering
\begin{tabular}{l| c}
\hline
$T$ & $L_3$ \\
\hline
$100000$  & $\bm{1.39\times10^{-3} \pm 1.84\times10^{-5}}$ \\
$10000$   & $1.21\times10^{-3} \pm 1.83\times10^{-5}$ \\
$1000$    & $8.03\times10^{-4} \pm 2.35\times10^{-5}$ \\
\hline
\end{tabular}
\end{table}

\begin{figure}
    \centering
    \includegraphics[width=0.95\linewidth]{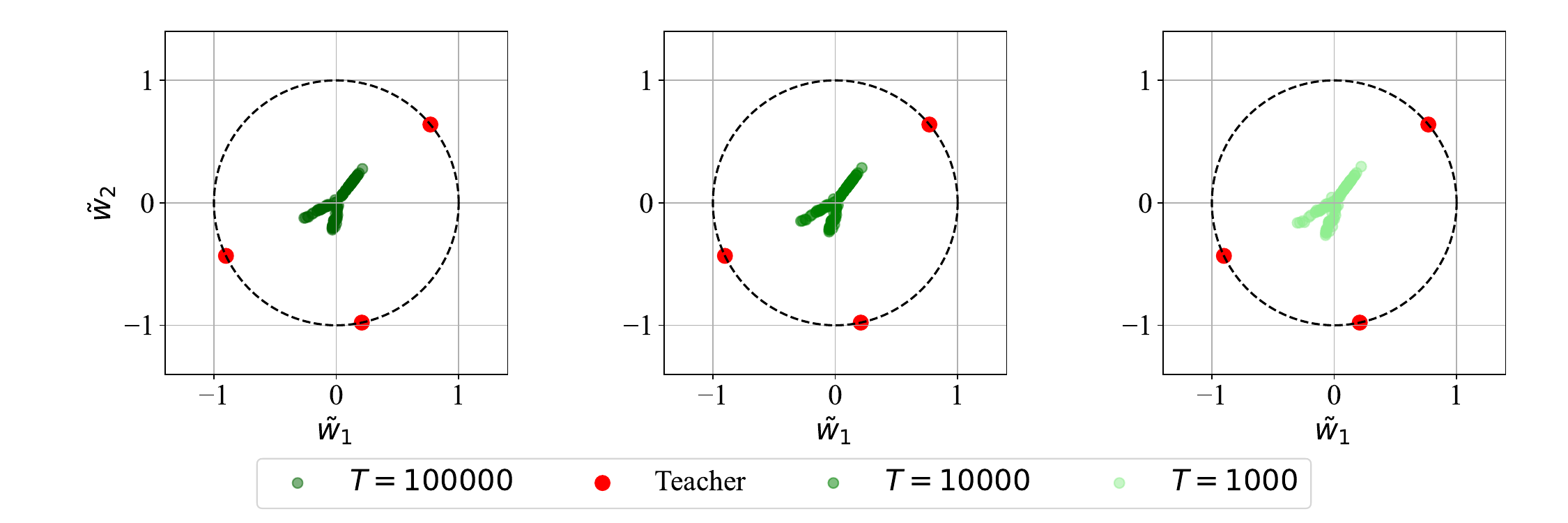}
    \caption{Training ten times longer does not change the representation much when $\lambda = 10$ is large.}
    \label{fig: ten times longer}
\end{figure}

\subsection{Three-layer neural network student-teacher setup}
We conduct an additional experiment on a $3$ layer student network with the same $2$ layer teacher as in the previous section.
The $3$ layer neural network has hidden dimension $100$.
We again train with the hyperbolic mirror map, gradient descent, and smoothened homogeneous $(p =3)$. The students weights are initialized in the mean field regime.

\paragraph{Time rescaling.} 
To reach closer to the margin we again employ time rescaling.
In order to ensure stable time rescaling we start rescaling when the loss is below $0.1$ and then we set the learning rate to $\eta \ 0.1/\mathcal{L}$ instead.
Moreover, we set the learning rates to $\eta \in \{ 0.001, 0.01, 0.1\}$ for the hyperbolic, GD, smoothened homogeneous mirror maps in that order.
For the hyperbolic mirror map we use $\lambda =0.1$ and for the smoothened homogeneous we use $\lambda =1$.
We report the weight distributions of all $3$ layers in Figures \ref{fig: multi first layer}, \ref{fig: multi second layer}, and \ref{fig: multi third layer}.
We observe in all layers that the weight magnitude distribution has shifted accordingly to the corresponding KKT problem.
To illustrate the effect in function space we also plot the input-output map highlighting the decision boundary in Figure \ref{fig: input output map 3 layer}.
Moreover, we report the final margin value found in Table \ref{table : margins 3 layer time rescale}.
We observe that the margin values in case of $L_1$ and $L_3$ have different orders of magnitude, which is in line with the change in representation.

\begin{table}[ht!]
\caption{Max margins reached by the $3$ mirror descent algorithms after time rescaling for the $3$ layer student and $2$ layer teacher setup.}\label{table : margins 3 layer time rescale}
\centering
\begin{tabular}{l| c c c}
\hline
 & $L_1$ & $L_2$ & $L_3$ \\
\hline
Hyp. & 
$\bm{(4.01 \pm 0.34)\times10^{-8}}$ & 
$(6.21 \pm 0.72)\times10^{-5}$ & 
$(1.03 \pm 0.26)\times10^{-4}$ \\

GD & 
$(1.13 \pm 0.07)\times10^{-9}$ & 
$\bm{(1.10 \pm 0.00)\times10^{-4}}$ & 
$(1.73 \pm 0.07)\times10^{-3}$ \\

Hom. & 
$(2.39 \pm 0.08)\times10^{-10}$ & 
$(9.36 \pm 0.04)\times10^{-5}$ & 
$\bm{(3.42 \pm 0.06)\times10^{-3}}$ \\
\hline
\end{tabular}
\end{table}

\begin{figure}
    \centering
    \includegraphics[width=0.95\linewidth]{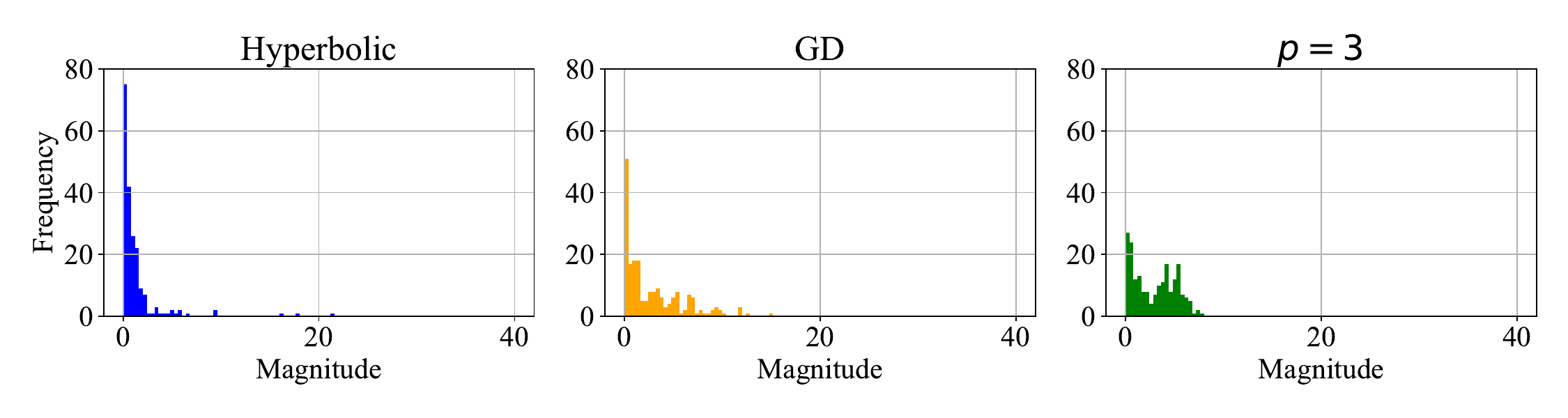}
    \caption{Weight magnitude distribution of the first layer after time rescaling for all mirror maps.}
    \label{fig: multi first layer}
\end{figure}

\begin{figure}
    \centering
    \includegraphics[width=0.95\linewidth]{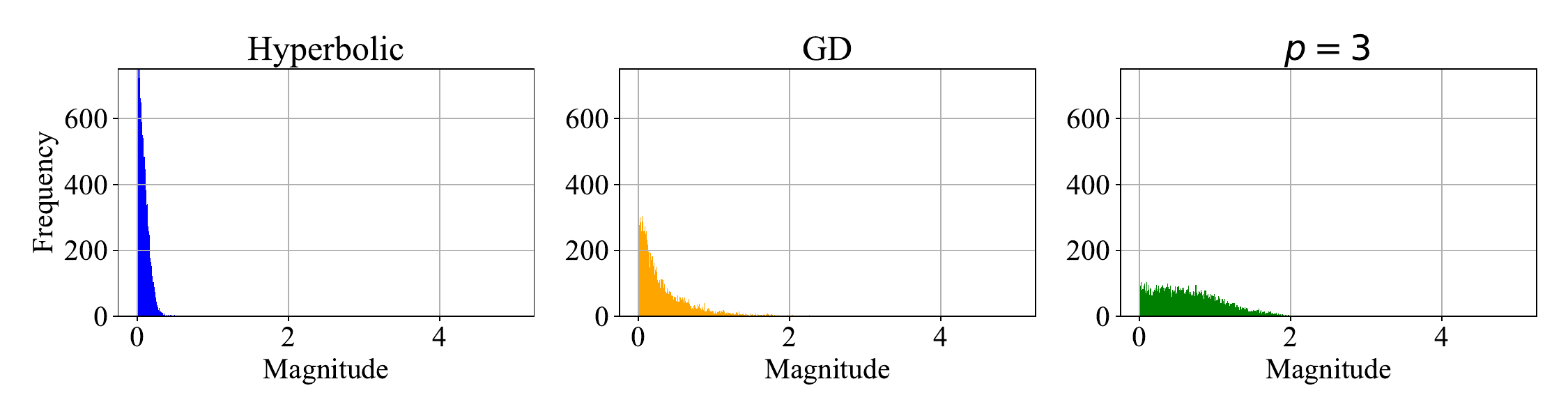}
    \caption{Weight magnitude distribution of the second layer after time rescaling for all mirror maps.}
    \label{fig: multi second layer}
\end{figure}

\begin{figure}
    \centering
    \includegraphics[width=0.95\linewidth]{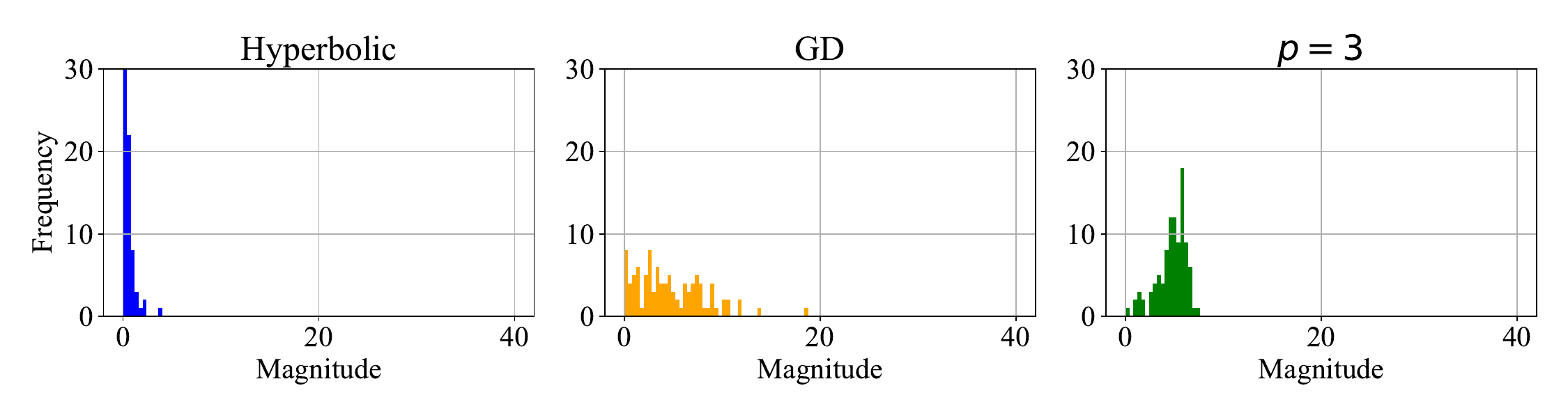}
    \caption{Weight magnitude distribution of the third layer after time rescaling for all mirror maps.}
    \label{fig: multi third layer}
\end{figure}

\begin{figure}
    \centering
    \includegraphics[width=0.95\linewidth]{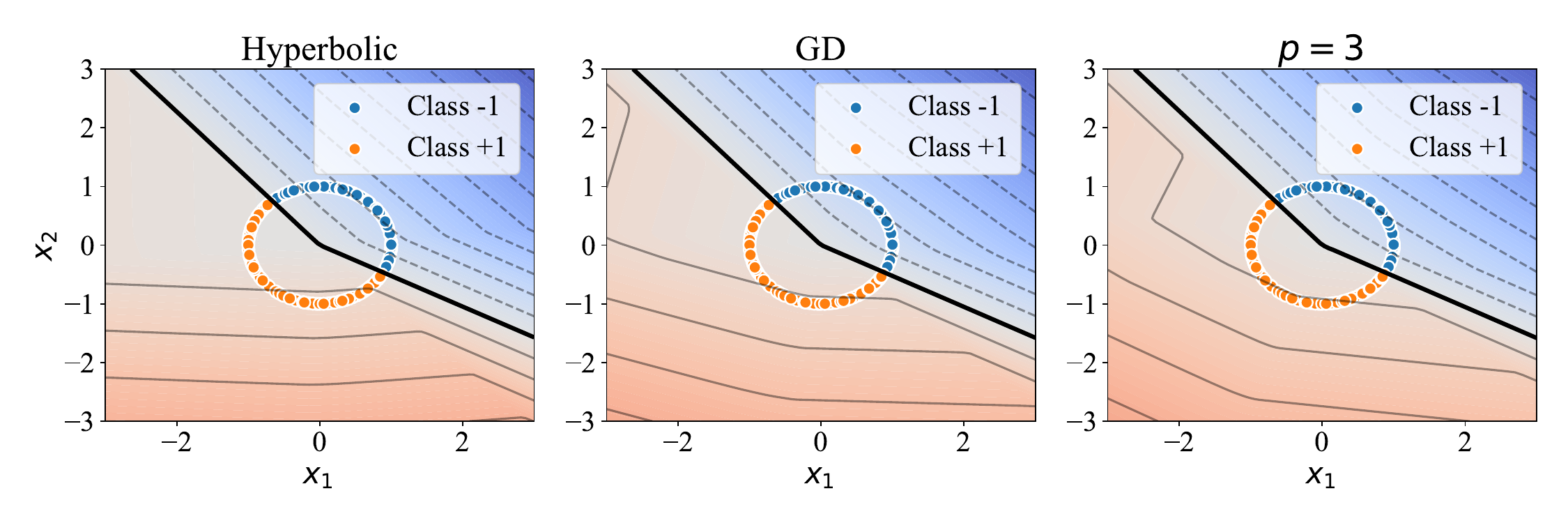}
    \caption{Illustration of the decision boundary and function activation value reached for the $3$ layer neural network experiment.}
    \label{fig: input output map 3 layer}
\end{figure}

\paragraph{Additional experiments.} 
We conduct the same $2$ experiments as in case of the $2$ layer student: influence of $\lambda$ in both the case of hyperbolic entropy and smoothened homogeneous. In the first experiment, we use $\eta = 0.01$ and $T =10000$ for the hyperbolic entropy. We report the margins reached in Tables~\ref{table : 3 layer hyperbolic lambda} and \ref{table:lambda_smooth_homogeneous_p3}.  Moreover, we report the changed input representation in Figures~\ref{fig: circles hyperbolic plot 3 layer} and~\ref{fig: circles L3 plot 3 layer}.

\begin{table}[ht!]
\caption{Max margins for different $\lambda$ values with the hyperbolic entropy mirror in the $3$ layer student setting.}\label{table : 3 layer hyperbolic lambda}
\centering
\begin{tabular}{l| c c}
\hline
 & $L_1$ & $L_2$ \\
\hline
$\lambda=1$ & 
$(1.67 \pm 0.24)\times10^{-9}$ & 
$\bm{(6.43 \pm 0.51)\times10^{-5}}$ \\

$\lambda=0.1$ & 
$(3.62 \pm 0.30)\times10^{-9}$ & 
$(4.76 \pm 0.60)\times10^{-5}$ \\

$\lambda=0.01$ & 
$\bm{(4.87 \pm 0.23)\times10^{-9}}$ & 
$(4.44 \pm 0.76)\times10^{-5}$ \\
\hline
\end{tabular}
\end{table}

\begin{figure}
    \centering
    \includegraphics[width=0.95\linewidth]{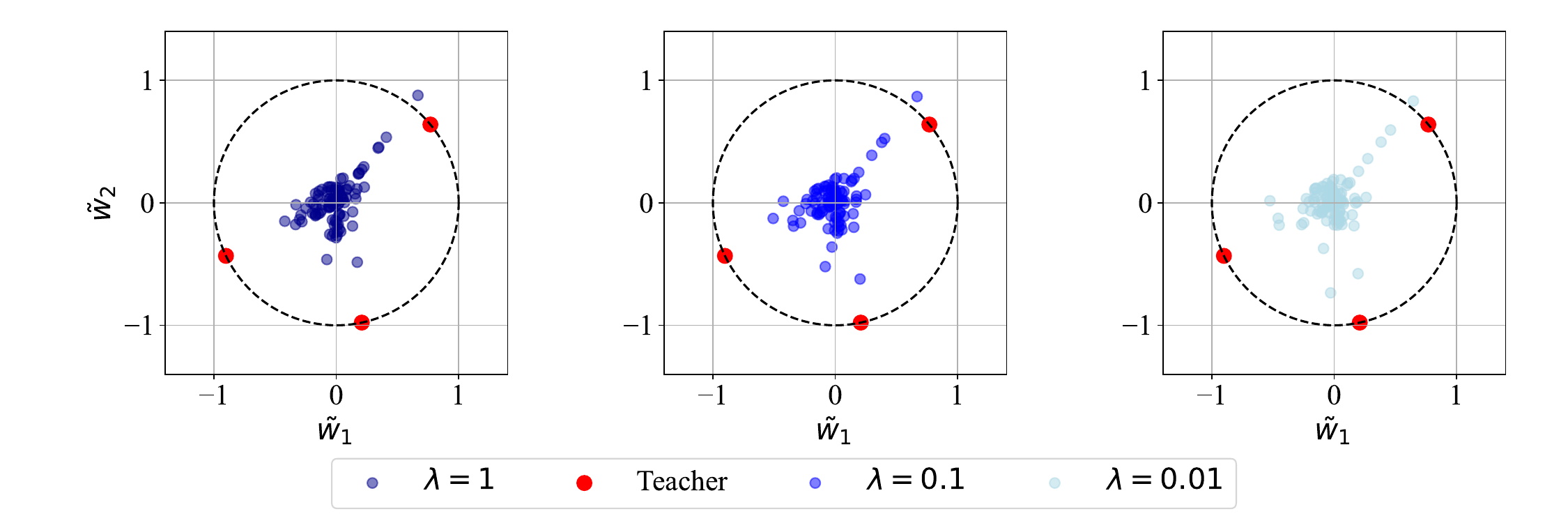}
    \caption{Learned representation in the input layer $w$ by mirror descent with hyperbolic entropy for various $\lambda$ in the $3$ layer student setting. Larger $\lambda$ leads to a different representation that looks more like gradient descent solution with more neurons spread out.}
    \label{fig: circles hyperbolic plot 3 layer}
\end{figure}

\begin{table}[ht!]
\caption{Max margins for different $\lambda$ values with the smoothened homogeneous mirror ($p=3$).}
\label{table:lambda_smooth_homogeneous_p3}
\centering
\begin{tabular}{l| c c}
\hline
 & $L_2$ & $L_3$ \\
\hline
$\lambda=10$ & 
$\bm{(8.76 \pm 0.09)\times10^{-5}}$ & 
$(1.75 \pm 0.04)\times10^{-3}$ \\

$\lambda=1$ & 
$(8.69 \pm 0.06)\times10^{-5}$ & 
$(2.62 \pm 0.03)\times10^{-3}$ \\

$\lambda=0.1$ & 
$(7.47 \pm 0.04)\times10^{-5}$ & 
$\bm{(3.44 \pm 0.03)\times10^{-3}}$ \\
\hline
\end{tabular}
\end{table}

\begin{figure}
    \centering
    \includegraphics[width=0.95\linewidth]{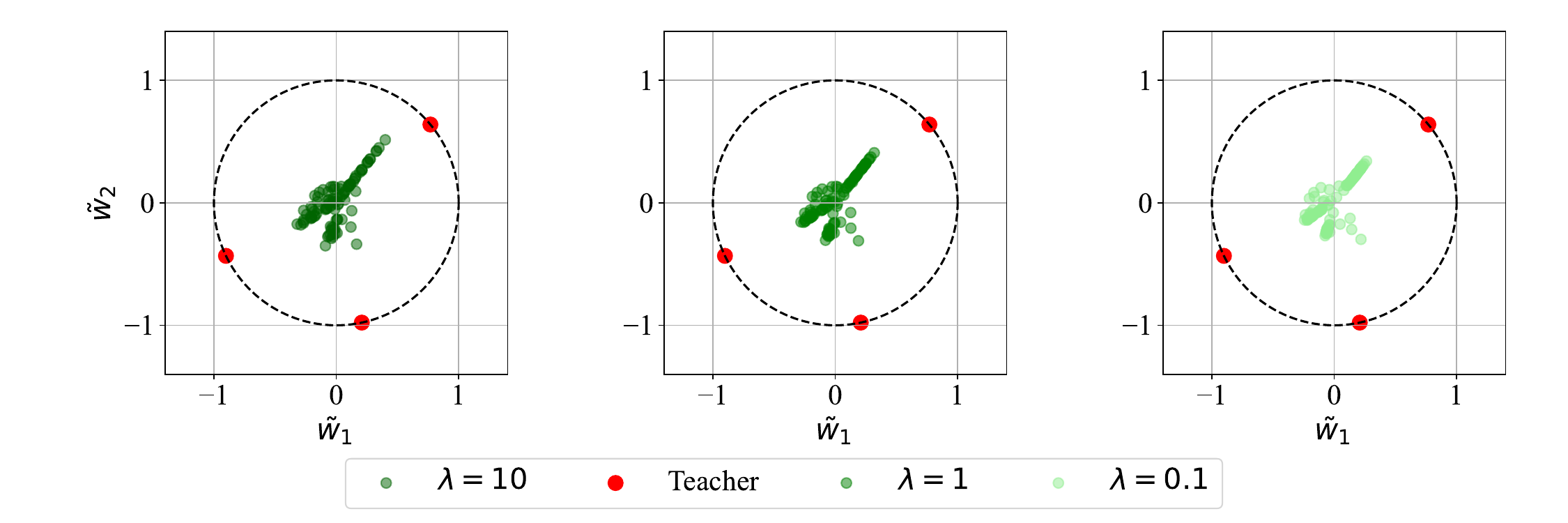}
    \caption{Learned representation in the input layer $w$ by mirror descent with smoothened homogeneous ($p =3$) for various $\lambda$ in the $3$ layer student setting. Larger $\lambda$ leads to a different representation that looks more like gradient descent solution with more neurons spread out.}
    \label{fig: circles L3 plot 3 layer}
\end{figure}

\subsection{Classification on vision tasks}
We run mirror descent with stochastic gradient estimates on single A100 GPU. We do not use weight decay or momentum. The inverse metric tensor is created in the following way to ensure comparability and layerwise stability:
\begin{equation*}
   \nabla^2R_{\text{hyp}}^{-1}(\theta) = \sqrt{\theta^2 * \text{width} + \lambda}  /(\sqrt{1+\lambda}) \text{ and } \nabla^2R_{\text{hom}}^{-1}(\theta) = \frac{\lambda}{|\theta|^{p-2} *\text{width}^{(2-p)/2} + \lambda}.
\end{equation*}
This ensures the average update for each element in the same layer gets a similar update. Note we do not do learning rescaling with the loss or multiplicative coefficient as for the toy example.
For tracking the margin we use the layerwise product of the norms and the logits similar as in \citep{Lyu2020Gradient}.
We use He initialization for the weights and zero initialization for the biases of the first layer. In other words we are using the standard parameterization (SP).
Note that this setting is not in the feature learning regime, therefore, we can not expect all layers to move equally as much. For the experiment we sweep the hyper parameters $\lambda \in \{1e-05, 0.01, 0.1, 1\}$ and learning rate $\eta \in \{0.1, 0.2\}$. We use batch-size $100$, we do not use weight decay or momentum. 
We train a VGG16 \citep{Simonyan2014VeryDC} without biases (except the first layer) for $1000$ epochs. The first layer having biases does not violate homogeneity as it can be seen as augmented data input. The numerical values are over $3$ seeds, the illustrations are for one seed: $42$. Furthermore, we note that training either GD, or Smoothened homogeneous becomes unstable and diverges when trained with learning rate $0.2$.

We report the best accuracy values for each method in Table \ref{tab:results}. We observe that the hyperbolic entropy leads to an improvement over the other methods. Note that the standard baseline accuracy is higher due to the use of momentum, batch norm and weight decay.
Besides the first layer magnitude plot in the main text we plot a histogram of the last layer in Figure \ref{fig: hist last layer}. We observe that the weight distributions are very similar. This is due to the standard parameterization (SP) \citep{pmlr-v139-yang21c}. This highlights the importance of knowing in which regime we are training, irrespective of validation accuracy performance, which representation we end up with. Note that in \citep{Sun2023AUA, jacobs2026hyperbolic} the mirror maps lead to very different representations and with that different magnitude distributions per layer in modern architectures using residual connections and batch/layer norm. Moreover, it also leads to improved performance in these more general settings notably for using a mirror map associated with the hyperbolic entropy \citep{jacobs2026hyperbolic} and the homogeneous mirror map with ($p =3$) \citep{Sun2023AUA}.

\begin{figure}
    \centering
    \includegraphics[width=0.5\linewidth]{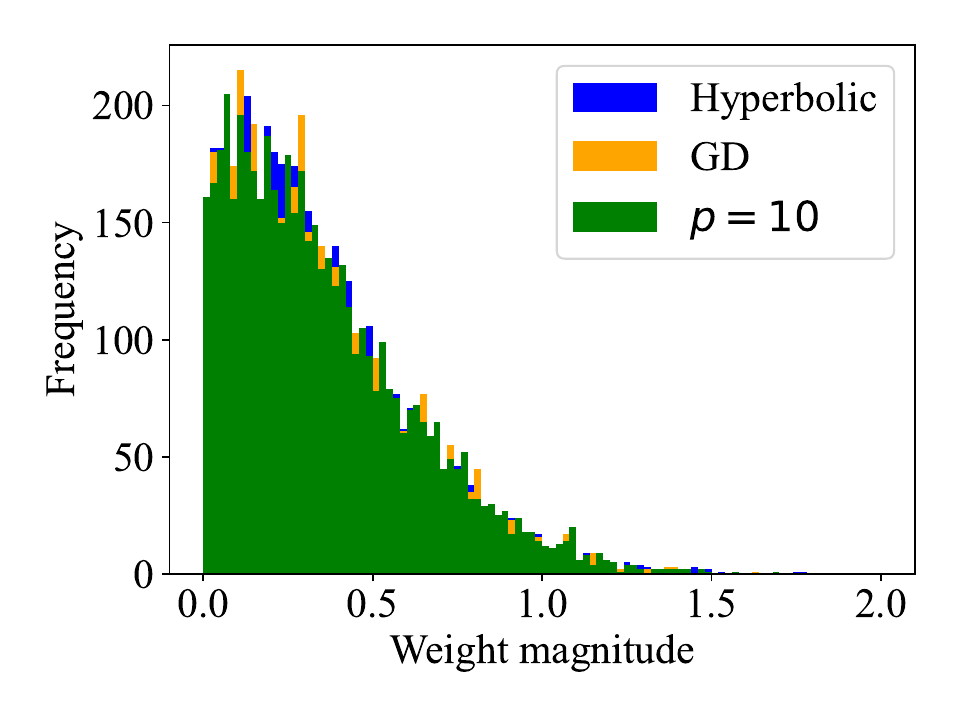}
    \caption{The weight magnitude distribution of a VGG16's last layer. We observe that all algorithms get a similar distribution due to the standard parameterization.}
    \label{fig: hist last layer}
\end{figure}

\begin{table}
\caption{Validation accuracy for a VGG16 on CIFAR 10 without using batch normalization, weight decay or momentum.}
\label{tab:results}
\centering
\begin{tabular}{lc}
\hline
Mirror & Val. Acc. \\
\hline
GD         & $81.26 \pm 0.15$ \\
$p = 10$   & $81.26 \pm 0.12$ \\
Hyperbolic & $81.96 \pm 0.41$ \\
\hline
\end{tabular}
\end{table}

\subsection{First step analysis}  
We furthermore conduct a first step analysis.
In order for the parameter norm to grow and reach the max-margin direction we need the parameters to move from initialization. 
For gradient descent this has been studied under the NTK and mean-field regimes. 
Under the mean-field regime we have that $\nabla_i \mathcal{L} = O(1)$ for all $i \in [L]$. 
This means that the weights of all layers are moving. 
Now for mirror flow we have an additional contribution trough the metric tensor. 
To account for this, we can study the random variable transform $\nabla^2_i R^{-1}(\theta_0)$ with the standard initialization $\theta_0 \sim N(0, I_n \sigma^2)$. If we treat this pointwise and together with the learning rate we would need to have
\begin{equation*}
    \mathbb{E}[\eta\nabla^2_i R^{-1}(\theta_0)] = O(1) 
\end{equation*}
to keep the maximal update like in gradient descent. We will refer to this as mirror-$\mu$P, which is needed to preserve the update magnitude.
We now know what mirror descent has to satisfy in order to give a maximal parameter update at initialization and reach the margin. 
\begin{table}[ht!]
\caption{Training recipes for mirror descent with learning rate $\eta$ and hyperparameter $\lambda$, denoted as $(\eta, \lambda)$.}
\label{tab:mirror_properties}
\centering
\begin{tabular}{lccc}
\hline
Mirror potential & Mirror-$\mu$P & Margin reachable \\
\hline
GD & $(1,-)$ &  $(1,-)$ \\
 Hyp & $(1,1), (\sigma^{-1},0),  (\sigma^{-1},\sigma^{2})$ &  $(\sigma^{-1},0),(\sigma^{-1},\sigma^{4})$  \\
$L_{p,\lambda}^p$, $p > 2$ & $(1,1), (\sigma^{p-2}, \sigma^{p-2})$    & $ (\sigma^{2}, \sigma^{2})$ \\
\hline
\end{tabular} 
\end{table}

This allows us to summarize the hyperparameter for designing training recipes in Table~\ref{tab:mirror_properties}.
Observe that how Mirror-$\mu$P is decoupled from reaching a max-margin solution, implying that we can have large parameter changes at the start of training without eventually reaching the corresponding margin solution. 
\begin{example}
    Consider the hyperbolic entropy, which has $2$ mirror-$\mu$P solutions. 
    The expectation is given by $\mathbb{E}\left[\eta\sqrt{\theta_0^2 + \lambda}\right] = O(\eta(\sigma + \sqrt{\lambda})) $, so $\lambda = 1, \eta = 1$ is a solution for decaying $\sigma$. Moreover, $\eta = \sigma^{-1}, \lambda = \sigma^2$ is a solution for all $n \in \mathbb{N}^+$. 
\end{example}

\begin{remark}
    The random variables corresponding to the metric tensor $\nabla^2 R^{-1}(\theta_0)$ and gradient $\nabla f(\theta_0)$ are coupled. However we can justify using free probability theory to treat them as decoupled.
\end{remark}

\newpage

\end{document}